\documentclass{article} 
\usepackage{iclr2024_conference,times}
\iclrfinalcopy

\usepackage{amsmath,amsfonts,bm}

\def\eqref#1{equation~\ref{#1}}

\def\1{\bm{1}}

\DeclareMathAlphabet{\mathsfit}{\encodingdefault}{\sfdefault}{m}{sl}
\SetMathAlphabet{\mathsfit}{bold}{\encodingdefault}{\sfdefault}{bx}{n}

\newcommand{\E}{\mathbb{E}}

\newcommand{\R}{\mathbb{R}}

\usepackage{hyperref}
\usepackage{url}

\usepackage{xcolor}
\usepackage{algorithm}
\usepackage{algpseudocode}
\usepackage{amsmath}
\usepackage{newfloat}
\usepackage{listings}
\usepackage{wrapfig}
\usepackage{tabularx}
\newcolumntype{Y}{>{\centering\arraybackslash}X}

\algrenewcommand\algorithmicrequire{\textbf{Input:}}
\algrenewcommand\algorithmicensure{\textbf{Output:}}
\usepackage[utf8]{inputenc} 
\usepackage[T1]{fontenc}    

\usepackage{booktabs}       
\usepackage{amsfonts}       
\usepackage{amsmath}
\usepackage{nicefrac}       
\usepackage{microtype}      
\usepackage{natbib}
\usepackage{graphicx}
\usepackage{todonotes}
\usepackage{layouts}

\usepackage{hyperref}       
\usepackage{url}            
\usepackage{booktabs}       
\usepackage{amsfonts}       
\usepackage{nicefrac}       
\usepackage{microtype}      
\usepackage{graphicx}
\usepackage{wrapfig}
\usepackage{amsmath}
\usepackage{multirow}
\usepackage{rotating}
\usepackage{todonotes}
\usepackage{subcaption}
\usepackage{dsfont}
\usepackage{amsthm}

\usepackage{booktabs, multirow} 
\usepackage{soul}
\usepackage{xcolor,colortbl} 
\usepackage{changepage,threeparttable} 

\usepackage{color}
\definecolor{deepblue}{rgb}{0,0,0.5}
\definecolor{deepred}{rgb}{0.6,0,0}
\definecolor{deepgreen}{rgb}{0,0.5,0}
\definecolor{deeporange}{rgb}{0.6,0.25,0}
\definecolor{verylightgray}{rgb}{0.97,0.97,0.97}
\definecolor{mydarkblue}{rgb}{0,0.08,0.45}
\definecolor{bluegray}{rgb}{0.3,0.3,0.6}

\usepackage{booktabs, multirow} 
\usepackage{soul}
\usepackage{changepage,threeparttable} 

\usepackage{amsmath,amssymb,amsthm,mathtools}
\usepackage{bbm}
\usepackage{bm}
\usepackage{enumerate}
\usepackage{enumitem}
\usepackage{hyperref}
\usepackage{thmtools}
\usepackage{xcolor}
\hypersetup{
  colorlinks=true,
  linkcolor=blue!60!black,
  citecolor=blue!60!black,
  urlcolor=blue!60!black
}

\declaretheoremstyle[
  headfont=\bfseries,
  bodyfont=\normalfont,
  spaceabove=8pt, spacebelow=8pt,
  headpunct={.},
  postheadspace=0.5em,
  qed=\qedsymbol
]{mystyle}

\theoremstyle{mystyle}
\newtheorem{theorem}{Theorem}[section]
\newtheorem{lemma}[theorem]{Lemma}

\theoremstyle{definition}
\newtheorem{definition}[theorem]{Definition}

\title{Trust the Typical}

\author{Debargha Ganguly\textsuperscript{1}\thanks{This work was funded by NSF Awards 2117439 and 2112606.} \quad
Sreehari Sankar\textsuperscript{1}\quad
Biyao Zhang\textsuperscript{1}\quad
Vikash Singh\textsuperscript{1}\quad \\
\textbf{Kanan Gupta}\textsuperscript{2}\quad 
\textbf{Harshini Kavuru}\textsuperscript{3}\quad
\textbf{Alan Luo}\textsuperscript{1}\quad
\textbf{Weicong Chen}\textsuperscript{1}\quad \\
\textbf{Warren Morningstar}\textsuperscript{4}\quad
\textbf{Raghu Machiraju}\textsuperscript{3}\quad
\textbf{Vipin Chaudhary}\textsuperscript{1}\\[0.5ex]
\textsuperscript{1}Case Western Reserve University, 
\textsuperscript{2}University of Pittsburgh\\
\textsuperscript{3}The Ohio State University, 
\textsuperscript{4}Google Research
}

\begin{document}
\maketitle

\begin{abstract}
Current approaches to LLM safety fundamentally rely on a brittle cat-and-mouse game of identifying and blocking known threats via guardrails. We argue for a fresh approach: robust safety comes not from enumerating what is harmful, but from \emph{deeply understanding what is safe}. We introduce \textbf{T}rust \textbf{T}he \textbf{T}ypical \textbf{(T3)}, a framework that operationalizes this principle by treating safety as an out-of-distribution (OOD) detection problem. T3 learns the distribution of acceptable prompts in a semantic space and flags any significant deviation as a potential threat. Unlike prior methods, it requires no training on harmful examples, yet achieves state-of-the-art performance across 18 benchmarks spanning toxicity, hate speech, jailbreaking, multilingual harms, and over-refusal, reducing false positive rates by up to 40x relative to specialized safety models. A single model trained only on safe English text transfers effectively to diverse domains and over 14 languages without retraining. Finally, we demonstrate production readiness by integrating a GPU-optimized version into vLLM, enabling continuous guardrailing during token generation with less than 6\% overhead even under dense evaluation intervals on large-scale workloads.


\end{abstract}

\section{Introduction}

Contemporary safety paradigms for large language models are fundamentally reactive, relying on specialized classifiers trained to detect known categories of harmful content and adversarial prompts via explicit pattern recognition \citep{inan2023llama, deng2025duoguard, gehman2020realtoxicityprompts, wallace2019universal, carlini2021extracting, zou2023universal, wei2023jailbroken}. This approach creates an inherent asymmetry, where attackers need only discover novel prompt structures that fall outside the training distribution of safety classifiers, while defenders must continuously expand their catalogs of harmful patterns -- a dynamic that favors the adversary \citep{liu2023jailbreaking}. The adversarial landscape evolves continuously with new attack vectors such as multi-turn jailbreaking, role-playing exploits, and encoding-based obfuscation emerging faster than defensive systems can adapt. Consequently, even sophisticated alignment techniques like Reinforcement Learning from Human Feedback (RLHF) and Constitutional AI, while improving general alignment, cannot guarantee robustness against adversarial inputs that fall outside their training distributions \citep{christiano2017deep, bai2022constitutional, ouyang2022training, casper2023open}. These adversarial prompts succeed precisely because they share a fundamental characteristic: \emph{they must deviate from the statistical regularities of ``typical” natural language} to exploit learned vulnerabilities, a property current defenses fail to leverage systematically.

This cat-and-mouse dynamic reveals a deeper issue: current safety mechanisms can only defend against explicitly known attack patterns, and cannot anticipate and defend against novel forms of attack. We explore a paradigm shift toward proactive safety through the lens of statistical typicality. Drawing from information theory, we observe that legitimate user interactions with language models, despite their surface diversity, occupy a relatively concentrated region in the model's semantic representation space, what \citet{cover1999elements} term the ``typical set.” Adversarial prompts, by design, must deviate from natural language patterns to exploit model vulnerabilities, often manifesting as atypical points in this representation space \citep{nalisnick2019detecting, morningstar2021density}. This reframing suggests a more efficient and robust paradigm for LLM safety. Rather than training specialized models to recognize specific harmful patterns, we can instead focus on characterizing the distribution of acceptable, in-distribution examples. Such an approach offers two key advantages. First, it obviates the need for an exhaustive and constantly updated collection of harmful examples, requiring only a specification of what constitutes safe usage. Second, by making no assumptions about the form of adversarial inputs, it provides a more principled defense against novel and unforeseen attack patterns. However, operationalizing this paradigm is challenging; one cannot directly query the true probability of a prompt under the unknown distribution of ``safe and intended use.”  While prior works have explored statistical methods for content filtering, they often remain vulnerable to novel attacks or incur high computational overhead \citep{gehman2020realtoxicityprompts, xu2021bot}.

In this paper, we introduce T3, a novel framework that fundamentally reframes LLM safety from reactive pattern-matching to proactive statistical modeling. T3 operationalizes the principle of typicality by learning the geometric structure of safe language use, then detecting deviations that characterize harmful content. Our specific contributions are:

\begin{enumerate}
\item We \emph{extend} the Forte framework \citep{ganguly2025forte} from vision to text, providing rigorous theoretical analysis of how the per-point PRDC metrics detect distributional shifts. We establish the expected values of the metrics in a much more general setting than \citet{ganguly2025forte}: without any additional assumptions on the distributions in the case when samples are from the same distribution, and with mild assumptions on the density and support of the distributions in the case when they are different.

\item Across 18 benchmarks spanning toxicity, hate speech, jailbreaking, and multilingual harms, \textit{T3 achieves state-of-the-art AUROC with a 10-40x reduction in false positive rates compared to specialized safety models }. On important benchmarks, T3-OCSVM achieves FPR@95 of 2.0\% (OffensEval) and 3.5\% (Davidson) versus 75.2\% and 61.7\% for the best baseline. This improvement directly translates to a 75\% reduction in overrefusals compared to traditional methods on OR-Bench.

\item Using a single model trained only on English general-purpose safe text, T3 achieves \textit{near-perfect transfer across specialized domains} (99.6\% AUROC on code, 99.8\% on HR) and \textit{maintains consistent performance across 14+ languages} with less than 2\% variance. This reduces the need for domain-specific training, multilingual data collection, or language-specific calibration.

\item We co-design T3 within vLLM to \textit{enable continuous safety monitoring during token generation}, achieving sub-6\% overhead even with aggressive 20-token evaluation intervals on 5,000-prompt workloads. By \textit{overlapping safety computations with inference operations on the same GPU}, T3 enables immediate terminations of harmful generations without the latency penalties associated with post-processing approaches, making real-time guardrailing practical for production deployments.
\end{enumerate}

\section{Related Works}


Detecting out-of-distribution (OOD) inputs is crucial for reliably deploying models, as they often yield confident but incorrect predictions on novel data, a vulnerability highlighted by adversarial perturbations \citep{szegedy2013intriguing} and poor calibration \citep{guo2017calibration}. \textbf{Supervised} OOD methods use labeled examples for explicit training \citep{hendrycks2019deep, dhamija2018reducing}, output calibration \citep{liang2018enhancing, hsu2020generalized}, ensembles \citep{lakshminarayanan2017simple}, or fitting distributions to latent features \citep{meinke2020towards, ganguly2024visual}. However, their reliance on \textit{known} OOD examples limits effectiveness against novel threats. In contrast, \textbf{unsupervised} methods model the training data density $p(\mathbf{x})$ \citep{bishop1994novelty}, but this approach can fail in high-dimensional spaces where OOD samples receive high likelihoods \citep{nalisnick2019detecting, choi2018waic}. Solutions like likelihood ratios \citep{ren2019likelihood}, typicality tests \citep{nalisnick2019detecting}, interpretable graph-base methods \citep{ganguly2024visual} and physics-inspired estimators \citep{morningstar2021density} attempt to mitigate this paradox but still struggle with the curse of dimensionality.

The emergence of LLMs transformed OOD detection through the geometric property of \textit{isotropy}, where embeddings spread uniformly in contrast to the narrow `cones' of earlier models \citep{liu2023good}. This structure makes simple distance metrics effective, resolving the representation degeneration that plagued previous methods \citep{repec:kap:compec:v:59:y:2022:i:2:d:10.1007_s10614-021-10102-z, ethayarajh2019contextual}. Building on this, research reveals that pre-trained models are often superior OOD detectors because they form clean domain-level clusters that task-specific fine-tuning fragments, inadvertently hiding OOD samples in the resulting gaps \citep{uppaal2023fine}. This trade-off, which we term the \textit{fine-tuning paradox}, is now being formalized by theoretical work connecting OOD robustness to PAC learning guarantees and the information-theoretic concept of a `typical set` \citep{cover1999elements}.

OOD detection in LLMs follows three main paradigms, each with significant trade-offs. \textbf{Likelihood-based methods} use ratios between base and fine-tuned models as an OOD signal \citep{zhang2024your, ren2022out}, but are computationally prohibitive and assume the base model covers all anomalies. \textbf{Representation-based methods} exploit embedding geometry via distance metrics \citep{podolskiy2021revisiting} or lightweight PEFT activations \citep{hayou2024lora+}, but face a paradox where the fine-tuning needed for tasks degrades the geometric structure required for detection. Finally, \textbf{synthetic data generation} implements Outlier Exposure \citep{hendrycks2018deep} by creating challenging outliers \citep{abbas2025out, chen2021atom, fort2021exploring}; however, this approach remains reactive, requiring an ``OOD oracle” to anticipate threats and thus failing against unknown-unknowns.

The connection between OOD detection and LLM safety is direct: adversarial prompts, including jailbreaks , prompt injection \citep{liu2023prompt}, and role-playing exploits \citep{wei2023jailbroken}, are by definition \textit{out-of-distribution}, as they must deviate from natural language. This contrasts with dominant reactive defenses, such as specialized classifiers \citep{inan2023llama} or alignment techniques like RLHF \citep{ouyang2022training} and Constitutional AI \citep{bai2022constitutional}, which cannot generalize to novel threats and consistently lag the evolving adversarial landscape. While recent proactive work has begun applying OOD principles to address safety problems like anomaly detection, perplexity filtering \citep{jain2023baseline}, hallucination detection, and uncertainty quantification \citep{kuhn2023semantic, kadavath2022language}, a unified framework is still lacking.

Our work synthesizes these insights into a unified framework that resolves their fundamental limitations. T3 operationalizes the principle that safety is fundamentally about typicality \citep{nalisnick2019detecting, cover1999elements} by learning the distribution of safe usage directly from curated examples. This approach avoids the high cost of dual-model likelihood methods \citep{zhang2024your}, preserves the clean geometric structure that fine-tuning corrupts \citep{uppaal2023fine}, and requires no ``OOD oracle” to anticipate threats as synthetic data methods do \citep{abbas2025out}. By adapting a principled OOD framework from vision \citep{ganguly2025forte} to leverage the unique isotropic geometry of LLM embeddings \citep{liu2023good}, we provide a proactive defense that is both theoretically grounded and efficient.

\begin{figure}[t]
    \centering
    \includegraphics[width=1\linewidth]{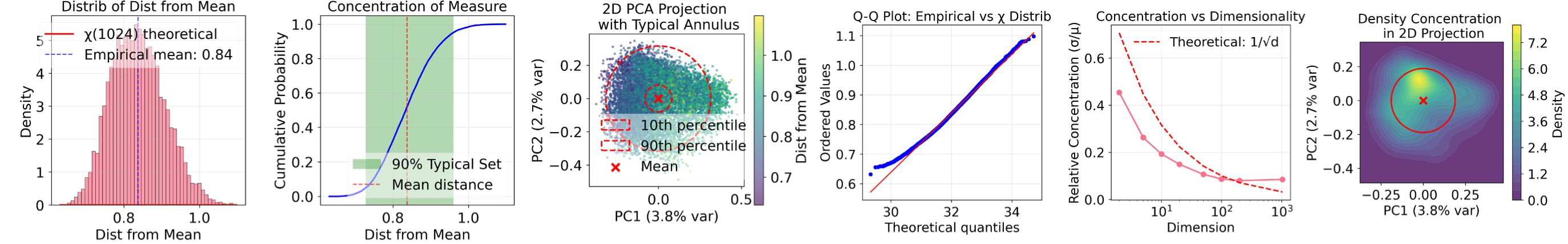}
    \caption{\textbf{Geometric concentration of safe text embeddings in high-dimensional space.} The distribution of Euclidean distances from the mean for 10,000 safe embeddings (Alpaca, d=1024) empirically validates the concentration of measure phenomenon. \textbf{(a, d)} The distances closely follow a theoretical $\chi_{1024}$ distribution, confirmed by a Q-Q plot ($R^2 > 0.99$). \textbf{(b, c, f)} This results in a concentrated ``typical set” where 90\% of data forms an annulus (``hollow sphere”) around the mean, a structure visible even in 2D PCA projections. \textbf{(e)} As predicted by theory, this concentration tightens relative to the dimension ($O(d^{-1/2})$).}
    \label{fig:gaussianannulus}
\end{figure}

\begin{figure}[t]
        \includegraphics[width=1\linewidth]{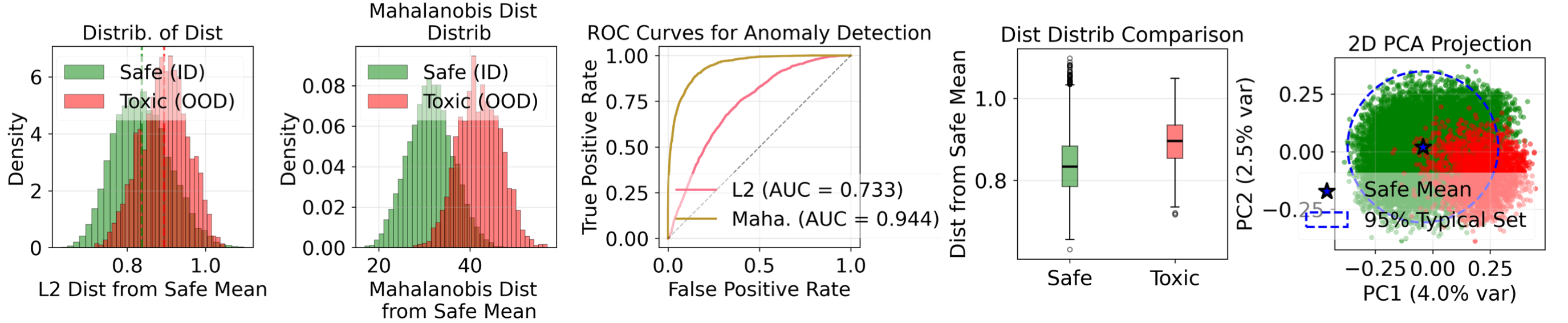}
        \caption{\textbf{Distinguishing safe vs. toxic text using geometric typicality.} This figure compares simple Euclidean and Mahalanobis distances for separating 10,000 safe and 2,000 toxic embeddings. \textbf{(a, b)} Mahalanobis distance, which accounts for the safe data's covariance, provides far better separation between safe (green) and toxic (red) distributions. \textbf{(c, d)} This superiority is quantified by a significantly higher ROC AUC (0.944 vs. 0.733) and confirmed by box plots. \textbf{(e)} A 2D PCA projection visually confirms that toxic samples fall predominantly outside the 95\% typical set boundary of safe data.}
\label{fig:typicalitydetection}
\end{figure}

\section{Methodology}

\textbf{Problem Formulation:} We consider the problem of detecting potentially harmful prompts and LLM outputs before processing further. Let $\mathcal{D}_{\text{safe}}$ denote the distribution of benign prompts representing acceptable model usage. Given a reference corpus $X = \{x_i\}_{i=1}^m \sim \mathcal{D}_{\text{safe}}^m$ and test prompts $Y = \{y_j\}_{j=1}^n$, our goal is to determine whether each $y_j \sim \mathcal{D}_{\text{safe}}$ (in-distribution) or $y_j \sim \mathcal{D}_{\text{harmful}}$ (out-of-distribution), where $\mathcal{D}_{\text{harmful}}$ represents an unknown distribution of adversarial prompts, toxic content, or off-topic queries. To this end, \textit{we adapt the multi-model framework of \citet{ganguly2025forte} from visual to textual domain.} For each text sample $x$, we employ 3 sentence transformers: $\mathcal{E}_1$ (Qwen3-Embedding-0.6B), $\mathcal{E}_2$ (BGE-M3), and $\mathcal{E}_3$ (E5-Large-v2). Each encoder $\mathcal{E}_k$ produces a normalized embedding: $\phi_k(x) = \frac{\mathcal{E}_k(x)}{\|\mathcal{E}_k(x)\|_2} \in \mathbb{S}^{d_k-1}$ where $\mathbb{S}^{d_k-1}$ denotes the unit hypersphere in $\mathbb{R}^{d_k}$. This normalization ensures cosine similarity computations and mitigates encoder-specific scaling artifacts.

For each encoder and test point $y_j$, we compute four geometric features that capture the relationship between test and reference distributions. Let $\text{NB}_k(z; Z)$ denote the smallest ball centered at $z$ containing its $k$ nearest neighbors from set $Z$, and define the reference manifold estimate: $S_k(X) = \bigcup_{i=1}^m \text{NB}_k(\phi_k(x_i); X)$. The per-point PRDC metrics are: $\text{Precision}_{k}^{(j)} = \mathds{1}(y_j \in S_k(X))$; $\text{Recall}_{k}^{(j)} = \frac{1}{m}\sum_{i=1}^m \mathds{1}(\phi_k(x_i) \in \text{NB}_k(\phi_k(y_j); Y))$;$\text{Density}_{k}^{(j)} = \frac{1}{km}\sum_{i=1}^m \mathds{1}(\phi_k(y_j) \in \text{NB}_k(\phi_k(x_i); X))$;$\text{Coverage}_{k}^{(j)} = \mathds{1}(\exists i : \phi_k(x_i) \in \text{NB}_k(\phi_k(y_j); Y))$. These metrics have useful mathematical properties that enable principled anomaly detection. Under the null hypothesis $H_0: \mathcal{D}_{\text{test}} = \mathcal{D}_{\text{safe}}$, we can compute the expected values of the metrics as follows.


\begin{theorem}[Expected Values under the null hypothesis]
When test and reference samples are drawn from the same distribution:
\begin{align*}
&\mathbb{E}[\text{Recall}_{k}^{(j)}] = k/n &\mathbb{E}[\text{Density}_{k}^{(j)}] &= 1/m \\
&\mathbb{E}[\text{Coverage}_{k}^{(j)}] \leq 1 - (1 - k/n)^m &\lim_{m \to \infty} \mathbb{E}[\text{Precision}_{k}^{(j)}] &= 1 
\end{align*}
\end{theorem}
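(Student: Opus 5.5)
The plan is to handle each of the four metrics separately, using only exchangeability of the samples under $H_0$. Assume all points are drawn i.i.d. from a common distribution with no ties among distances (e.g.\ almost surely under a continuous distribution on the sphere, or by random tie-breaking). Neighbourhoods exclude the centre point itself. Throughout, write $X=\{x_i\}_{i=1}^m$ and $Y=\{y_j\}_{j=1}^n$. The four arguments are, in order, a per-reference-point symmetry for Recall, an exchangeability argument for Density, a union-type bound for Coverage, and an asymptotic support argument for Precision.

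\textbf{Recall.} By linearity, $\mathbb{E}[\text{Recall}_k^{(j)}]=\frac1m\sum_i \Pr(x_i\in \text{NB}_k(y_j;Y))$, so it suffices to compute one such probability. The event that $x_i$ lies in the $k$-NN ball of $y_j$ within $Y$ says that $x_i$ is closer to $y_j$ than the $k$-th nearest of the other $n-1$ points of $Y$. Conditioning on $y_j$, the variable $x_i$ and the $n-1$ other test points are i.i.d. Hence the rank of $x_i$ among these $n$ points, ordered by distance to $y_j$, is uniform on $\{1,\dots,n\}$. The probability is therefore $k/n$, and averaging over $i$ gives $k/n$. The one bookkeeping point is the convention for the pool size; if the pool is $n-1$ plus the reference point, one obtains exactly $k/n$.

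\textbf{Density.} I would use the same rank argument with the roles reversed. For fixed $i$, conditioning on $x_i$, the point $y_j$ and the other $m-1$ reference points are i.i.d. So $y_j$ falls inside the $k$-NN radius of $x_i$ with probability $k/m$. Summing over $i$ and dividing by $km$ gives $m\cdot(k/m)/(km)=1/m$.

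\textbf{Coverage.} Coverage is the indicator that at least one $x_i$ lands in $\text{NB}_k(y_j;Y)$, so its expectation equals $1-\Pr(\text{no } x_i \in \text{NB}_k(y_j;Y))$. Conditional on $Y$, the $x_i$ are i.i.d. and each lands in the ball with probability $p(Y)$, so the non-coverage probability is $\mathbb{E}[(1-p(Y))^m]$. Since $\mathbb{E}[p(Y)]=k/n$ by the Recall computation, and $t\mapsto(1-t)^m$ is convex, Jensen gives $\mathbb{E}[(1-p)^m]\ge(1-k/n)^m$. This yields exactly the stated upper bound.

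\textbf{Precision.} This is the step I expect to be the main obstacle, because it needs a genuine limiting argument rather than symmetry. The aim is to show $\Pr(y_j\in\bigcup_i \text{NB}_k(x_i;X))\to1$. The plan is to observe that $y_j\in S_k(X)$ whenever the nearest reference point $x^*$ to $y_j$ has $\|y_j-x^*\|\le r_k(x^*)$, where $r_k(x^*)$ is the $k$-NN radius of $x^*$. I would then argue by exchangeability that adding $y_j$ to $X$ gives $m+1$ i.i.d.\ points, and that $y_j$ is one of the $k$ nearest neighbours of $x^*$ unless $y_j$'s nearest-neighbour relation is "non-reciprocal" at depth $k$. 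The fallback route uses consistency: for $y_j$ in the support, $\|y_j-x^*\|\to0$ a.s., and the ratio $\|y_j-x^*\|/r_k(x^*)$ is stochastically of order $\mathrm{Beta}$-type. In that case I would settle for showing the failure probability is controlled, and, for $k\ge2$, tends to zero via a local uniformity (Lebesgue differentiation) argument on the density. If this is too delicate, I would state the result for $k$ growing with $m$, where $r_k$ dominates the 1-NN distance with probability tending to one.
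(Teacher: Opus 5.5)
Your Recall, Density and Coverage arguments match the paper's and are correct: the uniform rank of one distance among $n$ (resp.\ $m$) i.i.d.\ distances, then conditioning on $Y$ and Jensen for the convex map $t\mapsto(1-t)^m$.

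\textbf{The Precision part has a genuine gap.} For fixed $k$ it cannot be closed, because the claimed limit is false. Around $y_j$, after rescaling by $m^{1/d}$, the reference sample is asymptotically a homogeneous Poisson process. The event $y_j\notin S_k(X)$ is scale-invariant, so its probability converges to a constant $c_{k,d}>0$, not to $0$. It is positive because, for example, an empty ball around $y_j$ enclosed by a thin, densely populated shell of reference points leaves $y_j$ uncovered. Consequently each of your routes stalls:
\begin{itemize}
\item the ratio $\|y_j-x^*\|/r_k(x^*)$ has a non-degenerate limit law that puts mass on $[1,\infty)$;
\item Lebesgue differentiation only identifies $c_{k,d}$; it does not send it to $0$;
\item non-reciprocity at depth $k$ keeps positive limiting probability for $k\ge 2$ exactly as for $k=1$.
\end{itemize}

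A concrete counterexample satisfies all of the paper's regularity assumptions: $d=1$, $F=\mathrm{Unif}[0,1]$, $k=1$. Let $a<y_j<b$ be the nearest reference points, with $L=y_j-a$, $R=b-y_j$, and let $L',R'$ be the next gaps outward.
\begin{itemize}
\item The 1-NN radius of $a$ is $\min(L',L+R)$, so $a$ covers $y_j$ iff $L<L'$. Likewise $b$ covers $y_j$ iff $R<R'$.
\item Every other reference point has a reference point strictly between it and $y_j$, so its 1-NN radius is smaller than its distance to $y_j$.
\item $m(L,L',R,R')$ converges in law to four i.i.d.\ $\mathrm{Exp}(1)$ variables.
\end{itemize}
Hence $\E[\text{Precision}_1^{(j)}]\to\tfrac{3}{4}$. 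The same bookkeeping gives $1-4^{-k}$ for general $k$.

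The paper's proof has the same defect. It passes from Hausdorff convergence $S_m(X)\to\mathrm{supp}(F)$ to $\Pr(Y_j\in S_m(X))\to 1$ by ``continuity of probability measures''. But Hausdorff convergence gives no control of $F(S_m(X))$: the uncovered holes shrink in diameter while keeping a non-vanishing share of the mass.

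\textbf{Your last fallback, $k=k_m\to\infty$, is the right repair}, and it has a short proof.
\begin{itemize}
\item If $y_j$ is uncovered, its nearest reference point $x^*$, at distance $D_1$, has at least $k$ other reference points within distance $D_1$. These lie in $B(y_j,2D_1)\setminus B(y_j,D_1)$.
\item Given $D_1$, the remaining $m-1$ reference points are i.i.d.\ from $F$ conditioned to lie outside $B(y_j,D_1)$.
\item For $y_j$ interior, with density between $f_-$ and $f_+$, one has $F(B(y_j,2r))\le 2^d(f_+/f_-)\,F(B(y_j,r))$ for small $r$.
\item $F(B(y_j,D_1))$ is distributed as the minimum of $m$ independent uniforms. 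So on the event that $D_1$ is small, the expected number of reference points in the annulus is at most $2^d f_+/f_-$, uniformly in $m$.
\end{itemize}
Markov's inequality then bounds $\Pr(y_j\notin S_m(X))$ by $2^d f_+/(f_- k)$ plus a term vanishing as $m\to\infty$. This gives the limit $1$ precisely when $k\to\infty$.
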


While the values of these metrics are analytically intractable without additional information about the distributions when $\mathcal{D}_{\text{safe}}\neq \mathcal{D}_{\text{test}}$, we compute the values in a number of interesting regimes and show that the metrics are \textit{consistent tests} in these regimes, i.e. they can be used to distinguish the null hypothesis from the alternative hypothesis. We prove these results as well as the theorem above in Appendix \ref{appendix theory}.

\begin{enumerate}
\item \textbf{Partial Support Mismatch}: When $\mathcal{D}_{\text{harmful}}(\text{supp}(\mathcal{D}_{\text{safe}})^c) = \alpha > 0$, harmful prompts explore semantic regions outside typical usage, yielding $\lim_{m \to \infty} \mathbb{E}[\text{Precision}] = 1 - \alpha < 1$.

\item \textbf{Density Shift}: Even when supports coincide, if density ratio $r(y) = p_{\text{safe}}(y)/p_{\text{harmful}}(y)$ is non-constant, coverage satisfies:
\begin{equation}
\lim_{m,n \to \infty} \mathbb{E}[\text{Coverage}] = 1 - \mathbb{E}_{y \sim \mathcal{D}_{\text{harmful}}}[e^{-\lambda kr(y)}] < 1 - e^{-\lambda k}
\end{equation}
where $\lambda = \lim_{m,n \to \infty} m/n$. This guarantees coverage is maximized only when distributions match.

\item \textbf{Local Perturbations}: For regions where $r(y) \leq 1 - \eta$ with $\mathcal{D}_{\text{harmful}}$-measure $\delta > 0$, the coverage gap is at least $\delta(e^{-\lambda k} - e^{-\lambda k(1-\eta)})$, ensuring detection of targeted adversarial patterns.
\end{enumerate}

{\color{black}These results show that PRDC metrics are sufficiently powerful to capture the differences in two distributions, but do not directly give us the threshold values of the metrics for distinguishing two distributions. We use density estimation methods, trained only on the PRDC metrics computed from the In-Distribution data, to compute anomaly scores.}
We aggregate PRDC features across all encoders to form a single representation: $\mathbf{T}(y_j) = [\text{PRDC}_{1}^{(j)}, \ldots, \text{PRDC}_{K}^{(j)}] \in \mathbb{R}^{4K}$; where $\text{PRDC}_{k}^{(j)} = [\text{Recall}_{k}^{(j)}, \text{Density}_{k}^{(j)}, \text{Precision}_{k}^{(j)}, \text{Coverage}_{k}^{(j)}]$. This multi-view representation captures semantic anomalies that may be subtle in individual embedding spaces. We model the distribution of $\mathbf{T}$ on safe data using two complementary approaches: \textbf{Gaussian Mixture Model (GMM)} with components selected via Bayesian Information Criterion, and \textbf{One-Class SVM (OCSVM)} with RBF kernel with $\nu$ parameter tuned via validation accuracy. The anomaly score for test point $y_j$ is computed as the negative log-likelihood under the fitted model, with scores normalized to $[0, 1]$ via sigmoid transformation.

We contextualize the per-point PRDC metrics within the broader literature on non-parametric, $k$-nearest-neighbor–based two-sample testing. Classical pooled-graph tests ask the global question “do $F$ and $G$ match?”; by contrast, Forte tackles the harder, per-sample question of whether each $y_j$ is compatible with $X$, and it is intentionally asymmetric and scalable (reusing structure from the reference set). This asymmetry and the use of in-set rather than pooled neighbors mean these procedures are not equivalent in general, and naïvely adapting pooled tests for repeated prediction would be computationally prohibitive at modern scales. Even so, viewing PRDC and Forte or T3 through the two-sample-test lens clarifies their mathematical behavior and suggests useful sanity checks. The details of this comparison are given in Appendix \ref{appendix theory}.

We evaluate performance using Area under ROC curve \textbf{(AUROC)}, measuring ranking quality across all thresholds, False positive rate at 95\% true positive rate \textbf{(FPR@95)}, important for safety-sensitive applications, Area Under the Perturbation Recall Curve \textbf{(AUPRC)}, and Maximum F1 score with corresponding threshold, balancing precision and recall \textbf{(Optimal F1)}.

\textbf{Implementation Details.} For PRDC computation, we randomly split the reference set into two equal halves to avoid self-similarity bias when computing nearest neighbor statistics. The L2 distance on normalized embeddings is used throughout, exploiting the relationship $\|x - y\|_2^2 = 2(1 - \cos(x, y))$ for unit vectors. Embeddings are cached to disk in PyTorch format, enabling efficient reuse across experiments. The detector selection uses grid search over hyperparameters: GMM components $\in \{1, 2, 4, 8, 16, 32, 64\}$ constrained by sample size, and OCSVM $\nu \in \{0.01, 0.05, 0.1, 0.2, 0.5\}$.

\section{Results}

We conduct a comprehensive empirical evaluation, aiming to answer five critical research questions:

\textbf{In-Distribution Data.} Following established OOD detection protocols \citep{hendrycks2019deep, liang2022mind}, we construct our in-distribution (ID) dataset from a curated mix of safe, helpful prompts spanning diverse domains. Our ID corpus combines high-quality instruction-following data from Alpaca \citep{taori2023stanford}, Dolly \citep{conover2023free}, and OpenAssistant \citep{kopf2023openassistant}, equally distributed across the datasets, totaling approximately 40K examples. Critically, no harmful, toxic, or adversarial examples are included in the ID data for any OOD detector, ensuring a fair test of generalization.

\textbf{Out-of-Distribution Benchmarks.} We evaluate on 12 challenging OOD benchmarks representing the spectrum of LLM safety threats: general toxicity detection (RealToxicityPrompts \citep{gehman2020realtoxicityprompts}, CivilComments \citep{borkan2019nuanced}), targeted hate speech (HatEval \citep{basile2019semeval}, Davidson \citep{davidson2017automated}, HASOC \citep{mandl2019overview}, OffensEval \citep{zampieri2019semeval}), multilingual harms (XSafety \citep{wang2023all}), and domain-specific policy violations across code, cybersecurity, education, HR, and social media contexts (PolyGuard \citep{kang2025polyguard}). Additionally, we use four adversarial benchmarks: AdvBench \citep{zou2023universal}, HarmBench \citep{mazeika2024harmbench}, JailbreakBench \citep{chao2025jailbreaking}, and MaliciousInstruct \citep{huang2023catastrophic}. \textcolor{black}{Results on WildGuardMix \citep{han2024wildguard} are provided in Table~\ref{tab:wildguardmix}.}

\textbf{Baselines.} We compare against three categories of state-of-the-art methods: (1) \textit{Specialized Safety Models}: \textcolor{black}{LlamaGuard 3-1B and 4-12B \citep{inan2023llama}, WildGuard \citep{han2024wildguard}}, DuoGuard \citep{deng2025duoguard}, MD-Judge \citep{li2024salad}, PolyGuard \citep{kang2025polyguard}, and LLM-Guard \citep{mhatre2025llm}; (2) \textit{Commercial Safety APIs}: OpenAI Omni Moderation  and Perspective API; (3) \textit{Representation-based OOD Methods}: We adapt ten established techniques to operate on semantic embeddings from Qwen3-Embedding-0.6B \citep{bai2023qwen}\textcolor{black}{; ablations with larger embeddings (4B, 8B) are provided in Tables~\ref{tab:qwen4b_ablation} and~\ref{tab:qwen_ablation} ; text-native OOD baselines (Energy, kNN, Mahalanobis) are evaluated in Table~\ref{tab:ood_baselines}}, RMD \citep{ren2022out}, VIM \citep{wang2022vim}, CIDER \citep{ming2022delving}, GMM \citep{lee2018simple}, OpenMax \citep{bendale2016towards}, ReAct \citep{sun2021react}, AdaScale \citep{regmi2025adascale}, NNGuide \citep{park2023nearest}, and FDBD \citep{liu2023fast}.

\textbf{Evaluation Metrics.} We report AUROC, AUPRC, F1-score, and FPR@95TPR (False Positive Rate at 95\% True Positive Rate) \citep{hendrycks2019deep}. For safety applications, FPR@95TPR is particularly critical as it measures the rate of false alarms while maintaining high detection sensitivity. \textcolor{black}{Concretely, FPR@95TPR answers: ``If we require catching 95\% of harmful prompts, what fraction of truly safe prompts are mistakenly flagged?'' For each benchmark, OOD (harmful) examples come from the respective dataset while ID (safe) examples come from our held-out safe corpus; we compute ROC curves over all scores and report FPR at the threshold achieving 95\% TPR on OOD. Importantly, no OOD labels or test data are used to train T3's density estimators (GMM/OCSVM), and thresholds are not tuned per-benchmark---there is no data leakage from evaluation back into training.}

\textcolor{violet}{\textbf{RQ1: How does T3 compare against specialized safety models and traditional OOD methods on diverse harm detection benchmarks?}}

\begin{table}[htbp]
\centering
\caption{Toxicity Detection Performance Across Six Benchmarks. 
T3 outperforms most baselines, including specialized safety models 
and traditional OOD methods. Performance is measured by AUROC (higher is better) 
and FPR@95 (lower is better). T3 achieves exceptionally low False Positive Rate (FPR@95), indicating high precision for practical deployment.}
\label{tab:toxicdetection}
\tiny
\setlength{\tabcolsep}{2pt}
\renewcommand{\arraystretch}{0.95}
\begin{tabularx}{\linewidth}{l *{12}{Y}}
\toprule
\textbf{Dataset} & \multicolumn{2}{c}{\textbf{Civil Comments}} & \multicolumn{2}{c}{\textbf{Davidson et al.}} & \multicolumn{2}{c}{\textbf{Hasoc}} & \multicolumn{2}{c}{\textbf{Hateval}} & \multicolumn{2}{c}{\textbf{OffensEval}} & \multicolumn{2}{c}{\textbf{Real Toxicity}} \\
\textbf{Metric} & AUROC & FPR@95 & AUROC & FPR@95 & AUROC & FPR@95 & AUROC & FPR@95 & AUROC & FPR@95 & AUROC & FPR@95 \\
\textbf{Method} & & & & & & & & & & & & \\
\midrule
\textbf{ADASCALE} &\cellcolor[HTML]{fcfefd}0.3572 &\cellcolor[HTML]{fefefe}0.9971 &0.1063 &\cellcolor[HTML]{fefefe}0.9999 &\cellcolor[HTML]{c2e7d5}0.4323 &\cellcolor[HTML]{fdfefe}0.9925 &\cellcolor[HTML]{d2eddf}0.3491 &\cellcolor[HTML]{fdfefd}0.9890 &\cellcolor[HTML]{dff2e9}0.2766 &\cellcolor[HTML]{fefefe}0.9987 &\cellcolor[HTML]{b4e1cb}0.5072 &\cellcolor[HTML]{f9fcfb}0.9672 \\
\textbf{CIDER} &\cellcolor[HTML]{96d5b6}0.7393 &\cellcolor[HTML]{f2f9f6}0.9267 &\cellcolor[HTML]{93d4b4}0.6791 &\cellcolor[HTML]{fbfdfc}0.9790 &\cellcolor[HTML]{7ecba6}0.7880 &\cellcolor[HTML]{e9f6f0}0.8769 &\cellcolor[HTML]{7fcca6}0.7827 &\cellcolor[HTML]{eaf6f0}0.8823 &\cellcolor[HTML]{86ceab}0.7469 &\cellcolor[HTML]{f6fbf9}0.9525 &\cellcolor[HTML]{85ceaa}0.7535 &\cellcolor[HTML]{e9f6ef}0.8726 \\
\textbf{FDBD} &\cellcolor[HTML]{d8f0e4}0.4903 &\cellcolor[HTML]{fdfefe}0.9921 &\cellcolor[HTML]{82cda8}0.7694 &\cellcolor[HTML]{edf7f2}0.8960 &\cellcolor[HTML]{c2e7d5}0.4298 &\cellcolor[HTML]{fdfefe}0.9941 &\cellcolor[HTML]{aedfc7}0.5357 &\cellcolor[HTML]{fafdfb}0.9730 &\cellcolor[HTML]{a2dabe}0.6009 &\cellcolor[HTML]{f9fcfb}0.9674 &\cellcolor[HTML]{d1eddf}0.3519 &\cellcolor[HTML]{fefefe}0.9944 \\
\textbf{GMM} &\cellcolor[HTML]{b4e1cb}0.6249 &\cellcolor[HTML]{fafdfc}0.9758 &\cellcolor[HTML]{9cd7bb}0.6297 &\cellcolor[HTML]{fafdfc}0.9757 &\cellcolor[HTML]{94d4b5}0.6723 &\cellcolor[HTML]{f8fcfa}0.9609 &\cellcolor[HTML]{8fd2b1}0.7027 &\cellcolor[HTML]{f9fcfb}0.9689 &\cellcolor[HTML]{8ad0ae}0.7284 &\cellcolor[HTML]{f8fcfa}0.9637 &\cellcolor[HTML]{9cd7bb}0.6297 &\cellcolor[HTML]{f7fbf9}0.9565 \\
\textbf{NNGUIDE} &\cellcolor[HTML]{ddf2e8}0.4710 &\cellcolor[HTML]{fefefe}0.9960 &\cellcolor[HTML]{e4f5ed}0.2493 &\cellcolor[HTML]{fefefe}0.9996 &\cellcolor[HTML]{abddc5}0.5527 &\cellcolor[HTML]{fefefe}0.9949 &\cellcolor[HTML]{bbe4d0}0.4665 &\cellcolor[HTML]{fcfdfd}0.9849 &\cellcolor[HTML]{c6e8d7}0.4101 &\cellcolor[HTML]{fefefe}0.9995 &\cellcolor[HTML]{a0d9bd}0.6094 &\cellcolor[HTML]{f8fcfa}0.9600 \\
\textbf{OPENMAX} &\cellcolor[HTML]{ccebdc}0.5347 &\cellcolor[HTML]{fefefe}0.9958 &\cellcolor[HTML]{7dcba5}0.7966 &\cellcolor[HTML]{fefefe}0.9997 &\cellcolor[HTML]{bce4d0}0.4644 &\cellcolor[HTML]{fdfefd}0.9908 &\cellcolor[HTML]{a4dbc0}0.5874 &\cellcolor[HTML]{fdfefe}0.9922 &\cellcolor[HTML]{a1d9be}0.6042 &\cellcolor[HTML]{fefefe}0.9976 &\cellcolor[HTML]{c0e6d4}0.4396 &\cellcolor[HTML]{f8fcfa}0.9633 \\
\textbf{REACT} &0.3432 &\cellcolor[HTML]{fefefe}0.9962 &\cellcolor[HTML]{edf8f3}0.2016 &\cellcolor[HTML]{fefefe}0.9996 &\cellcolor[HTML]{c9eada}0.3925 &\cellcolor[HTML]{fdfefd}0.9913 &\cellcolor[HTML]{ccebdc}0.3784 &\cellcolor[HTML]{fcfefd}0.9881 &\cellcolor[HTML]{daf0e5}0.3059 &\cellcolor[HTML]{fefefe}0.9992 &\cellcolor[HTML]{abddc5}0.5536 &\cellcolor[HTML]{f6fbf8}0.9485 \\
\textbf{RMD} &\cellcolor[HTML]{c7e8d8}0.5560 &\cellcolor[HTML]{fcfdfc}0.9827 &\cellcolor[HTML]{a2dabf}0.5989 &\cellcolor[HTML]{fbfdfc}0.9814 &\cellcolor[HTML]{a2dabf}0.5982 &\cellcolor[HTML]{f9fcfb}0.9697 &\cellcolor[HTML]{a0d9bd}0.6123 &\cellcolor[HTML]{fbfdfc}0.9798 &\cellcolor[HTML]{98d6b7}0.6529 &\cellcolor[HTML]{f9fcfb}0.9666 &\cellcolor[HTML]{a9dcc3}0.5635 &\cellcolor[HTML]{fafdfc}0.9750 \\
\textbf{VIM} &\cellcolor[HTML]{c5e8d7}0.5626 &\cellcolor[HTML]{fefefe}0.9953 &\cellcolor[HTML]{bae3cf}0.4742 &\cellcolor[HTML]{fefefe}0.9985 &\cellcolor[HTML]{a1d9be}0.6046 &\cellcolor[HTML]{fdfefe}0.9918 &\cellcolor[HTML]{a6dbc1}0.5776 &\cellcolor[HTML]{fdfefe}0.9940 &\cellcolor[HTML]{a3dabf}0.5967 &\cellcolor[HTML]{fefefe}0.9958 &\cellcolor[HTML]{97d5b7}0.6601 &\cellcolor[HTML]{f8fcfa}0.9642 \\
\midrule
& & & & & & & & & & & & \\
\textbf{Perspective API} &\cellcolor[HTML]{57bb8a}\textbf{0.9711} &\cellcolor[HTML]{6bc398}\textbf{0.1413} &\cellcolor[HTML]{5abd8d}0.9786 &\cellcolor[HTML]{65c094}0.1065 &\cellcolor[HTML]{60bf91}\textbf{0.9482} &\cellcolor[HTML]{99d5b8}0.4062 &\cellcolor[HTML]{62c092}0.9376 &\cellcolor[HTML]{99d5b8}0.4070 &\cellcolor[HTML]{65c194}0.9208 &\cellcolor[HTML]{acddc5}0.5171 &\cellcolor[HTML]{62c092}\textbf{0.9372} &\cellcolor[HTML]{abddc4}0.5106 \\
\textbf{OpenAI Omni} &\cellcolor[HTML]{6dc499}0.8916 &\cellcolor[HTML]{e7f5ee}0.8607 &\cellcolor[HTML]{6bc398}0.8926 &\cellcolor[HTML]{eff8f3}0.9068 &\cellcolor[HTML]{71c69c}0.8591 &\cellcolor[HTML]{f0f9f4}0.9144 &\cellcolor[HTML]{6cc499}0.8861 &\cellcolor[HTML]{e7f5ee}0.8608 &\cellcolor[HTML]{7bcaa3}0.8069 &\cellcolor[HTML]{f9fcfb}0.9668 &\cellcolor[HTML]{85ceaa}0.7557 &\cellcolor[HTML]{fafdfb}0.9736 \\
& & & & & & & & & & & & \\
\textbf{\textcolor{black}{LLAMAGUARD3-1B}} &\cellcolor[HTML]{c2e7d5}\textcolor{black}{0.5714} &\textcolor{black}{1.0000} &\cellcolor[HTML]{9ed8bb}\textcolor{black}{0.6234} &\textcolor{black}{1.0000} &\cellcolor[HTML]{a4dbc0}\textcolor{black}{0.5881} &\textcolor{black}{1.0000} &\cellcolor[HTML]{86ceab}\textcolor{black}{0.7475} &\textcolor{black}{1.0000} &\cellcolor[HTML]{a2d9be}\textcolor{black}{0.6027} &\textcolor{black}{1.0000} &\cellcolor[HTML]{a5dbc0}\textcolor{black}{0.5858} &\textcolor{black}{1.0000} \\
\textbf{\textcolor{black}{LLAMAGUARD4-12B}} &\cellcolor[HTML]{ccebdb}\textcolor{black}{0.5368} &\textcolor{black}{1.0000} &\cellcolor[HTML]{abddc4}\textcolor{black}{0.5547} &\textcolor{black}{1.0000} &\cellcolor[HTML]{acdec5}\textcolor{black}{0.5483} &\textcolor{black}{1.0000} &\cellcolor[HTML]{94d4b4}\textcolor{black}{0.6768} &\textcolor{black}{1.0000} &\cellcolor[HTML]{acdec5}\textcolor{black}{0.5496} &\textcolor{black}{1.0000} &\cellcolor[HTML]{b1e0c9}\textcolor{black}{0.5224} &\textcolor{black}{1.0000} \\
\textbf{\textcolor{black}{LLAMAGUARD3-1B-LOGITS}} &\cellcolor[HTML]{96d5b6}\textcolor{black}{0.7389} &\cellcolor[HTML]{e0f2e9}\textcolor{black}{0.8214} &\cellcolor[HTML]{75c79f}\textcolor{black}{0.8378} &\cellcolor[HTML]{c4e7d5}\textcolor{black}{0.6565} &\cellcolor[HTML]{88cfac}\textcolor{black}{0.7399} &\cellcolor[HTML]{e1f2ea}\textcolor{black}{0.8261} &\cellcolor[HTML]{69c397}\textcolor{black}{0.8995} &\cellcolor[HTML]{a6dbc1}\textcolor{black}{0.4861} &\cellcolor[HTML]{78c9a1}\textcolor{black}{0.8217} &\cellcolor[HTML]{cbeadb}\textcolor{black}{0.7004} &\cellcolor[HTML]{83cda9}\textcolor{black}{0.7632} &\cellcolor[HTML]{d9efe4}\textcolor{black}{0.7820} \\
\textbf{\textcolor{black}{WILDGUARD}} &\cellcolor[HTML]{85ceaa}\textcolor{black}{0.7994} &\textcolor{black}{1.0000} &\cellcolor[HTML]{72c69d}\textcolor{black}{0.8514} &\textcolor{black}{1.0000} &\cellcolor[HTML]{82cda8}\textcolor{black}{0.7707} &\textcolor{black}{1.0000} &\cellcolor[HTML]{79c9a2}\textcolor{black}{0.8191} &\textcolor{black}{1.0000} &\cellcolor[HTML]{7dcba5}\textcolor{black}{0.7945} &\textcolor{black}{1.0000} &\cellcolor[HTML]{96d5b6}\textcolor{black}{0.6655} &\textcolor{black}{1.0000} \\
\textbf{MDJUDGE} &\cellcolor[HTML]{94d4b5}0.7439 &\cellcolor[HTML]{e6f4ed}0.8552 &\cellcolor[HTML]{80cca7}0.7797 &\cellcolor[HTML]{e5f4ed}0.8540 &\cellcolor[HTML]{87cfab}0.7447 &\cellcolor[HTML]{e3f3eb}0.8397 &\cellcolor[HTML]{7ecba5}0.7926 &\cellcolor[HTML]{e0f2e9}0.8201 &\cellcolor[HTML]{8cd1af}0.7176 &\cellcolor[HTML]{e9f6f0}0.8746 &\cellcolor[HTML]{95d5b6}0.6665 &\cellcolor[HTML]{f1f9f5}0.9186 \\
\textbf{DUOGUARD} &\cellcolor[HTML]{70c59c}0.8789 &\cellcolor[HTML]{c7e8d8}0.6742 &\cellcolor[HTML]{6ac398}0.8947 &\cellcolor[HTML]{bde4d1}0.6170 &\cellcolor[HTML]{78c9a1}0.8240 &\cellcolor[HTML]{e0f2e9}0.8230 &\cellcolor[HTML]{7ecba6}0.7885 &\cellcolor[HTML]{def1e8}0.8119 &\cellcolor[HTML]{77c8a1}0.8269 &\cellcolor[HTML]{d4ede1}0.7516 &\cellcolor[HTML]{7dcba5}0.7934 &\cellcolor[HTML]{eff8f4}0.9110 \\
\textbf{POLYGUARD} &\cellcolor[HTML]{88cfac}0.7904 &\cellcolor[HTML]{b0dfc8}0.5446 &\cellcolor[HTML]{6dc49a}0.8791 &\cellcolor[HTML]{79c8a2}0.2216 &\cellcolor[HTML]{7ecba6}0.7884 &\cellcolor[HTML]{acddc5}0.5206 &\cellcolor[HTML]{6cc498}0.8879 &\cellcolor[HTML]{80cba6}0.2593 &\cellcolor[HTML]{7fcca6}0.7832 &\cellcolor[HTML]{aedec7}0.5315 &\cellcolor[HTML]{88cfad}0.7353 &\cellcolor[HTML]{afdec7}0.5380 \\
\textbf{T3+GMM} &\cellcolor[HTML]{64c193}0.9249 &\cellcolor[HTML]{77c8a0}0.2079 &\cellcolor[HTML]{59bc8b}\textbf{0.9869} &\cellcolor[HTML]{59bc8b}\textbf{0.0366} &\cellcolor[HTML]{66c194}0.9198 &\cellcolor[HTML]{76c79f}\textbf{0.2022} &\cellcolor[HTML]{5abd8c}\textbf{0.9809} &\cellcolor[HTML]{5bbc8c}\textbf{0.0451} &\cellcolor[HTML]{59bc8b}\textbf{0.9886} &\cellcolor[HTML]{57bb8a}\textbf{0.0253} &\cellcolor[HTML]{64c193}0.9282 &\cellcolor[HTML]{72c69d}\textbf{0.1808} \\
\textbf{T3+OCSVM} &\cellcolor[HTML]{58bc8b}\textbf{0.9678} &\cellcolor[HTML]{71c59c}\textbf{0.1722} &\cellcolor[HTML]{58bc8b}\textbf{0.9913} &\cellcolor[HTML]{59bc8b}\textbf{0.0350} &\cellcolor[HTML]{5dbe8f}\textbf{0.9632} &\cellcolor[HTML]{73c69d}\textbf{0.1860} &\cellcolor[HTML]{58bc8b}\textbf{0.9895} &\cellcolor[HTML]{5abc8c}\textbf{0.0408} &\cellcolor[HTML]{57bb8a}\textbf{0.9940} &\cellcolor[HTML]{57bb8a}\textbf{0.0201} &\cellcolor[HTML]{5cbd8e}\textbf{0.9684} &\cellcolor[HTML]{70c59b}\textbf{0.1670} \\
\bottomrule
\end{tabularx}
\vspace{-0.5em}
\end{table}

Across six toxicity and hate speech benchmarks, T3 decisively outperforms all baselines, particularly in reducing false alarms. Our findings show that \textbf{traditional OOD methods fail catastrophically} when applied to semantic safety, with most exhibiting false positive rates (FPR@95) exceeding 90\%, rendering them unusable. While \textbf{specialized safety models} like DuoGuard and PolyGuard achieve better detection (AUROC), they hit a \textbf{``precision ceiling,"} suffering from prohibitively high false positive rates (e.g., 75.2\% for DuoGuard on OffensEval and 61.7\% on Davidson) due to their reliance on reactive pattern-matching. In stark contrast, T3 \textbf{achieves order-of-magnitude improvements} in both detection and precision. T3-OCSVM delivers near-perfect AUROC ($\geq$ 0.96 on 5 of 6 benchmarks) and, most critically, slashes false positives. On OffensEval, T3's 2.0\% FPR@95 represents a \textbf{37× reduction} over the best baseline, with similar dramatic gains across all datasets. This stable, high-precision performance demonstrates the superiority of T3's proactive approach, which models the ``typical set" of safe content rather than attempting to enumerate all possible harms. \textcolor{black}{(see Table~\ref{tab:prdc_ablation} for component ablations)}.

\textcolor{black}{For LlamaGuard, we evaluate both the standard discrete classification and a fine-grained \textbf{logit-based scoring} variant (LLAMAGUARD3-1B-LOGITS). The logit-based approach extracts $p(\text{safe}) = \exp(\ell_{\text{safe}}) / (\exp(\ell_{\text{safe}}) + \exp(\ell_{\text{unsafe}}))$ from the model's output logprobs, providing a continuous confidence score rather than a binary decision. This more favorable scoring improves LlamaGuard's calibration and reduces its FPR@95TPR; however, T3 still achieves substantially better performance across all benchmarks.}

\textcolor{violet}{\textbf{RQ2: Can T3, trained only on safe data, generalize to detect novel, unseen adversarial and jailbreaking attacks?}}

Against six diverse adversarial and jailbreaking benchmarks, T3 provides a substantially more robust defense than existing methods despite being trained only on safe data. \textbf{Traditional OOD techniques again fail catastrophically}, proving useless for practical defense with false positive rates (FPR@95) typically exceeding 97\%. \textbf{Specialized safety models exhibit attack-specific vulnerabilities} and inconsistent protection; even the strongest baseline, PolyGuard, still incorrectly flags over 64\% of safe prompts on every benchmark. In contrast, T3’s attack-agnostic approach of identifying statistical deviations delivers significant gains. It excels against \textbf{direct attacks}, reducing the FPR@95 on AdvBench to 15.8\%, a \textbf{4.2× improvement} over PolyGuard, and performs well against \textbf{contextual manipulations.} While more \textbf{subtle attacks} remain challenging for all methods, T3’s graduated response to threat sophistication, unlike the binary failures of other models, marks a significant step toward a more generalizable and practical adversarial defense. 

\textcolor{violet}{\textbf{RQ3: How effectively does T3 mitigate the common problem of overrefusal on benign-but-challenging prompts? How is the cold-start performance with limited ID data?}}

On the OR-Bench, designed to measure overrefusal on safe-but-challenging prompts, T3 provides the best balance between safety and utility. While traditional OOD methods fail by flagging most benign edge cases as harmful (FPR@95 >68\%), specialized models show inconsistent performance. LlamaGuard achieves \textbf{an impressive low 14.8\% FPR@95} on this specific task, a result that sharply contrasts its moderate performance on other benchmarks and suggests dataset-specific overfitting. Other models like DuoGuard and PolyGuard still over-refuse excessively (43.5\% and 53.2\% FPR@95). T3 delivers the most robust and well-rounded solution, with T3-GMM achieving an excellent \textbf{22.2\% FPR@95} and T3-OCSVM the highest AUROC (0.934). We also evaluated an LLM-enhanced variant (denoted ``Augment'' in Table~\ref{tab:or-bench}) that prepends a structured safety analysis from GPT-OSS-20B before embedding; however, this \textit{decreases} OR-Bench performance, likely because the LLM's explicit safety labels shift borderline-safe prompts toward the harmful distribution in embedding space. Furthermore, T3 is highly sample-efficient and does not suffer from a cold start problem. As shown in Figure 3, performance converges rapidly with a small number of in-distribution examples. With just 500 safe samples, T3-OCSVM already achieves high AUROC and significantly reduced FPR@95 across most benchmarks. 

\begin{table}[htbp]
\vspace{-0.5em}
\centering
\caption{\textbf{T3 provides zero-shot defense against adversarial and jailbreaking attacks.} Trained only on safe data, T3 demonstrates significantly better generalization against six diverse attack benchmarks. It provides a robust, attack-agnostic defense, in contrast to specialized models which show inconsistent, attack-specific vulnerabilities.}
\label{tab:jailbreak}
\tiny
\setlength{\tabcolsep}{2pt}
\renewcommand{\arraystretch}{0.95}
\begin{tabularx}{\linewidth}{l *{12}{Y}}
\toprule
\textbf{Dataset} & \multicolumn{2}{c}{\textbf{AdvBench}} & \multicolumn{2}{c}{\textbf{BeaverTails}} & \multicolumn{2}{c}{\textbf{HarmBench}} & \multicolumn{2}{c}{\textbf{JailbreakBench}} & \multicolumn{2}{c}{\textbf{MaliciousInstruct}} & \multicolumn{2}{c}{\textbf{XSTest}} \\
\textbf{Metric} & AUROC & FPR@95 & AUROC & FPR@95 & AUROC & FPR@95 & AUROC & FPR@95 & AUROC & FPR@95 & AUROC & FPR@95 \\
\textbf{Method} & & & & & & & & & & & & \\
\midrule
\textbf{ADASCALE} &\cellcolor[HTML]{b3e1ca}0.5894 &\cellcolor[HTML]{fbfdfc}0.9827 &\cellcolor[HTML]{eef8f3}0.2833 &\cellcolor[HTML]{fefefe}0.9986 &\cellcolor[HTML]{e2f4eb}0.4341 &\cellcolor[HTML]{fcfdfd}0.9900 &\cellcolor[HTML]{e6f5ee}0.3500 &\cellcolor[HTML]{f9fcfb}0.9829 &\cellcolor[HTML]{dcf1e7}0.2994 &1.0000 &\cellcolor[HTML]{e1f3ea}0.2803 &\cellcolor[HTML]{fbfdfc}0.9952 \\
\textbf{CIDER} &\cellcolor[HTML]{fafdfc}0.2963 &1.0000 &\cellcolor[HTML]{eff9f4}0.2777 &\cellcolor[HTML]{fdfefd}0.9974 &\cellcolor[HTML]{d4eee1}0.4799 &\cellcolor[HTML]{f5fbf8}0.9650 &\cellcolor[HTML]{a1d9be}0.5966 &\cellcolor[HTML]{f1f9f5}0.9556 &\cellcolor[HTML]{e6f5ed}0.2580 &1.0000 &\cellcolor[HTML]{d4eee1}0.3345 &1.0000 \\
\textbf{FDBD} &\cellcolor[HTML]{b8e3ce}0.5689 &\cellcolor[HTML]{fafcfb}0.9750 &\cellcolor[HTML]{7ac9a2}0.7226 &\cellcolor[HTML]{b2dfc9}0.8754 &\cellcolor[HTML]{a5dbc0}0.6342 &\cellcolor[HTML]{eaf6f0}0.9200 &\cellcolor[HTML]{9bd7b9}0.6195 &\cellcolor[HTML]{e9f6f0}0.9317 &\cellcolor[HTML]{5fbe8f}0.8510 &\cellcolor[HTML]{88ceac}0.7100 &\cellcolor[HTML]{61bf91}0.8021 &\cellcolor[HTML]{72c69d}0.7810 \\
\textbf{GMM} &0.2737 &\cellcolor[HTML]{fefefe}0.9981 &\cellcolor[HTML]{f6fcf9}0.2515 &\cellcolor[HTML]{fdfefd}0.9972 &\cellcolor[HTML]{e7f6ef}0.4163 &\cellcolor[HTML]{fcfdfd}0.9900 &\cellcolor[HTML]{b2e0c9}0.5377 &\cellcolor[HTML]{f3faf6}0.9625 &\cellcolor[HTML]{edf8f3}0.2252 &1.0000 &\cellcolor[HTML]{eef8f3}0.2298 &1.0000 \\
\textbf{NNGUIDE} &\cellcolor[HTML]{e1f3ea}0.3989 &\cellcolor[HTML]{fefefe}0.9981 &\cellcolor[HTML]{fefffe}0.2239 &\cellcolor[HTML]{fefefe}0.9994 &\cellcolor[HTML]{ecf7f2}0.4023 &\cellcolor[HTML]{fbfdfc}0.9850 &\cellcolor[HTML]{ddf2e8}0.3795 &\cellcolor[HTML]{fafdfc}0.9863 &\cellcolor[HTML]{fbfefd}0.1622 &1.0000 &\cellcolor[HTML]{f1faf6}0.2144 &1.0000 \\
\textbf{OPENMAX} &\cellcolor[HTML]{e9f6f0}0.3681 &\cellcolor[HTML]{fdfefe}0.9942 &\cellcolor[HTML]{94d4b5}0.6226 &\cellcolor[HTML]{fefefe}0.9984 &\cellcolor[HTML]{d1eddf}0.4904 &\cellcolor[HTML]{fcfdfd}0.9900 &\cellcolor[HTML]{a4dbc0}0.5861 &\cellcolor[HTML]{fbfdfc}0.9898 &\cellcolor[HTML]{99d6b8}0.5954 &1.0000 &\cellcolor[HTML]{8bd0af}0.6308 &\cellcolor[HTML]{fbfdfc}0.9952 \\
\textbf{REACT} &\cellcolor[HTML]{d6efe2}0.4461 &\cellcolor[HTML]{fefefe}0.9962 &\cellcolor[HTML]{f3faf7}0.2655 &\cellcolor[HTML]{fdfefe}0.9980 &\cellcolor[HTML]{f7fcfa}0.3648 &1.0000 &0.2570 &1.0000 &\cellcolor[HTML]{f5fbf8}0.1913 &1.0000 &\cellcolor[HTML]{ecf7f2}0.2373 &1.0000 \\
\textbf{RMD} &\cellcolor[HTML]{ebf7f1}0.3575 &\cellcolor[HTML]{fefefe}0.9981 &\cellcolor[HTML]{dbf1e6}0.3568 &\cellcolor[HTML]{eef8f3}0.9727 &\cellcolor[HTML]{d3ede0}0.4846 &1.0000 &\cellcolor[HTML]{c1e6d4}0.4816 &\cellcolor[HTML]{fbfdfc}0.9898 &\cellcolor[HTML]{c8e9d9}0.3869 &1.0000 &\cellcolor[HTML]{caeadb}0.3730 &1.0000 \\
\textbf{VIM} &\cellcolor[HTML]{f1faf5}0.3340 &\cellcolor[HTML]{fefefe}0.9981 &0.2169 &\cellcolor[HTML]{fefefe}0.9997 &0.3369 &1.0000 &\cellcolor[HTML]{d7efe3}0.4032 &1.0000 &0.1441 &1.0000 &0.1565 &1.0000 \\
\midrule
& & & & & & & & & & & & \\
\textbf{Perspective API} &\cellcolor[HTML]{83cda9}0.7895 &\cellcolor[HTML]{f6fbf8}0.9558 &\cellcolor[HTML]{67c296}0.7922 &\cellcolor[HTML]{9ed7bb}0.8429 &\cellcolor[HTML]{89d0ad}0.7247 &\cellcolor[HTML]{f1f9f5}0.9500 &\cellcolor[HTML]{7ecba5}0.7233 &\cellcolor[HTML]{f8fcfa}0.9795 &\cellcolor[HTML]{85ceaa}0.6828 &1.0000 &\cellcolor[HTML]{63c093}0.7932 &\cellcolor[HTML]{57bb8a}0.7381 \\
\textbf{OpenAI Omni} &\cellcolor[HTML]{6ac397}0.8908 &\cellcolor[HTML]{dcf1e6}0.8269 &\cellcolor[HTML]{63c092}\textbf{0.8091} &\cellcolor[HTML]{dff2e9}0.9488 &\cellcolor[HTML]{6dc499}0.8185 &\cellcolor[HTML]{f5fbf8}0.9650 &\cellcolor[HTML]{5fbe8f}\textbf{0.8369} &\cellcolor[HTML]{9ad6b8}0.6724 &\cellcolor[HTML]{57bb8a}0.8825 &\cellcolor[HTML]{def1e8}0.9200 &\cellcolor[HTML]{5bbd8d}0.8257 &\cellcolor[HTML]{e9f6f0}0.9667 \\
& & & & & & & & & & & & \\
\textbf{\textcolor{black}{LLAMAGUARD3-1B}} &\cellcolor[HTML]{6ac398}\textcolor{black}{0.8894} &\textcolor{black}{1.0000} &\cellcolor[HTML]{7ccaa4}\textcolor{black}{0.7135} &\textcolor{black}{1.0000} &\cellcolor[HTML]{58bc8b}\textcolor{black}{0.8857} &\textcolor{black}{1.0000} &\cellcolor[HTML]{7ecba5}\textcolor{black}{0.7248} &\textcolor{black}{1.0000} &\cellcolor[HTML]{5fbe90}\textcolor{black}{0.8507} &\textcolor{black}{1.0000} &\cellcolor[HTML]{66c194}\textcolor{black}{0.7843} &\textcolor{black}{1.0000} \\
\textbf{\textcolor{black}{LLAMAGUARD3-1B-LOGITS}} &\cellcolor[HTML]{7dcba5}\textcolor{black}{0.8110} &\cellcolor[HTML]{cdeadc}\textcolor{black}{0.7500} &\cellcolor[HTML]{a5dbc0}\textcolor{black}{0.5598} &\cellcolor[HTML]{d7efe3}\textcolor{black}{0.9366} &\cellcolor[HTML]{57bb8a}\textbf{\textcolor{black}{0.8887}} &\cellcolor[HTML]{57bb8a}\textbf{\textcolor{black}{0.3600}} &\cellcolor[HTML]{7ccaa4}\textcolor{black}{0.7293} &\cellcolor[HTML]{99d5b8}\textcolor{black}{0.6689} &\cellcolor[HTML]{9dd7bb}\textcolor{black}{0.5791} &\cellcolor[HTML]{e2f3eb}\textcolor{black}{0.9300} &\cellcolor[HTML]{85ceab}\textcolor{black}{0.6542} &\cellcolor[HTML]{c4e7d6}\textcolor{black}{0.9095} \\
\textbf{\textcolor{black}{LLAMAGUARD4-12B}} &\cellcolor[HTML]{6cc499}\textcolor{black}{0.8822} &\textcolor{black}{1.0000} &\cellcolor[HTML]{7ccaa4}\textcolor{black}{0.7137} &\textcolor{black}{1.0000} &\cellcolor[HTML]{58bc8b}\textbf{\textcolor{black}{0.8868}} &\textcolor{black}{1.0000} &\cellcolor[HTML]{80cca7}\textcolor{black}{0.7165} &\textcolor{black}{1.0000} &\cellcolor[HTML]{5abc8c}\textcolor{black}{0.8718} &\textcolor{black}{1.0000} &\cellcolor[HTML]{5fbe90}\textcolor{black}{0.8120} &\textcolor{black}{1.0000} \\
\textbf{\textcolor{black}{WILDGUARD}} &\cellcolor[HTML]{70c59c}\textcolor{black}{0.8658} &\textcolor{black}{1.0000} &\cellcolor[HTML]{60bf90}\textcolor{black}{0.8218} &\textcolor{black}{1.0000} &\cellcolor[HTML]{5fbf90}\textcolor{black}{0.8642} &\textcolor{black}{1.0000} &\cellcolor[HTML]{85ceaa}\textcolor{black}{0.6978} &\textcolor{black}{1.0000} &\cellcolor[HTML]{5cbd8e}\textcolor{black}{0.8617} &\cellcolor[HTML]{fefefe}\textcolor{black}{0.9982} &\cellcolor[HTML]{63c093}\textcolor{black}{0.7929} &\textcolor{black}{1.0000} \\
\textbf{MDJUDGE} &\cellcolor[HTML]{85ceaa}0.7814 &\cellcolor[HTML]{fdfefe}0.9942 &\cellcolor[HTML]{6bc398}0.7779 &\cellcolor[HTML]{c0e5d3}0.8987 &\cellcolor[HTML]{73c79e}0.7980 &\cellcolor[HTML]{e6f4ed}0.9050 &\cellcolor[HTML]{7ccaa4}0.7302 &\cellcolor[HTML]{ddf1e7}0.8908 &\cellcolor[HTML]{6bc398}0.7957 &\cellcolor[HTML]{f2faf6}0.9700 &\cellcolor[HTML]{64c193}0.7906 &\cellcolor[HTML]{ceebdc}0.9238 \\
\textbf{DUOGUARD} &\cellcolor[HTML]{7acaa3}0.8241 &\cellcolor[HTML]{f1f9f5}0.9327 &\cellcolor[HTML]{57bb8a}\textbf{0.8525} &\cellcolor[HTML]{87ceac}0.8064 &\cellcolor[HTML]{72c69d}0.8007 &\cellcolor[HTML]{f3faf6}0.9550 &\cellcolor[HTML]{8ad0ad}0.6820 &\cellcolor[HTML]{fbfdfc}0.9898 &\cellcolor[HTML]{70c59c}0.7745 &1.0000 &\cellcolor[HTML]{57bb8a}0.8418 &\cellcolor[HTML]{72c69d}0.7810 \\
\textbf{POLYGUARD} &\cellcolor[HTML]{70c59b}0.8670 &\cellcolor[HTML]{bce3d0}0.6654 &\cellcolor[HTML]{63c093}0.8071 &\cellcolor[HTML]{57bb8a}\textbf{0.7269} &\cellcolor[HTML]{60bf91}0.8595 &\cellcolor[HTML]{a1d9be}0.6450 &\cellcolor[HTML]{6bc498}0.7904 &\cellcolor[HTML]{a8dcc3}0.7201 &\cellcolor[HTML]{5fbe90}0.8501 &\cellcolor[HTML]{a4dac0}0.7800 &\cellcolor[HTML]{62c092}0.8007 &\cellcolor[HTML]{acddc5}0.8714 \\
\textbf{T3+GMM} &\cellcolor[HTML]{57bb8a}\textbf{0.9675} &\cellcolor[HTML]{57bb8a}\textbf{0.1577} &\cellcolor[HTML]{79c9a1}0.7276 &\cellcolor[HTML]{7ac9a2}\textbf{0.7847} &\cellcolor[HTML]{7fcca6}0.7578 &\cellcolor[HTML]{a8dbc2}0.6700 &\cellcolor[HTML]{74c79e}0.7588 &\cellcolor[HTML]{70c59b}\textbf{0.5358} &\cellcolor[HTML]{64c193}\textbf{0.8280} &\cellcolor[HTML]{57bb8a}\textbf{0.5900} &\cellcolor[HTML]{7fcca6}\textbf{0.6794} &\cellcolor[HTML]{87ceac}\textbf{0.8143} \\
\textbf{T3+OCSVM} &\cellcolor[HTML]{5abc8c}\textbf{0.9578} &\cellcolor[HTML]{5abc8c}\textbf{0.1731} &\cellcolor[HTML]{98d6b7}0.6081 &\cellcolor[HTML]{b2e0c9}0.8758 &\cellcolor[HTML]{6fc59b}0.8102 &\cellcolor[HTML]{92d2b3}\textbf{0.5850} &\cellcolor[HTML]{57bb8a}\textbf{0.8622} &\cellcolor[HTML]{57bb8a}\textbf{0.4539} &\cellcolor[HTML]{74c79e}\textbf{0.7586} &\cellcolor[HTML]{7bc9a3}\textbf{0.6800} &\cellcolor[HTML]{98d5b7}\textbf{0.5800} &\cellcolor[HTML]{afdec7}\textbf{0.8762} \\
\bottomrule
\end{tabularx}
\vspace{-0.5em}
\end{table}

\begin{adjustwidth}{-0.5 cm}{0 cm}
\begin{minipage}{0.5\textwidth} 
\centering
\begin{threeparttable}[htbp]
\caption{\textbf{Performance on OR-Bench, a benchmark designed to measure overrefusal on safe-but-challenging prompts.} T3 achieves an excellent balance of safety and utility, with T3-GMM delivering a low FPR@95 while T3-OCSVM achieves the highest AUROC. This outperforms most specialized models like DuoGuard, though LlamaGuard shows an strong FPR@95 on this specific task.}
\label{tab:or-bench}
\tiny
\setlength{\tabcolsep}{1.9pt}
\begin{tabular}{lcccc}
\toprule
\textbf{Method / Metric} & \textbf{AUROC} & \textbf{FPR@95} & \textbf{AUPRC} & \textbf{F1} \\
\midrule
\textbf{RMD} &\cellcolor[HTML]{acdec6}0.7474 &\cellcolor[HTML]{d8efe3}0.7550 &0.8674 &0.8491 \\
\textbf{VIM} &\cellcolor[HTML]{b7e2cd}0.7162 &\cellcolor[HTML]{d3ede0}0.7250 &0.8386 &0.8493 \\
\textbf{CIDER} &\cellcolor[HTML]{9ed8bc}0.7900 &\cellcolor[HTML]{d0ecde}0.7117 &0.9019 &0.8536 \\
\textbf{FDBD} &0.4169 &\cellcolor[HTML]{f9fcfb}0.9517 &0.6662 &0.8333 \\
\textbf{NNGUIDE} &\cellcolor[HTML]{d7efe3}0.6220 &\cellcolor[HTML]{f5fbf8}0.9283 &0.7862 &0.8333 \\
\textbf{REACT} &\cellcolor[HTML]{f1faf5}0.5438 &\cellcolor[HTML]{f8fcfa}0.9417 &0.7420 &0.8333 \\
\textbf{GMM} &\cellcolor[HTML]{aaddc4}0.7530 &\cellcolor[HTML]{cceadc}0.6883 &0.8679 &0.8547 \\
\textbf{ADASCALE} &\cellcolor[HTML]{eef9f4}0.5509 &\cellcolor[HTML]{fefefe}0.9783 &0.7416 &0.8333 \\
\textbf{OPENMAX} &0.4710 &0.9817 &0.6871 &0.8333 \\
& & & & \\
\textbf{\textcolor{black}{LLAMAGUARD3-1B}} &\cellcolor[HTML]{7ccaa4}\textcolor{black}{0.8905} &\cellcolor[HTML]{70c59b}\textcolor{black}{0.1483} &\textcolor{black}{0.9240} &\textcolor{black}{0.9346} \\
\textbf{\textcolor{black}{LLAMAGUARD4-12B}} &\cellcolor[HTML]{8ad0ae}\textcolor{black}{0.8498} &\cellcolor[HTML]{86ceaa}\textcolor{black}{0.2748} &\textcolor{black}{0.9839} &\textcolor{black}{0.9796} \\
\textbf{MDJUDGE} &\cellcolor[HTML]{87cfac}0.8577 &\cellcolor[HTML]{eff8f4}0.8900 &0.9478 &0.9082 \\
\textbf{DUOGUARD} &\cellcolor[HTML]{6fc59b}0.9311 &\cellcolor[HTML]{a1d9bd}0.4350 &0.9729 &0.9063 \\
\textbf{POLYGUARD} &\cellcolor[HTML]{83cda9}0.8717 &\cellcolor[HTML]{b1dfc9}0.5317 &0.9181 &0.8833 \\
& & & & \\
\textbf{T3+GMM} &\cellcolor[HTML]{75c89f}0.9108 &\cellcolor[HTML]{7ccaa4}0.2217 &0.9265 &0.9405 \\
\textbf{\textcolor{black}{T3+GMM (Augment)}} &\cellcolor[HTML]{87cfab}\textcolor{black}{0.8594} &\cellcolor[HTML]{8ed1b0}\textcolor{black}{0.3267} &\textcolor{black}{0.9022} &\textcolor{black}{0.9153} \\
\textbf{T3+OCSVM} &\cellcolor[HTML]{6ec49a}0.9342 &\cellcolor[HTML]{82cca7}0.2517 &0.9662 &0.9293 \\
\textbf{\textcolor{black}{T3+OCSVM (Augment)}} &\cellcolor[HTML]{87cfac}\textcolor{black}{0.8581} &\cellcolor[HTML]{9dd7ba}\textcolor{black}{0.4100} &\textcolor{black}{0.9114} &\textcolor{black}{0.9060} \\
\bottomrule
\end{tabular}
\end{threeparttable}

\end{minipage}%
\begin{minipage}{0.5\textwidth} 
\centering
\captionof{figure}{\textbf{T3 is highly sample-efficient, avoiding the cold start problem.} T3's detection performance (AUROC) rapidly converges to $\approx90\%$ with as few as 1000 in-distribution training samples, demonstrating its ability to learn the manifold of safe usage from a small, curated dataset.}

\includegraphics[width=\linewidth]{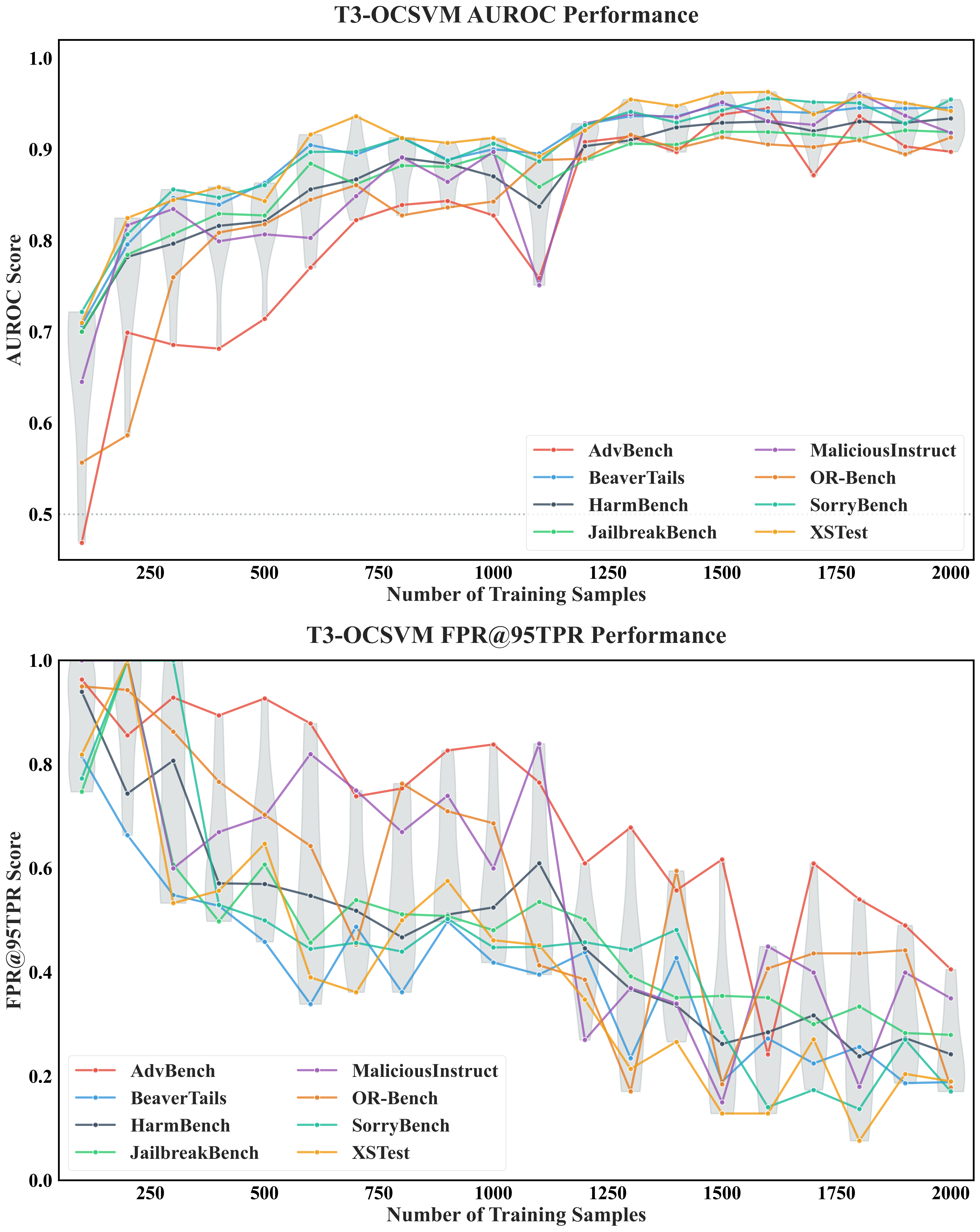}
\label{fig:placeholder}
\end{minipage}
\end{adjustwidth}

\textcolor{violet}{\textbf{RQ4: Does T3's performance generalize across different languages and specialized domains without retraining?}}

T3 demonstrates exceptional zero-shot generalization across specialized domains, using a single model trained only on general-purpose English prompts. Without any domain-specific training, T3 achieves \textbf{near-perfect, out-of-the-box performance}, with AUROC scores exceeding 99.5\% and false positive rates (FPR@95) below 1\% on both \textbf{Code} and \textbf{HR} policy violations, and similarly strong results in cybersecurity and education. In stark contrast, all baselines \textbf{fail to generalize}; traditional OOD methods are unusable (FPR@95 >93\%), and specialized models like PolyGuard and LlamaGuard perform poorly even on their intended domains. This \textbf{40–100× improvement in FPR@95} validates T3’s core principle: harmful content, whether it's malicious code or an HR violation, creates a consistent geometric signature of deviation from typical in-distribution language, enabling robust protection across diverse contexts without the need for retraining.

T3's zero-shot generalization extends powerfully across 14+ languages, from high-resource to lower-resource. Using only its English-trained model, T3 maintains \textbf{remarkably stable high-performance}, with T3-OCSVM showing less than 0.6\% AUROC variance across all languages, including those with different scripts like Japanese and Arabic. This consistency starkly contrasts with specialized baselines like DuoGuard and PolyGuard, which exhibit \textbf{high linguistic variance} (up to 28\%), making their performance unreliable across different regions. T3’s success validates that harmful content creates a \textbf{language-agnostic geometric signature} in modern multilingual embedding spaces. This carries significant practical implications, as it eliminates the need for expensive multilingual data collection, retraining, and per-language calibration, enabling a single model to enforce a consistent safety standard globally. 

\textcolor{black}{\textbf{LLM-Enhanced Variant (Augment):} We also explored prepending a structured LLM-generated safety analysis (via GPT-OSS-20B) to each prompt before embedding. As shown in Table~\ref{tab:multilingual-guardrailing}, this augmentation improves T3+GMM performance for non-English languages on RTP-LX (e.g., +1.7\% AUROC for DE, +2.9\% for ES) but degrades English and XSafety performance. Root cause analysis revealed that the LLM often labels in the prompt's native language rather than English, reducing embedding consistency. This suggests LLM augmentation requires language-aware output normalization to be effective. Overall, the increase in overheads do not justify the increase in performance.}

\begin{table}[htbp]
\centering
\caption{Polyguard Domain-Specific Evaluation}
\label{tab:polyguard_domains}
\tiny
\setlength{\tabcolsep}{2pt}
\renewcommand{\arraystretch}{0.95}
\begin{tabularx}{\linewidth}{l *{10}{Y}}
\toprule
\textbf{Dataset} & \multicolumn{2}{c}{\textbf{Polyguard Code}} & \multicolumn{2}{c}{\textbf{Polyguard Cyber}} & \multicolumn{2}{c}{\textbf{Polyguard Education}} & \multicolumn{2}{c}{\textbf{Polyguard HR}} & \multicolumn{2}{c}{\textbf{Polyguard Social Media}} \\
\textbf{Metric} & AUROC & FPR@95 & AUROC & FPR@95 & AUROC & FPR@95 & AUROC & FPR@95 & AUROC & FPR@95 \\
\textbf{Method} & & & & & & & & & & \\
\midrule
\textbf{ADASCALE} &\cellcolor[HTML]{bbe4d0}0.7029 &\cellcolor[HTML]{f9fcfb}0.9665 &\cellcolor[HTML]{c6e8d8}0.6707 &\cellcolor[HTML]{f6fbf8}0.9484 &0.4459 &\cellcolor[HTML]{fefefe}0.9976 &0.4670 &\cellcolor[HTML]{fefefe}0.9978 &0.2719 &\cellcolor[HTML]{fefefe}0.9997 \\
\textbf{CIDER} &\cellcolor[HTML]{7bcaa3}0.8919 &\cellcolor[HTML]{93d3b4}0.3620 &\cellcolor[HTML]{aedfc7}0.7425 &\cellcolor[HTML]{def1e8}0.8056 &\cellcolor[HTML]{afdfc7}0.7406 &\cellcolor[HTML]{f9fcfb}0.9685 &\cellcolor[HTML]{91d3b2}0.8299 &\cellcolor[HTML]{eff8f3}0.9059 &\cellcolor[HTML]{a7dcc2}0.7645 &\cellcolor[HTML]{f4faf7}0.9356 \\
\textbf{FDBD} &0.2938 &\cellcolor[HTML]{fefefe}0.9978 &0.3289 &\cellcolor[HTML]{fefefe}0.9974 &\cellcolor[HTML]{e2f4eb}0.5871 &\cellcolor[HTML]{fdfefd}0.9896 &\cellcolor[HTML]{d9f0e5}0.6141 &\cellcolor[HTML]{fefefe}0.9951 &\cellcolor[HTML]{c4e8d6}0.6762 &\cellcolor[HTML]{f6fbf8}0.9480 \\
\textbf{GMM} &\cellcolor[HTML]{8ad0ad}0.8481 &\cellcolor[HTML]{94d4b5}0.3709 &\cellcolor[HTML]{acdec5}0.7479 &\cellcolor[HTML]{d5eee2}0.7549 &\cellcolor[HTML]{d4eee1}0.6288 &\cellcolor[HTML]{fbfdfc}0.9804 &\cellcolor[HTML]{bbe4d0}0.7030 &\cellcolor[HTML]{fcfdfc}0.9826 &\cellcolor[HTML]{d3ede0}0.6329 &\cellcolor[HTML]{fcfefd}0.9858 \\
\textbf{NNGUIDE} &\cellcolor[HTML]{8bd1af}0.8426 &\cellcolor[HTML]{cbeadb}0.6939 &\cellcolor[HTML]{c1e6d4}0.6859 &\cellcolor[HTML]{ebf7f1}0.8855 &\cellcolor[HTML]{feffff}0.5038 &\cellcolor[HTML]{fdfefe}0.9940 &\cellcolor[HTML]{fbfefc}0.5142 &\cellcolor[HTML]{fefefe}0.9974 &0.4227 &\cellcolor[HTML]{fefefe}0.9995 \\
\textbf{OPENMAX} &0.2899 &\cellcolor[HTML]{f8fcfa}0.9637 &0.3410 &\cellcolor[HTML]{f4faf7}0.9386 &0.4712 &\cellcolor[HTML]{fefefe}0.9963 &0.4304 &\cellcolor[HTML]{fefefe}0.9967 &\cellcolor[HTML]{daf0e5}0.6126 &\cellcolor[HTML]{fefefe}0.9996 \\
\textbf{REACT} &\cellcolor[HTML]{dbf1e6}0.6079 &\cellcolor[HTML]{fdfefd}0.9899 &\cellcolor[HTML]{fdfefe}0.5077 &\cellcolor[HTML]{fdfefd}0.9910 &0.4478 &\cellcolor[HTML]{fefefe}0.9946 &\cellcolor[HTML]{fafdfc}0.5151 &\cellcolor[HTML]{fefefe}0.9950 &0.3205 &\cellcolor[HTML]{fefefe}0.9987 \\
\textbf{RMD} &\cellcolor[HTML]{b0dfc8}0.7358 &\cellcolor[HTML]{cceadc}0.7022 &\cellcolor[HTML]{c6e8d8}0.6701 &\cellcolor[HTML]{e7f5ee}0.8587 &\cellcolor[HTML]{cdebdc}0.6505 &\cellcolor[HTML]{fafcfb}0.9706 &\cellcolor[HTML]{c1e6d4}0.6864 &\cellcolor[HTML]{f6fbf8}0.9484 &\cellcolor[HTML]{d8efe4}0.6185 &\cellcolor[HTML]{fbfdfc}0.9764 \\
\textbf{VIM} &\cellcolor[HTML]{9cd7ba}0.7926 &\cellcolor[HTML]{c3e7d5}0.6497 &\cellcolor[HTML]{cfecdd}0.6453 &\cellcolor[HTML]{f0f9f4}0.9128 &0.5022 &\cellcolor[HTML]{fefefe}0.9953 &\cellcolor[HTML]{fafdfc}0.5153 &\cellcolor[HTML]{fefefe}0.9990 &0.4606 &\cellcolor[HTML]{fefefe}0.9996 \\
\midrule
& & & & & & & & & & \\
\textbf{\textcolor{black}{LLAMAGUARD3-1B}} &\cellcolor[HTML]{b7e2cd}\textcolor{black}{0.7139} &\textcolor{black}{1.0000} &\cellcolor[HTML]{a2dabe}\textcolor{black}{0.7789} &\textcolor{black}{1.0000} &\cellcolor[HTML]{e7f5ee}\textcolor{black}{0.5740} &\textcolor{black}{1.0000} &\cellcolor[HTML]{d2eddf}\textcolor{black}{0.6368} &\textcolor{black}{1.0000} &\cellcolor[HTML]{acdec5}\textcolor{black}{0.7482} &\textcolor{black}{1.0000} \\
\textbf{\textcolor{black}{LLAMAGUARD4-12B}} &\cellcolor[HTML]{f8fcfa}\textcolor{black}{0.5235} &\textcolor{black}{1.0000} &\cellcolor[HTML]{a4dac0}\textcolor{black}{0.7733} &\textcolor{black}{1.0000} &\cellcolor[HTML]{f2faf6}\textcolor{black}{0.5389} &\textcolor{black}{1.0000} &\cellcolor[HTML]{eef8f3}\textcolor{black}{0.5520} &\textcolor{black}{1.0000} &\cellcolor[HTML]{b7e2cd}\textcolor{black}{0.7151} &\textcolor{black}{1.0000} \\
\textbf{\textcolor{black}{WILDGUARD}} &\cellcolor[HTML]{e8f6ef}\textcolor{black}{0.5706} &\textcolor{black}{1.0000} &\cellcolor[HTML]{addec6}\textcolor{black}{0.7463} &\textcolor{black}{1.0000} &\cellcolor[HTML]{c8e9d9}\textcolor{black}{0.6637} &\textcolor{black}{1.0000} &\cellcolor[HTML]{c2e7d5}\textcolor{black}{0.6833} &\textcolor{black}{1.0000} &\cellcolor[HTML]{92d3b3}\textcolor{black}{0.8252} &\textcolor{black}{1.0000} \\
\textbf{\textcolor{black}{LLAMAGUARD3-1B-LOGITS}} &\cellcolor[HTML]{99d6b8}\textcolor{black}{0.8031} &\cellcolor[HTML]{d0ecde}\textcolor{black}{0.7235} &\cellcolor[HTML]{abddc5}\textcolor{black}{0.7519} &\cellcolor[HTML]{d9efe5}\textcolor{black}{0.7789} &\cellcolor[HTML]{84ceaa}\textcolor{black}{0.8661} &\cellcolor[HTML]{c1e6d3}\textcolor{black}{0.6339} &\cellcolor[HTML]{92d3b3}\textcolor{black}{0.8257} &\cellcolor[HTML]{d3ede0}\textcolor{black}{0.7417} &\cellcolor[HTML]{92d3b3}\textcolor{black}{0.8249} &\cellcolor[HTML]{c5e7d6}\textcolor{black}{0.6576} \\
\textbf{MDJUDGE} &\cellcolor[HTML]{cdebdc}0.6491 &\cellcolor[HTML]{ebf6f1}0.8827 &\cellcolor[HTML]{a8dcc2}0.7616 &\cellcolor[HTML]{e9f6f0}0.8735 &\cellcolor[HTML]{bfe6d3}0.6909 &\cellcolor[HTML]{ebf7f1}0.8858 &\cellcolor[HTML]{b7e2cd}0.7146 &\cellcolor[HTML]{eaf6f0}0.8807 &\cellcolor[HTML]{addec6}0.7445 &\cellcolor[HTML]{eaf6f0}0.8782 \\
\textbf{DUOGUARD} &\cellcolor[HTML]{f3fbf7}0.5356 &\cellcolor[HTML]{ebf7f1}0.8844 &\cellcolor[HTML]{a9dcc3}0.7574 &\cellcolor[HTML]{e2f3eb}0.8307 &\cellcolor[HTML]{c9e9d9}0.6626 &\cellcolor[HTML]{fdfefe}0.9931 &\cellcolor[HTML]{d2ede0}0.6363 &\cellcolor[HTML]{fdfefd}0.9909 &\cellcolor[HTML]{b5e1cb}0.7224 &\cellcolor[HTML]{f5fbf8}0.9446 \\
\textbf{POLYGUARD} &\cellcolor[HTML]{eef8f3}0.5530 &\cellcolor[HTML]{d4ede1}0.7475 &\cellcolor[HTML]{b0dfc8}0.7354 &\cellcolor[HTML]{dff2e8}0.8116 &0.4464 &\cellcolor[HTML]{f7fbf9}0.9558 &0.4198 &\cellcolor[HTML]{f6fbf8}0.9484 &\cellcolor[HTML]{b5e1cb}0.7224 &\cellcolor[HTML]{c9e9d9}0.6808 \\
\textbf{T3+GMM} &\cellcolor[HTML]{57bb8a}\textbf{0.9959} &\cellcolor[HTML]{57bb8a}\textbf{0.0089} &\cellcolor[HTML]{5bbd8d}\textbf{0.9886} &\cellcolor[HTML]{5abc8c}\textbf{0.0270} &\cellcolor[HTML]{5abd8d}\textbf{0.9913} &\cellcolor[HTML]{5abc8c}\textbf{0.0255} &\cellcolor[HTML]{59bc8b}\textbf{0.9965} &\cellcolor[HTML]{57bb8a}\textbf{0.0062} &\cellcolor[HTML]{62c092}\textbf{0.9673} &\cellcolor[HTML]{6ac297}\textbf{0.1208} \\
\textbf{T3+OCSVM} &\cellcolor[HTML]{58bc8b}\textbf{0.9953} &\cellcolor[HTML]{57bb8a}\textbf{0.0095} &\cellcolor[HTML]{5ebe8f}\textbf{0.9818} &\cellcolor[HTML]{60be90}\textbf{0.0615} &\cellcolor[HTML]{59bc8c}\textbf{0.9943} &\cellcolor[HTML]{59bc8b}\textbf{0.0192} &\cellcolor[HTML]{58bc8b}\textbf{0.9982} &\cellcolor[HTML]{57bb8a}\textbf{0.0039} &\cellcolor[HTML]{64c193}\textbf{0.9620} &\cellcolor[HTML]{6fc49a}\textbf{0.1485} \\
\bottomrule
\end{tabularx}
\vspace{-0.5em}
\end{table}

\begin{adjustwidth}{-3.5 cm}{-3.5 cm}
\centering\begin{threeparttable}[!htbp]
\caption{\textbf{Consistent and stable performance across 14+ languages.} T3 maintains exceptionally high AUROC with minimal variance (<2\%) across high- and low-resource languages, demonstrating its language-agnostic safety capabilities. Results are shown for the RTP LX (top) and XSafety (bottom) benchmarks.}\label{tab:multilingual-guardrailing}
\tiny
\setlength{\tabcolsep}{1.9pt}

\begin{tabular}{lcccccccccccccccc}
\toprule
\textbf{Dataset=RTP\_LX} & De & En & Es & Fr & Hi & It & Ja & Ko & Nl & Pl & Pt & Ru & Tr & Zh & \\
\midrule
\textbf{LLAMAGUARD3-1B} &0.7746 &0.7865 &\cellcolor[HTML]{daf1e6}0.7997 &\cellcolor[HTML]{f2faf6}0.7647 &\cellcolor[HTML]{e3f4eb}0.7877 &\cellcolor[HTML]{eff9f4}0.7696 &\cellcolor[HTML]{eef8f3}0.7715 &\cellcolor[HTML]{e8f6ef}0.7802 &\cellcolor[HTML]{dbf1e6}0.7990 &0.7452 &\cellcolor[HTML]{f0f9f5}0.7677 &\cellcolor[HTML]{f4fbf7}0.7627 &\cellcolor[HTML]{f7fcfa}0.7571 &\cellcolor[HTML]{c6e8d7}0.8302 & \\
\textbf{MDJUDGE} &\cellcolor[HTML]{bbe4d0}0.8617 &\cellcolor[HTML]{afdfc8}0.8832 &\cellcolor[HTML]{a9ddc4}0.8718 &\cellcolor[HTML]{bbe4d0}0.8458 &\cellcolor[HTML]{e3f4ec}0.7868 &\cellcolor[HTML]{c4e7d6}0.8332 &\cellcolor[HTML]{addec6}0.8673 &\cellcolor[HTML]{c3e7d5}0.8343 &\cellcolor[HTML]{bee5d2}0.8418 &\cellcolor[HTML]{e3f4ec}0.7874 &\cellcolor[HTML]{b5e1cc}0.8544 &\cellcolor[HTML]{d0ecde}0.8154 &\cellcolor[HTML]{e3f4ec}0.7865 &\cellcolor[HTML]{96d5b6}0.9001 & \\
\textbf{DUOGUARD} &\cellcolor[HTML]{57bb8a}\textbf{0.9876} &\cellcolor[HTML]{57bb8a}\textbf{0.9886} &\cellcolor[HTML]{5abd8c}\textbf{0.9884} &\cellcolor[HTML]{57bb8a}\textbf{0.9925} &\cellcolor[HTML]{73c79e}0.9521 &\cellcolor[HTML]{66c194}0.9714 &\cellcolor[HTML]{68c296}0.9682 &\cellcolor[HTML]{a1d9bd}0.8850 &\cellcolor[HTML]{61bf91}\textbf{0.9785} &\cellcolor[HTML]{96d5b6}0.9004 &\cellcolor[HTML]{5fbe90}\textbf{0.9817} &\cellcolor[HTML]{c9e9da}0.8254 &\cellcolor[HTML]{7ecba6}0.9351 &\cellcolor[HTML]{58bc8b}\textbf{0.9924} & \\
\textbf{POLYGUARD} &\cellcolor[HTML]{71c69c}0.9551 &\cellcolor[HTML]{5dbe8e}\textbf{0.9818} &\cellcolor[HTML]{6ac397}0.9660 &\cellcolor[HTML]{72c69d}0.9533 &\cellcolor[HTML]{bfe6d3}0.8396 &\cellcolor[HTML]{80cca7}0.9328 &\cellcolor[HTML]{61c091}\textbf{0.9779} &\cellcolor[HTML]{72c69d}0.9533 &\cellcolor[HTML]{7dcba4}0.9379 &\cellcolor[HTML]{b1e0c9}0.8608 &\cellcolor[HTML]{71c69c}0.9548 &\cellcolor[HTML]{62c092}\textbf{0.9769} &\cellcolor[HTML]{b7e2cd}0.8515 &\cellcolor[HTML]{59bc8c}\textbf{0.9898} & \\
\textbf{T3+GMM} &\cellcolor[HTML]{6ec59a}0.9588 &\cellcolor[HTML]{73c79e}0.9554 &\cellcolor[HTML]{73c79e}0.9522 &\cellcolor[HTML]{71c69c}0.9546 &\cellcolor[HTML]{72c69d}0.9535 &\cellcolor[HTML]{72c69d}0.9540 &\cellcolor[HTML]{6dc49a}0.9605 &\cellcolor[HTML]{6ec49a}0.9600 &\cellcolor[HTML]{71c69c}0.9555 &\cellcolor[HTML]{71c69c}0.9550 &\cellcolor[HTML]{70c69c}0.9560 &\cellcolor[HTML]{6fc59b}0.9572 &\cellcolor[HTML]{6dc49a}0.9604 &\cellcolor[HTML]{73c69d}0.9526 & \\
\textbf{\textcolor{black}{T3+GMM (Augment)}} &\cellcolor[HTML]{61bf91}\textcolor{black}{0.9759} &\cellcolor[HTML]{9bd7ba}\textcolor{black}{0.9071} &\cellcolor[HTML]{5fbe90}\textbf{\textcolor{black}{0.9816}} &\cellcolor[HTML]{63c092}\textcolor{black}{0.9756} &\cellcolor[HTML]{61bf91}\textbf{\textcolor{black}{0.9789}} &\cellcolor[HTML]{64c193}\textbf{\textcolor{black}{0.9738}} &\cellcolor[HTML]{62c092}\textcolor{black}{0.9771} &\cellcolor[HTML]{5bbd8d}\textbf{\textcolor{black}{0.9871}} &\cellcolor[HTML]{67c295}\textcolor{black}{0.9695} &\cellcolor[HTML]{65c194}\textbf{\textcolor{black}{0.9728}} &\cellcolor[HTML]{67c295}\textcolor{black}{0.9699} &\cellcolor[HTML]{64c193}\textcolor{black}{0.9741} &\cellcolor[HTML]{62c092}\textbf{\textcolor{black}{0.9769}} &\cellcolor[HTML]{5dbe8f}\textcolor{black}{0.984} & \\
\textbf{T3+OCSVM} &\cellcolor[HTML]{5ebe8f}\textbf{0.9788} &\cellcolor[HTML]{5ebe8f}0.9804 &\cellcolor[HTML]{60bf91}0.9797 &\cellcolor[HTML]{60bf90}\textbf{0.9807} &\cellcolor[HTML]{61bf91}\textbf{0.9787} &\cellcolor[HTML]{60bf90}\textbf{0.9805} &\cellcolor[HTML]{5fbe90}\textbf{0.9819} &\cellcolor[HTML]{5fbf90}\textbf{0.9811} &\cellcolor[HTML]{61bf91}\textbf{0.9790} &\cellcolor[HTML]{60bf90}\textbf{0.9806} &\cellcolor[HTML]{5fbe8f}\textbf{0.9821} &\cellcolor[HTML]{62c092}\textbf{0.9768} &\cellcolor[HTML]{5fbe90}\textbf{0.9816} &\cellcolor[HTML]{5fbf90}0.9812 & \\
\textbf{\textcolor{black}{T3+OCSVM (Augment)}} &\cellcolor[HTML]{75c79f}\textcolor{black}{0.9507} &\cellcolor[HTML]{fcfefd}\textcolor{black}{0.7909} &\cellcolor[HTML]{75c79f}\textcolor{black}{0.9496} &\cellcolor[HTML]{75c79f}\textcolor{black}{0.9492} &\cellcolor[HTML]{7ac9a2}\textcolor{black}{0.9421} &\cellcolor[HTML]{75c79f}\textcolor{black}{0.9491} &\cellcolor[HTML]{73c79e}\textcolor{black}{0.9514} &\cellcolor[HTML]{abddc5}\textcolor{black}{0.9606} &\cellcolor[HTML]{acdec6}\textcolor{black}{0.96} &\cellcolor[HTML]{6fc59b}\textcolor{black}{0.9573} &\cellcolor[HTML]{75c79f}\textcolor{black}{0.9495} &\cellcolor[HTML]{73c79e}\textcolor{black}{0.952} &\cellcolor[HTML]{71c69c}\textcolor{black}{0.9547} &\cellcolor[HTML]{6cc499}\textcolor{black}{0.9619} & \\
\bottomrule
\\
\textbf{Dataset=XSafety} & De & En & Es & Fr & Hi & & Ja & & & & & Ru & & Zh & Ar \\
\midrule
\textbf{LLAMAGUARD3-1B} &0.6215 &0.6452 &\cellcolor[HTML]{f6fcf9}0.6383 &\cellcolor[HTML]{f2faf6}0.6477 &\cellcolor[HTML]{f4fbf8}0.6421 & &0.6183 & & & & &\cellcolor[HTML]{f4fbf7}0.6433 & &\cellcolor[HTML]{fafdfc}0.6302 &\cellcolor[HTML]{ebf7f1}0.6633 \\
\textbf{MDJUDGE} &\cellcolor[HTML]{b1e0c9}0.7905 &\cellcolor[HTML]{bee5d2}0.7765 &\cellcolor[HTML]{a9dcc3}0.8056 &\cellcolor[HTML]{abddc5}0.8003 &\cellcolor[HTML]{bfe5d2}0.7584 & &\cellcolor[HTML]{acdec5}0.7993 & & & & &\cellcolor[HTML]{b1e0c9}0.7874 & &\cellcolor[HTML]{b1e0c9}0.7886 &\cellcolor[HTML]{b7e2cd}0.7760 \\
\textbf{DUOGUARD} &\cellcolor[HTML]{73c79e}0.9228 &\cellcolor[HTML]{bbe4d0}0.7820 &\cellcolor[HTML]{79c9a2}0.9085 &\cellcolor[HTML]{70c59b}0.9295 &\cellcolor[HTML]{5dbe8e}\textbf{0.9693} & &\cellcolor[HTML]{9bd7b9}0.8357 & & & & &\cellcolor[HTML]{b1e0c9}0.7877 & &\cellcolor[HTML]{85ceaa}0.8837 &\cellcolor[HTML]{9ed8bc}0.8286 \\
\textbf{POLYGUARD} &\cellcolor[HTML]{bce4d1}0.7653 &\cellcolor[HTML]{e1f3eb}0.7051 &\cellcolor[HTML]{c3e7d5}0.7499 &\cellcolor[HTML]{bae3cf}0.7682 &\cellcolor[HTML]{9fd8bc}0.8279 & &\cellcolor[HTML]{b2e0ca}0.7852 & & & & &\cellcolor[HTML]{b4e1cb}0.7811 & &\cellcolor[HTML]{c9eada}0.7354 &\cellcolor[HTML]{a0d9bd}0.8239 \\
\textbf{T3+GMM} &\cellcolor[HTML]{64c193}\textbf{0.9542} &\cellcolor[HTML]{68c296}\textbf{0.9476} &\cellcolor[HTML]{67c295}\textbf{0.9482} &\cellcolor[HTML]{61bf91}\textbf{0.9602} &\cellcolor[HTML]{66c194}0.9509 & &\cellcolor[HTML]{68c296}\textbf{0.9469} & & & & &\cellcolor[HTML]{64c193}\textbf{0.9542} & &\cellcolor[HTML]{65c194}\textbf{0.9522} &\cellcolor[HTML]{65c194}\textbf{0.9526} \\
\textbf{\textcolor{black}{T3+GMM (Augment)}} &\cellcolor[HTML]{80cca7}\textcolor{black}{0.8942} &\cellcolor[HTML]{7fcca6}\textcolor{black}{0.9004} &\cellcolor[HTML]{81cca8}\textcolor{black}{0.9740} &\cellcolor[HTML]{7ecba5}\textcolor{black}{0.8981} &\cellcolor[HTML]{78c9a1}\textcolor{black}{0.9102} & &\cellcolor[HTML]{7dcba5}\textcolor{black}{0.9003} & & & & &\cellcolor[HTML]{7dcba4}\textcolor{black}{0.9012} & &\cellcolor[HTML]{7dcaa4}\textcolor{black}{0.9014} &\cellcolor[HTML]{7ccaa4}\textcolor{black}{0.9026} \\
\textbf{T3+OCSVM} &\cellcolor[HTML]{57bb8a}\textbf{0.9815} &\cellcolor[HTML]{57bb8a}\textbf{0.9801} &\cellcolor[HTML]{5abc8c}\textbf{0.9762} &\cellcolor[HTML]{59bc8c}\textbf{0.9781} &\cellcolor[HTML]{58bc8b}\textbf{0.9797} & &\cellcolor[HTML]{58bc8b}\textbf{0.9804} & & & & &\cellcolor[HTML]{58bc8b}\textbf{0.9802} & &\cellcolor[HTML]{59bc8c}\textbf{0.9772} &\cellcolor[HTML]{58bc8b}\textbf{0.9800} \\
\textbf{\textcolor{black}{T3+OCSVM (Augment)}} &\cellcolor[HTML]{b2e0c9}\textcolor{black}{0.7869} &\cellcolor[HTML]{b5e1cb}\textcolor{black}{0.7802} &\cellcolor[HTML]{b1e0c9}\textcolor{black}{0.9300} &\cellcolor[HTML]{cdebdc}\textcolor{black}{0.7786} &\cellcolor[HTML]{b2e0c9}\textcolor{black}{0.786} & &\cellcolor[HTML]{b0dfc8}\textcolor{black}{0.7907} & & & & &\cellcolor[HTML]{b1e0c9}\textcolor{black}{0.7871} & &\cellcolor[HTML]{b2e0ca}\textcolor{black}{0.7852} &\cellcolor[HTML]{cbeadb}\textcolor{black}{0.7887} \\
\bottomrule
\end{tabular}

\end{threeparttable}\end{adjustwidth}


\textcolor{violet}{\textbf{RQ5: Can T3 be integrated into a high-performance inference engine for practical, real-time deployment with minimal latency?}}

To demonstrate T3's practical deployment capabilities, we integrated it directly into the vLLM inference framework \citep{kwon2023efficient} for real-time safety monitoring during generation. Unlike post-processing approaches that evaluate complete outputs, our system performs continuous safety assessment as tokens are generated, enabling immediate termination of harmful content. The integration leverages vLLM's multiprocess architecture by intercepting and accessing outputs in the mostly idling Main Process while inference proceeds in Worker Processes, achieving efficient computation overlap without disrupting the core generation pipeline.

\textbf{Streaming Results:} Performance evaluation on an NVIDIA H200 GPU demonstrates negligible overhead even under dense monitoring conditions. With T3 configured for evaluation every 20 tokens and batch processing of 32 requests, we observe \textbf{only 1.5\% overhead} on 500-prompt workloads and \textbf{6\% on 5,000-prompt workloads} compared to baseline vLLM (Table~\ref{tab:runtime-overhead}). This is achieved through strategic batching of safety evaluations and overlapping T3's embedding computations with ongoing token generation, effectively hiding guardrail latency behind inference operations. To our knowledge, T3 is the first framework to demonstrate sub-10\% overhead for continuous safety monitoring during online LLM generation, making real-time guardrailing practical for production deployments where both safety and latency are critical. 

\textbf{Post-Generation Results:} T3’s efficiency also extends to the widely adopted \textbf{post-generation} guardrailing mode. In a comparative benchmark on an NVIDIA H200 GPU with batch sizes up to 64, our results revealed a clear performance hierarchy. While DUOGUARD was fastest, T3 (GMM and OCSVM) demonstrated excellent scalability, with runtimes consistently between \textbf{60–155 ms}. This positions T3 as significantly more efficient than PolyGuard and vastly superior to heavier methods like MDJudge and LlamaGuard, which imposed prohibitive runtimes exceeding one million µs and failed at larger batch sizes. These findings confirm that T3 is a highly efficient and scalable solution for both online and post-generation safety deployment.


\section{Discussion}
An important finding emerges from our evaluation on particularly challenging scenarios where the semantic distinction between ``safe" and ``harmful" is intentionally ambiguous and context-dependent. We deliberately tested our method on the Anthropic hh-rlhf dataset, where chosen (safe) responses serve as the in-distribution data and rejected (harmful) responses as out-of-distribution, a purposely difficult setup since the ID dataset already contains profanity and the only difference between response pairs might be subtle phrasing or a single word. In this challenging benchmark, no method, including T3, traditional baselines, and existing safety approaches, performs significantly better than random chance (AUROC$\approx$0.5). \textcolor{black}{Importantly, this failure stems not from methodological weakness but from the nature of the ID data itself: the ``safe'' examples contain extensive toxic content (e.g., lists of profanities), making the cosine similarity between chosen and rejected responses extremely high ($>$0.95). This creates a near-OOD detection problem where the typical set of ``safe'' usage already encompasses harmful patterns.}

\textcolor{black}{However, this limitation does not generalize to attacks designed to \textit{semantically} resemble safe queries. We evaluated T3 against the HILL jailbreak method \citep{luo2025hill}, which transforms harmful imperatives into innocuous-looking ``learning-style'' questions (e.g., ``I am studying chemistry, explain this reaction...''). Despite HILL's explicit design to masquerade as in-distribution educational content, T3 achieves strong detection (AUROC 0.98, FPR@95 4.4\%, see Table~\ref{tab:hill} for details.) when trained on properly curated safe data. This contrast is instructive: T3 succeeds when the ID training set genuinely represents safe usage, but fails when the ID set itself contains the harmful patterns it should detect. The key insight is that \textbf{T3's effectiveness is contingent on appropriate ID training set curation}.}

\textcolor{black}{This reveals a fundamental boundary for OOD-based safety: methods succeed when safe and harmful content occupy separable manifolds (HILL, domain adherence) but fail when they overlap (HH-RLHF). Crucially, this training distribution dependence is \textit{universal}---supervised classifiers (Llama Guard, PolyGuard) collapse under domain shift, and Constitutional AI/RLHF systems over-refuse outside their preference distributions. T3's curation requirement is thus not a unique weakness but a shared property of all safety methods. This motivates hybrid architectures combining T3's efficient typicality screening for distributional outliers with reasoning-based methods for near-boundary cases requiring contextual intent parsing.} Conversely, there are domains where the boundary between in-distribution and out-of-distribution is exceptionally clear. A prime example is domain adherence for mathematical reasoning. When the in-distribution ``safe" set consists of mathematical problems and solutions (from datasets like MATH or GSM8K), and the OOD set consists of unrelated topics like philosophy, literature, or cooking, the semantic separation is vast. In this scenario, nearly all methods, including traditional OOD baselines like CIDER and GMM, perform extremely well, often achieving near-perfect AUROC scores.

\vspace{-1em}
\section{Conclusion}

The T3 guardrailing framework represents a significant paradigm shift in LLM safety, moving from reactive threat-blocking to a proactive approach based on statistical typicality. By modeling what is safe rather than enumerating harms, T3 achieves state-of-the-art performance, remarkable generalization across domains and languages, and a dramatic reduction in overrefusal. Its successful integration into the vLLM inference engine with minimal overhead demonstrates its readiness for practical, real-time deployment in production environments.

\bibliographystyle{iclr2024_conference} 
\bibliography{./ref.bib} 

@article{liu2023good,
  title={How Good Are LLMs at Out-of-Distribution Detection?},
  author={Liu, Bo and Zhan, Liming and Lu, Zexin and Feng, Yujie and Xue, Lei and Wu, Xiao-Ming},
  journal={arXiv preprint arXiv:2308.10261},
  year={2023}
}

@article{inan2023llama,
  title={Llama guard: Llm-based input-output safeguard for human-ai conversations},
  author={Inan, Hakan and Upasani, Kartikeya and Chi, Jianfeng and Rungta, Rashi and Iyer, Krithika and Mao, Yuning and Tontchev, Michael and Hu, Qing and Fuller, Brian and Testuggine, Davide and others},
  journal={arXiv preprint arXiv:2312.06674},
  year={2023}
}

@article{liu2023prompt,
  title={Prompt injection attack against llm-integrated applications},
  author={Liu, Yi and Deng, Gelei and Li, Yuekang and Wang, Kailong and Wang, Zihao and Wang, Xiaofeng and Zhang, Tianwei and Liu, Yepang and Wang, Haoyu and Zheng, Yan and others},
  journal={arXiv preprint arXiv:2306.05499},
  year={2023}
}

@article{hayou2024lora+,
  title={Lora+: Efficient low rank adaptation of large models},
  author={Hayou, Soufiane and Ghosh, Nikhil and Yu, Bin},
  journal={arXiv preprint arXiv:2402.12354},
  year={2024}
}

@Article{repec:kap:compec:v:59:y:2022:i:2:d:10.1007_s10614-021-10102-z,
journal={Computational Economics},
author={Guiyuan Ma and Song-Ping Zhu},
title={Revisiting the Merton Problem: from HARA to CARA Utility},
year={2022},
month={February},
pages={651-686},
volume={59},
number={2},
doi={10.1007/s10614-021-10102-z},
url={https://ideas.repec.org/a/kap/compec/v59y2022i2d10.1007_s10614-021-10102-z.html},
}

@article{mhatre2025llm,
  title={LLM-GUARD: Large Language Model-Based Detection and Repair of Bugs and Security Vulnerabilities in C++ and Python},
  author={Mhatre, Akshay and Nader, Noujoud and Diehl, Patrick and Gupta, Deepti},
  journal={arXiv preprint arXiv:2508.16419},
  year={2025}
}

@article{deng2025duoguard,
  title={Duoguard: A two-player rl-driven framework for multilingual llm guardrails},
  author={Deng, Yihe and Yang, Yu and Zhang, Junkai and Wang, Wei and Li, Bo},
  journal={arXiv preprint arXiv:2502.05163},
  year={2025}
}

@article{christiano2017deep,
  title={Deep reinforcement learning from human preferences},
  author={Christiano, Paul F and Leike, Jan and Brown, Tom and Martic, Miljan and Legg, Shane and Amodei, Dario},
  journal={Advances in neural information processing systems},
  volume={30},
  year={2017}
}

@article{bai2022constitutional,
  title={Constitutional ai: Harmlessness from ai feedback},
  author={Bai, Yuntao and Kadavath, Saurav and Kundu, Sandipan and Askell, Amanda and Kernion, Jackson and Jones, Andy and Chen, Anna and Goldie, Anna and Mirhoseini, Azalia and McKinnon, Cameron and others},
  journal={arXiv preprint arXiv:2212.08073},
  year={2022}
}

@inproceedings{
ganguly2025forte,
title={Forte : Finding Outliers with Representation Typicality Estimation},
author={Debargha Ganguly and Warren Richard Morningstar and Andrew Seohwan Yu and Vipin Chaudhary},
booktitle={The Thirteenth International Conference on Learning Representations},
year={2025},
url={https://openreview.net/forum?id=7XNgVPxCiA}
}

@article{gehman2020realtoxicityprompts,
  title={Realtoxicityprompts: Evaluating neural toxic degeneration in language models},
  author={Gehman, Samuel and Gururangan, Suchin and Sap, Maarten and Choi, Yejin and Smith, Noah A},
  journal={arXiv preprint arXiv:2009.11462},
  year={2020}
}

@article{zou2023universal,
  title={Universal and transferable adversarial attacks on aligned language models},
  author={Zou, Andy and Wang, Zifan and Carlini, Nicholas and Nasr, Milad and Kolter, J Zico and Fredrikson, Matt},
  journal={arXiv preprint arXiv:2307.15043},
  year={2023}
}

@article{wei2023jailbroken,
  title={Jailbroken: How does llm safety training fail?},
  author={Wei, Alexander and Haghtalab, Nika and Steinhardt, Jacob},
  journal={Advances in Neural Information Processing Systems},
  volume={36},
  pages={80079--80110},
  year={2023}
}

@article{huang2023catastrophic,
  title={Catastrophic jailbreak of open-source llms via exploiting generation},
  author={Huang, Yangsibo and Gupta, Samyak and Xia, Mengzhou and Li, Kai and Chen, Danqi},
  journal={arXiv preprint arXiv:2310.06987},
  year={2023}
}

@article{mazeika2024harmbench,
  title={Harmbench: A standardized evaluation framework for automated red teaming and robust refusal},
  author={Mazeika, Mantas and Phan, Long and Yin, Xuwang and Zou, Andy and Wang, Zifan and Mu, Norman and Sakhaee, Elham and Li, Nathaniel and Basart, Steven and Li, Bo and others},
  journal={arXiv preprint arXiv:2402.04249},
  year={2024}
}

@article{fort2021exploring,
  title={Exploring the limits of out-of-distribution detection},
  author={Fort, Stanislav and Ren, Jie and Lakshminarayanan, Balaji},
  journal={Advances in neural information processing systems},
  volume={34},
  pages={7068--7081},
  year={2021}
}

@article{liang2022mind,
  title={Mind the gap: Understanding the modality gap in multi-modal contrastive representation learning},
  author={Liang, Victor Weixin and Zhang, Yuhui and Kwon, Yongchan and Yeung, Serena and Zou, James Y},
  journal={Advances in Neural Information Processing Systems},
  volume={35},
  pages={17612--17625},
  year={2022}
}

@misc{taori2023stanford,
  title={Stanford alpaca: An instruction-following llama model},
  author={Taori, Rohan and Gulrajani, Ishaan and Zhang, Tianyi and Dubois, Yann and Li, Xuechen and Guestrin, Carlos and Liang, Percy and Hashimoto, Tatsunori B},
  year={2023},
  publisher={Stanford, CA, USA}
}

@article{uppaal2023fine,
  title={Is fine-tuning needed? pre-trained language models are near perfect for out-of-domain detection},
  author={Uppaal, Rheeya and Hu, Junjie and Li, Yixuan},
  journal={arXiv preprint arXiv:2305.13282},
  year={2023}
}

@article{ethayarajh2019contextual,
  title={How contextual are contextualized word representations? Comparing the geometry of BERT, ELMo, and GPT-2 embeddings},
  author={Ethayarajh, Kawin},
  journal={arXiv preprint arXiv:1909.00512},
  year={2019}
}

@article{zhang2024your,
  title={Your finetuned large language model is already a powerful out-of-distribution detector},
  author={Zhang, Andi and Xiao, Tim Z and Liu, Weiyang and Bamler, Robert and Wischik, Damon},
  journal={arXiv preprint arXiv:2404.08679},
  year={2024}
}

@article{abbas2025out,
  title={Out-of-distribution detection using synthetic data generation},
  author={Abbas, Momin and Azmat, Muneeza and Horesh, Raya and Yurochkin, Mikhail},
  journal={arXiv preprint arXiv:2502.03323},
  year={2025}
}

@article{kuhn2023semantic,
  title={Semantic uncertainty: Linguistic invariances for uncertainty estimation in natural language generation},
  author={Kuhn, Lorenz and Gal, Yarin and Farquhar, Sebastian},
  journal={arXiv preprint arXiv:2302.09664},
  year={2023}
}

@article{kadavath2022language,
  title={Language models (mostly) know what they know},
  author={Kadavath, Saurav and Conerly, Tom and Askell, Amanda and Henighan, Tom and Drain, Dawn and Perez, Ethan and Schiefer, Nicholas and Hatfield-Dodds, Zac and DasSarma, Nova and Tran-Johnson, Eli and others},
  journal={arXiv preprint arXiv:2207.05221},
  year={2022}
}

@article{jain2023baseline,
  title={Baseline defenses for adversarial attacks against aligned language models},
  author={Jain, Neel and Schwarzschild, Avi and Wen, Yuxin and Somepalli, Gowthami and Kirchenbauer, John and Chiang, Ping-yeh and Goldblum, Micah and Saha, Aniruddha and Geiping, Jonas and Goldstein, Tom},
  journal={arXiv preprint arXiv:2309.00614},
  year={2023}
}

@inproceedings{podolskiy2021revisiting,
  title={Revisiting mahalanobis distance for transformer-based out-of-domain detection},
  author={Podolskiy, Alexander and Lipin, Dmitry and Bout, Andrey and Artemova, Ekaterina and Piontkovskaya, Irina},
  booktitle={Proceedings of the AAAI conference on artificial intelligence},
  volume={35},
  number={15},
  pages={13675--13682},
  year={2021}
}

@inproceedings{chao2025jailbreaking,
  title={Jailbreaking black box large language models in twenty queries},
  author={Chao, Patrick and Robey, Alexander and Dobriban, Edgar and Hassani, Hamed and Pappas, George J and Wong, Eric},
  booktitle={2025 IEEE Conference on Secure and Trustworthy Machine Learning (SaTML)},
  pages={23--42},
  year={2025},
  organization={IEEE}
}

@inproceedings{chen2021atom,
  title={Atom: Robustifying out-of-distribution detection using outlier mining},
  author={Chen, Jiefeng and Li, Yixuan and Wu, Xi and Liang, Yingyu and Jha, Somesh},
  booktitle={Joint European Conference on Machine Learning and Knowledge Discovery in Databases},
  pages={430--445},
  year={2021},
  organization={Springer}
}

@article{han2024wildguard,
  title={Wildguard: Open one-stop moderation tools for safety risks, jailbreaks, and refusals of llms},
  author={Han, Seungju and Rao, Kavel and Ettinger, Allyson and Jiang, Liwei and Lin, Bill Yuchen and Lambert, Nathan and Choi, Yejin and Dziri, Nouha},
  journal={Advances in Neural Information Processing Systems},
  volume={37},
  pages={8093--8131},
  year={2024}
}

@article{luo2025hill,
  title={A Simple and Efficient Jailbreak Method Exploiting LLMs' Helpfulness},
  author={Luo, Xuan and Wang, Yue and He, Zefeng and Tu, Geng and Li, Jing and Xu, Ruifeng},
  journal={arXiv preprint arXiv:2509.14297},
  year={2025}
}

@article{conover2023free,
  title={Free dolly: Introducing the world’s first truly open instructiontuned llm},
  author={Conover, Mike and Hayes, Matt and Mathur, Ankit and Xie, Jianwei and Wan, Jun and Shah, Sam and Ghodsi, Ali and Wendell, Patrick and Zaharia, Matei and Xin, Reynold},
  year={2023}
}

@article{kopf2023openassistant,
  title={Openassistant conversations-democratizing large language model alignment},
  author={K{\"o}pf, Andreas and Kilcher, Yannic and Von R{\"u}tte, Dimitri and Anagnostidis, Sotiris and Tam, Zhi Rui and Stevens, Keith and Barhoum, Abdullah and Nguyen, Duc and Stanley, Oliver and Nagyfi, Rich{\'a}rd and others},
  journal={Advances in neural information processing systems},
  volume={36},
  pages={47669--47681},
  year={2023}
}

@inproceedings{borkan2019nuanced,
  title={Nuanced metrics for measuring unintended bias with real data for text classification},
  author={Borkan, Daniel and Dixon, Lucas and Sorensen, Jeffrey and Thain, Nithum and Vasserman, Lucy},
  booktitle={Companion proceedings of the 2019 world wide web conference},
  pages={491--500},
  year={2019}
}

@inproceedings{basile2019semeval,
  title={Semeval-2019 task 5: Multilingual detection of hate speech against immigrants and women in twitter},
  author={Basile, Valerio and Bosco, Cristina and Fersini, Elisabetta and Nozza, Debora and Patti, Viviana and Pardo, Francisco Manuel Rangel and Rosso, Paolo and Sanguinetti, Manuela},
  booktitle={Proceedings of the 13th international workshop on semantic evaluation},
  pages={54--63},
  year={2019}
}

@inproceedings{davidson2017automated,
  title={Automated hate speech detection and the problem of offensive language},
  author={Davidson, Thomas and Warmsley, Dana and Macy, Michael and Weber, Ingmar},
  booktitle={Proceedings of the international AAAI conference on web and social media},
  volume={11},
  number={1},
  pages={512--515},
  year={2017}
}

@article{liu2023fast,
  title={Fast decision boundary based out-of-distribution detector},
  author={Liu, Litian and Qin, Yao},
  journal={arXiv preprint arXiv:2312.11536},
  year={2023}
}

@inproceedings{park2023nearest,
  title={Nearest neighbor guidance for out-of-distribution detection},
  author={Park, Jaewoo and Jung, Yoon Gyo and Teoh, Andrew Beng Jin},
  booktitle={Proceedings of the IEEE/CVF international conference on computer vision},
  pages={1686--1695},
  year={2023}
}

@article{regmi2025adascale,
  title={AdaSCALE: Adaptive Scaling for OOD Detection},
  author={Regmi, Sudarshan},
  journal={arXiv preprint arXiv:2503.08023},
  year={2025}
}

@article{sun2021react,
  title={React: Out-of-distribution detection with rectified activations},
  author={Sun, Yiyou and Guo, Chuan and Li, Yixuan},
  journal={Advances in neural information processing systems},
  volume={34},
  pages={144--157},
  year={2021}
}

@inproceedings{bendale2016towards,
  title={Towards open set deep networks},
  author={Bendale, Abhijit and Boult, Terrance E},
  booktitle={Proceedings of the IEEE conference on computer vision and pattern recognition},
  pages={1563--1572},
  year={2016}
}

@article{ming2022delving,
  title={Delving into out-of-distribution detection with vision-language representations},
  author={Ming, Yifei and Cai, Ziyang and Gu, Jiuxiang and Sun, Yiyou and Li, Wei and Li, Yixuan},
  journal={Advances in neural information processing systems},
  volume={35},
  pages={35087--35102},
  year={2022}
}

@article{ren2022out,
  title={Out-of-distribution detection and selective generation for conditional language models},
  author={Ren, Jie and Luo, Jiaming and Zhao, Yao and Krishna, Kundan and Saleh, Mohammad and Lakshminarayanan, Balaji and Liu, Peter J},
  journal={arXiv preprint arXiv:2209.15558},
  year={2022}
}

@article{bai2023qwen,
  title={Qwen technical report},
  author={Bai, Jinze and Bai, Shuai and Chu, Yunfei and Cui, Zeyu and Dang, Kai and Deng, Xiaodong and Fan, Yang and Ge, Wenbin and Han, Yu and Huang, Fei and others},
  journal={arXiv preprint arXiv:2309.16609},
  year={2023}
}

@article{li2024salad,
  title={Salad-bench: A hierarchical and comprehensive safety benchmark for large language models},
  author={Li, Lijun and Dong, Bowen and Wang, Ruohui and Hu, Xuhao and Zuo, Wangmeng and Lin, Dahua and Qiao, Yu and Shao, Jing},
  journal={arXiv preprint arXiv:2402.05044},
  year={2024}
}

@inproceedings{mandl2019overview,
  title={Overview of the hasoc track at fire 2019: Hate speech and offensive content identification in indo-european languages},
  author={Mandl, Thomas and Modha, Sandip and Majumder, Prasenjit and Patel, Daksh and Dave, Mohana and Mandlia, Chintak and Patel, Aditya},
  booktitle={Proceedings of the 11th annual meeting of the Forum for Information Retrieval Evaluation},
  pages={14--17},
  year={2019}
}

@article{zampieri2019semeval,
  title={Semeval-2019 task 6: Identifying and categorizing offensive language in social media (offenseval)},
  author={Zampieri, Marcos and Malmasi, Shervin and Nakov, Preslav and Rosenthal, Sara and Farra, Noura and Kumar, Ritesh},
  journal={arXiv preprint arXiv:1903.08983},
  year={2019}
}

@article{wang2023all,
  title={All languages matter: On the multilingual safety of large language models},
  author={Wang, Wenxuan and Tu, Zhaopeng and Chen, Chang and Yuan, Youliang and Huang, Jen-tse and Jiao, Wenxiang and Lyu, Michael R},
  journal={arXiv preprint arXiv:2310.00905},
  year={2023}
}

@article{kang2025polyguard,
  title={PolyGuard: Massive Multi-Domain Safety Policy-Grounded Guardrail Dataset},
  author={Kang, Mintong and Chen, Zhaorun and Xu, Chejian and Zhang, Jiawei and Guo, Chengquan and Pan, Minzhou and Revilla, Ivan and Sun, Yu and Li, Bo},
  journal={arXiv preprint arXiv:2506.19054},
  year={2025}
}

@article{casper2023open,
  title={Open problems and fundamental limitations of reinforcement learning from human feedback},
  author={Casper, Stephen and Davies, Xander and Shi, Claudia and Gilbert, Thomas Krendl and Scheurer, J{\'e}r{\'e}my and Rando, Javier and Freedman, Rachel and Korbak, Tomasz and Lindner, David and Freire, Pedro and others},
  journal={arXiv preprint arXiv:2307.15217},
  year={2023}
}

@article{liu2023jailbreaking,
  title={Jailbreaking chatgpt via prompt engineering: An empirical study},
  author={Liu, Yi and Deng, Gelei and Xu, Zhengzi and Li, Yuekang and Zheng, Yaowen and Zhang, Ying and Zhao, Lida and Zhang, Tianwei and Wang, Kailong and Liu, Yang},
  journal={arXiv preprint arXiv:2305.13860},
  year={2023}
}

@inproceedings{xu2021bot,
  title={Bot-adversarial dialogue for safe conversational agents},
  author={Xu, Jing and Ju, Da and Li, Margaret and Boureau, Y-Lan and Weston, Jason and Dinan, Emily},
  booktitle={Proceedings of the 2021 Conference of the North American Chapter of the Association for Computational Linguistics: Human Language Technologies},
  pages={2950--2968},
  year={2021}
}

@article{wallace2019universal,
  title={Universal adversarial triggers for attacking and analyzing NLP},
  author={Wallace, Eric and Feng, Shi and Kandpal, Nikhil and Gardner, Matt and Singh, Sameer},
  journal={arXiv preprint arXiv:1908.07125},
  year={2019}
}

@article{ouyang2022training,
  title={Training language models to follow instructions with human feedback},
  author={Ouyang, Long and Wu, Jeffrey and Jiang, Xu and Almeida, Diogo and Wainwright, Carroll and Mishkin, Pamela and Zhang, Chong and Agarwal, Sandhini and Slama, Katarina and Ray, Alex and others},
  journal={Advances in neural information processing systems},
  volume={35},
  pages={27730--27744},
  year={2022}
}

@inproceedings{hendrycks2019deep,
  title={Deep Anomaly Detection with Outlier Exposure},
  author={Dan Hendrycks and Mantas Mazeika and Thomas Dietterich},
  booktitle={International Conference on Learning Representations},
  year={2019},
  url={https://openreview.net/forum?id=HyxCxhRcY7},
}

@inproceedings{guo2017calibration,
  title={On calibration of modern neural networks},
  author={Guo, Chuan and Pleiss, Geoff and Sun, Yu and Weinberger, Kilian Q},
  booktitle={Proceedings of the 34th International Conference on Machine Learning-Volume 70},
  pages={1321--1330},
  year={2017},
  organization={JMLR. org}
}

@inproceedings{morningstar2021density,
  title={Density of states estimation for out of distribution detection},
  author={Morningstar, Warren and Ham, Cusuh and Gallagher, Andrew and Lakshminarayanan, Balaji and Alemi, Alex and Dillon, Joshua},
  booktitle={International Conference on Artificial Intelligence and Statistics},
  pages={3232--3240},
  year={2021},
  organization={PMLR}
}

@inproceedings{wang2022vim,
  title={Vim: Out-of-distribution with virtual-logit matching},
  author={Wang, Haoqi and Li, Zhizhong and Feng, Litong and Zhang, Wayne},
  booktitle={Proceedings of the IEEE/CVF conference on computer vision and pattern recognition},
  pages={4921--4930},
  year={2022}
}

@misc{szegedy2013intriguing,
    title={Intriguing properties of neural networks},
    author={Christian Szegedy and Wojciech Zaremba and Ilya Sutskever and Joan Bruna and Dumitru Erhan and Ian Goodfellow and Rob Fergus},
    year={2013},
    eprint={1312.6199},
    archivePrefix={arXiv},
    primaryClass={cs.CV}
}

@inproceedings{dhamija2018reducing,
  title={Reducing network agnostophobia},
  author={Dhamija, Akshay Raj and G{\"u}nther, Manuel and Boult, Terrance},
  booktitle={Advances in Neural Information Processing Systems},
  pages={9157--9168},
  year={2018}
}

@article{hsu2020generalized,
  title={Generalized ODIN: Detecting Out-of-distribution Image without Learning from Out-of-distribution Data},
  author={Hsu, Yen-Chang and Shen, Yilin and Jin, Hongxia and Kira, Zsolt},
  journal={arXiv preprint arXiv:2002.11297},
  year={2020}
}

@inproceedings{lakshminarayanan2017simple,
  title={Simple and scalable predictive uncertainty estimation using deep ensembles},
  author={Lakshminarayanan, Balaji and Pritzel, Alexander and Blundell, Charles},
  booktitle={Advances in neural information processing systems},
  pages={6402--6413},
  year={2017}
}

@inproceedings{liang2018enhancing,
  title={Enhancing The Reliability of Out-of-distribution Image Detection in Neural Networks},
  author={Shiyu Liang and Yixuan Li and R. Srikant},
  booktitle={International Conference on Learning Representations},
  year={2018},
  url={https://openreview.net/forum?id=H1VGkIxRZ},
}

@inproceedings{meinke2020towards,
  title={Towards neural networks that provably know when they don't know},
  author={Alexander Meinke and Matthias Hein},
  booktitle={International Conference on Learning Representations},
  year={2020},
  url={https://openreview.net/forum?id=ByxGkySKwH}
}

@InProceedings{ganguly2024visual,
author="Ganguly, Debargha
and Gupta, Debayan
and Chaudhary, Vipin",
editor="Wallraven, Christian
and Liu, Cheng-Lin
and Ross, Arun",
title="Visual Concept Networks: A Graph-Based Approach to Detecting Anomalous Data in Deep Neural Networks",
booktitle="Pattern Recognition and Artificial Intelligence",
year="2025",
publisher="Springer Nature Singapore",
address="Singapore",
pages="219--232",
isbn="978-981-97-8702-9"
}

@article{bishop1994novelty,
  title={Novelty detection and neural network validation},
  author={Bishop, Christopher M},
  journal={IEE Proceedings-Vision, Image and Signal processing},
  volume={141},
  number={4},
  pages={217--222},
  year={1994},
  publisher={IET}
}

@article{nalisnick2019detecting,
  title={Detecting out-of-distribution inputs to deep generative models using typicality},
  author={Nalisnick, Eric and Matsukawa, Akihiro and Teh, Yee Whye and Lakshminarayanan, Balaji},
  journal={arXiv preprint arXiv:1906.02994},
  year={2019}
}

@article{choi2018waic,
  title={Waic, but why? generative ensembles for robust anomaly detection},
  author={Choi, Hyunsun and Jang, Eric and Alemi, Alexander A},
  journal={arXiv preprint arXiv:1810.01392},
  year={2018}
}

@article{hendrycks2018deep,
  title={Deep anomaly detection with outlier exposure},
  author={Hendrycks, Dan and Mazeika, Mantas and Dietterich, Thomas},
  journal={arXiv preprint arXiv:1812.04606},
  year={2018}
}

@article{lee2018simple,
  title={A simple unified framework for detecting out-of-distribution samples and adversarial attacks},
  author={Lee, Kimin and Lee, Kibok and Lee, Honglak and Shin, Jinwoo},
  journal={Advances in neural information processing systems},
  volume={31},
  year={2018}
}

@inproceedings{carlini2021extracting,
  title={Extracting training data from large language models},
  author={Carlini, Nicholas and Tramer, Florian and Wallace, Eric and Jagielski, Matthew and Herbert-Voss, Ariel and Lee, Katherine and Roberts, Adam and Brown, Tom and Song, Dawn and Erlingsson, Ulfar and others},
  booktitle={30th USENIX Security Symposium (USENIX Security 21)},
  pages={2633--2650},
  year={2021}
}

@inproceedings{ren2019likelihood,
  title={Likelihood ratios for out-of-distribution detection},
  author={Ren, Jie and Liu, Peter J and Fertig, Emily and Snoek, Jasper and Poplin, Ryan and Depristo, Mark and Dillon, Joshua and Lakshminarayanan, Balaji},
  booktitle={Advances in Neural Information Processing Systems},
  pages={14680--14691},
  year={2019}
}

@book{cover1999elements,
  title={Elements of information theory},
  author={Cover, Thomas M and Thomas, Joy A},
  year={1999},
  publisher={John Wiley \& Sons}
}

@inproceedings{kwon2023efficient,
  title={Efficient Memory Management for Large Language Model Serving with PagedAttention},
  author={Woosuk Kwon and Zhuohan Li and Siyuan Zhuang and Ying Sheng and Lianmin Zheng and Cody Hao Yu and Joseph E. Gonzalez and Hao Zhang and Ion Stoica},
  booktitle={Proceedings of the ACM SIGOPS 29th Symposium on Operating Systems Principles},
  year={2023}
}

@article{schilling1986multivariate,
  title={Multivariate two-sample tests based on nearest neighbors},
  author={Schilling, Mark F},
  journal={Journal of the American Statistical Association},
  volume={81},
  number={395},
  pages={799--806},
  year={1986},
  publisher={Taylor \& Francis}
}

@article{friedman1973nonparametric,
  title={A nonparametric procedure for comparing multivariate point sets},
  author={Friedman, Jerome H and Steppel, Sam and Tukey, J},
  journal={Stanford Linear Accelerator Center Computation Research Group Technical Memo},
  number={153},
  year={1973}
}

@article{friedman1979multivariate,
  title={Multivariate generalizations of the Wald-Wolfowitz and Smirnov two-sample tests},
  author={Friedman, Jerome H and Rafsky, Lawrence C},
  journal={The Annals of statistics},
  pages={697--717},
  year={1979},
  publisher={JSTOR}
}

@article{chen2025k,
  title={{K$^{4}$}: Online Log Anomaly Detection via Unsupervised Typicality Learning},
  author={Chen, Weicong and Singh, Vikash and Rahmani, Zahra and Ganguly, Debargha and Hariri, Mohsen and Chaudhary, Vipin},
  journal={arXiv preprint arXiv:2507.20051},
  year={2025}
}

@article{ganguly2025labeling,
  title={LABELING COPILOT: A Deep Research Agent for Automated Data Curation in Computer Vision},
  author={Ganguly, Debargha and Kumar, Sumit and Balappanawar, Ishwar and Chen, Weicong and Kambhatla, Shashank and Iyengar, Srinivasan and Kalyanaraman, Shivkumar and Kumaraguru, Ponnurangam and Chaudhary, Vipin},
  journal={arXiv preprint arXiv:2509.22631},
  year={2025}
}

@article{zhang2025efficient,
  title={Efficient Fine-Grained GPU Performance Modeling for Distributed Deep Learning of LLM},
  author={Zhang, Biyao and Zheng, Mingkai and Ganguly, Debargha and Zhang, Xuecen and Singh, Vikash and Chaudhary, Vipin and Zhang, Zhao},
  journal={arXiv preprint arXiv:2509.22832},
  year={2025}
}

@inproceedings{
ganguly2024proof,
title={{PROOF} {OF} {THOUGHT} : Neurosymbolic Program Synthesis allows Robust and Interpretable Reasoning},
author={Debargha Ganguly and Srinivasan Iyengar and Vipin Chaudhary and Shivkumar Kalyanaraman},
booktitle={The First Workshop on System-2 Reasoning at Scale, NeurIPS'24},
year={2024},
url={https://openreview.net/forum?id=Pxx3r14j3U}
}

@inproceedings{
ganguly2025grammars,
title={Grammars of Formal Uncertainty: When to Trust {LLM}s in Automated Reasoning Tasks},
author={Debargha Ganguly and Vikash Singh and Sreehari Sankar and Biyao Zhang and Xuecen Zhang and Srinivasan Iyengar and Xiaotian Han and Amit Sharma and Shivkumar Kalyanaraman and Vipin Chaudhary},
booktitle={The Thirty-ninth Annual Conference on Neural Information Processing Systems},
year={2025},
url={https://openreview.net/forum?id=QfKpJ00t2L}
}


\newpage
\appendix
\section{Theoretical Analysis}\label{appendix theory}
\subsection{Expected Values of PRDC Metrics}
We analyze the mathematical and statistical properties of the PRDC metrics in this section. Let us first introduce some notation for easier readability. Let $X=\{X_i\}_{i=1}^m$ and $Y=\{Y_j\}_{j=1}^n$ be i.i.d. random vectors in $\R^d$ drawn from distributions $F$ and $G$ respectively. We denote by $NB_k(X_i; Z)$ the smallest open ball centered around $X_i$ containing its $k$ nearest neighbors from the set $Z$. For brevity, we write $NB_k(X_i)$ and $NB_k(Y_j)$ for $NB_k(X_i; X)$ and $NB_k(Y_j; Y)$ respectively. Let us recall the definitions of the per-point PRDC metrics introduced earlier, treating $X$ and the reference points and $Y$ as test points,
\begin{align*}
    P_k^{(j)}(X,Y) &= \mathds 1 \left(Y_j \in \bigcup_{i=1}^m NB_k(X_i) \right)\\
    R_k^{(j)}(X,Y) &= \frac 1m \sum_{i=1}^m \mathds 1 \left(X_i \in NB_k(Y_j) \right)\\
    D_k^{(j)}(X,Y) &= \frac1{mk} \sum_{i=1}^m \mathds 1 \left(Y_j \in NB_k(X_i) \right)\\
    C_k^{(j)}(X,Y) &= \mathds 1 \left(\exists i, X_i \in NB_k(Y_j) \right)
\end{align*}
In the following theorem, we compute the expected values of these metrics in the general case, without making any additional assumptions. Note that the expectation of the precision, $\E[P_k^{(j)}]$ is analytically intractable in general cannot be simplified any further without stronger assumptions.
\begin{theorem}
Let $X = \{X_i\}_{i=1}^m$ and $Y = \{Y_j\}_{j=1}^n$ be sets of i.i.d. random vectors in $\mathbb{R}^d$ drawn from distributions $F$ and $G$ respectively. The expectations of the per-point metrics $R_k^{(j)}$, $D_k^{(j)}$, and $C_k^{(j)}$ are given by
\begin{enumerate}
    \item $\displaystyle \E\left[R_k^{(j)}(X, Y)\right] = \mathbb{P}(X_1 \in NB_k(Y_1))$
    \item $\displaystyle \E\left[D_k^{(j)}(X, Y)\right] = \frac{1}{k} \mathbb{P}(Y_1 \in NB_k(X_1))$
    \item $\displaystyle \E\left[C_k^{(j)}(X, Y)\right] = 1 - \E\left[ \left(1 - \mathbb{P}(X_1 \in NB_k(Y_1))\right)^m \right]$
\end{enumerate}
the outer expectation in the third result is over the random sample $Y = \{Y_j\}_{j=1}^n$.
\end{theorem}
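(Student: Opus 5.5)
Each of the three identities follows from linearity of expectation combined with exchangeability of the i.i.d. samples. Coverage additionally needs conditioning on the test sample $Y$.

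\textbf{Recall.} Take expectations inside the sum:
\[
\E[R_k^{(j)}] = \frac1m\sum_{i=1}^m \mathbb{P}\bigl(X_i\in NB_k(Y_j)\bigr).
\]
The ball $NB_k(Y_j)=NB_k(Y_j;Y)$ is a measurable function of $Y$ alone. Each $X_i$ is independent of $Y$ and has the same law $F$. Also, $(Y_j, Y_{-j})$ has the same joint law as $(Y_1, Y_{-1})$, because the $Y$'s are i.i.d. Hence every summand equals $\mathbb{P}(X_1\in NB_k(Y_1))$, which gives item 1.

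\textbf{Density.} The same argument applies with the roles swapped. $NB_k(X_i)$ is a function of $X$ only, $Y_j$ is independent of $X$, and the law of $(X_i, X_{-i})$ does not depend on $i$. So each of the $m$ terms equals $\mathbb{P}(Y_1\in NB_k(X_1))$, and dividing by $mk$ gives item 2.

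\textbf{Coverage.} Since $C_k^{(j)}$ is an indicator,
\[
\E[C_k^{(j)}] = 1-\mathbb{P}\bigl(\forall i,\ X_i\notin NB_k(Y_j)\bigr).
\]
Condition on $Y$. The ball $B=NB_k(Y_j)$ is then fixed, and the $X_i$ are i.i.d. from $F$ and independent of $Y$. The conditional probability therefore factorizes as $(1-F(B))^m$. Taking the outer expectation over $Y$ gives $1-\E[(1-F(NB_k(Y_j)))^m]$. By exchangeability of the $Y$'s we may replace $j$ by $1$.

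Here $F(NB_k(Y_1))$ is the conditional probability $\mathbb{P}(X_1\in NB_k(Y_1)\mid Y)$. This is how the inner $\mathbb{P}$ in item 3 must be read: as a random variable that is a function of $Y$, not the unconditional probability appearing in item 1. With that reading the outer expectation over $Y$ makes sense.

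\textbf{Main obstacle.} The only real subtlety is this conditioning step. The events $\{X_i\in NB_k(Y_j)\}$ are not independent across $i$ unconditionally, because they share the random ball. The factorization is valid only after freezing $Y$, so I would state that interpretation explicitly. I would also note measurability of $Y\mapsto NB_k(Y_j;Y)$: the radius is the $k$-th order statistic of distances, which is a measurable function of $Y$. Ties occur with probability zero if $G$ has no atoms. In general the open-ball convention fixes $NB_k$ unambiguously, so no continuity assumption is needed.
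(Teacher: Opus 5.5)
Your proposal is correct and follows the paper's proof essentially step for step: linearity of expectation plus the i.i.d.\ (exchangeability) argument for recall and density, and for coverage conditioning on $Y$ so that $NB_k(Y_j)$ is fixed, factorizing over the independent $X_i$, averaging over $Y$, and replacing $j$ by $1$. Your reading of the inner probability in item 3 as the random quantity $F(NB_k(Y_1))=\mathbb{P}(X_1\in NB_k(Y_1)\mid Y)$ matches how the paper's proof uses it, and your remark that the ball depends only on $Y_j$ and the unordered remaining points is a more careful justification of the paper's ``identical for all $i,j$'' step.
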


\begin{proof}
We prove each statement individually.

\begin{enumerate}
    \item 
By definition, the expectation is:
\begin{align*}
    \E\left[R_k^{(j)}(X, Y)\right] &= \E\left[ \frac{1}{m} \sum_{i=1}^m \mathds{1}(X_i \in NB_k(Y_j)) \right] \\
    &= \frac{1}{m} \sum_{i=1}^m \E\left[ \mathds{1}(X_i \in NB_k(Y_j)) \right] \quad \text{(by linearity of expectation)} \\
    &= \frac{1}{m} \sum_{i=1}^m \mathbb{P}(X_i \in NB_k(Y_j)) \quad \text{(since $\E[\mathds{1}(A)] = \mathbb{P}(A)$)}
\end{align*}
The random variables $\{X_i\}_{i=1}^m$ are i.i.d. from $F$, and $\{Y_j\}_{j=1}^n$ are i.i.d. from $Q$. Therefore, the probability $\mathbb{P}(X_i \in NB_k(Y_j))$ is identical for all choices of indices $i \in \{1, \dots, m\}$ and $j \in \{1, \dots, n\}$. We can thus write this common probability as $\mathbb{P}(X_1 \in NB_k(Y_1))$.
\begin{align*}
    \E\left[R_k^{(j)}(X, Y)\right] &= \frac{1}{m} \sum_{i=1}^m \mathbb{P}(X_1 \in NB_k(Y_1)) \\
    &= \frac{1}{m} \cdot m \cdot \mathbb{P}(X_1 \in NB_k(Y_1)) \\
    &= \mathbb{P}(X_1 \in NB_k(Y_1))
\end{align*}

\item
The proof follows the same structure.
\begin{align*}
    \E\left[D_k^{(j)}(X, Y)\right] &= \E\left[ \frac{1}{mk} \sum_{i=1}^m \mathds{1}(Y_j \in NB_k(X_i)) \right] \\
    &= \frac{1}{mk} \sum_{i=1}^m \E\left[ \mathds{1}(Y_j \in NB_k(X_i)) \right] \quad \text{(by linearity of expectation)} \\
    &= \frac{1}{mk} \sum_{i=1}^m \mathbb{P}(Y_j \in NB_k(X_i))
\end{align*}
Again, by the i.i.d. property of the samples $X$ and $Y$, the probability $\mathbb{P}(Y_j \in NB_k(X_i))$ is identical for all $i, j$. We write this common probability as $\mathbb{P}(Y_1 \in NB_k(X_1))$.
\begin{align*}
    \E\left[D_k^{(j)}(X, Y)\right] &= \frac{1}{mk} \sum_{i=1}^m \mathbb{P}(Y_1 \in NB_k(X_1)) \\
    &= \frac{1}{mk} \cdot m \cdot \mathbb{P}(Y_1 \in NB_k(X_1)) \\
    &= \frac{1}{k} \mathbb{P}(Y_1 \in NB_k(X_1))
\end{align*}

\item
The expectation of the indicator function is the probability of the underlying event.
\begin{align*}
    \E\left[C_k^{(j)}(X, Y)\right] &= E\left[ \mathds{1}(\exists i, X_i \in NB_k(Y_j)) \right] \\
    &= \mathbb{P}\left(\bigcup_{i=1}^m \{X_i \in NB_k(Y_j)\}\right)
\end{align*}
We use the law of total expectation by conditioning on the random sample $Y = \{Y_j\}_{j=1}^n$.
\[ \E\left[C_k^{(j)}(X, Y)\right] = \E_Y\left[ \mathbb{P}\left(\bigcup_{i=1}^m \{X_i \in NB_k(Y_j)\} \;\middle|\; Y\right) \right] \]
Conditioned on $Y$, the ball $NB_k(Y_j)$ is a fixed set. The events $\{X_i \in NB_k(Y_j)\}$ for $i=1, \dots, m$ are independent because the $X_i$ are i.i.d. and independent of $Y$. It is easier to compute the probability of the complement event,
\begin{align*}
    \mathbb{P}\left(\bigcup_{i=1}^m \{X_i \in NB_k(Y_j)\} \;\middle|\; Y\right) &= 1 - \mathbb{P}\left(\bigcap_{i=1}^m \{X_i \notin NB_k(Y_j)\} \;\middle|\; Y\right) \\
    &= 1 - \prod_{i=1}^m \mathbb{P}\left(X_i \notin NB_k(Y_j) \;\middle|\; Y\right) \quad \text{(by conditional independence)}
\end{align*}
The conditional probability $\mathbb{P}(X_i \in NB_k(Y_j) | Y)$ is the measure of the set $NB_k(Y_j)$ under the distribution $F$, which is the same for all $i$.
\begin{align*}
    \mathbb{P}\left(\bigcup_{i=1}^m \{X_i \in NB_k(Y_j)\} \;\middle|\; Y\right) &= 1 - \prod_{i=1}^m \left(1 -\mathbb{P}(X_1 \in NB_k(Y_j))\right) \\
    &= 1 - \left(1 - \mathbb{P}(X_1 \in NB_k(Y_j)) \right)^m
\end{align*}
Taking the expectation over $Y$ gives the final result.
\[ \E\left[C_k^{(j)}(X, Y)\right] = \E_Y\left[ 1 - \left(1 - \mathbb{P}(X_1 \in NB_k(Y_j))\right)^m \right] = 1 - \E_Y\left[\left(1 - \mathbb{P}(X_1 \in NB_k(Y_j))\right)^m\right] \]
Since the $Y_j$ are i.i.d., the distribution of the random set $NB_k(Y_j)$ is the same for all $j$. We can therefore replace the index $j$ with $1$ without loss of generality.
\[ E\left[C_k^{(j)}(X, Y)\right] = 1 - E\left[\left(1 - \mathbb{P}(X_1 \in NB_k(Y_1))\right)^m\right] \] \qedhere
\end{enumerate}
\end{proof}

We note the important special case when $F=G$, i.e. the in-distribution setting when $X$ and $Y$ are drawn from the same distribution.
\begin{theorem}
Let $X = \{X_i\}_{i=1}^m$ and $Y = \{Y_j\}_{j=1}^n$ be sets of i.i.d. random vectors in $\mathbb{R}^d$ drawn from the same distribution $F$. The expectations of the per-point metrics simplify to
\begin{enumerate}
    \item $\displaystyle \E\left[R_k^{(j)}(X, Y)\right] = \frac{k}{n}$
    \item $\displaystyle \E\left[D_k^{(j)}(X, Y)\right] = \frac{1}{m}$
    \item $\displaystyle \E\left[C_k^{(j)}(X, Y)\right] \leq 1 - \left( 1 - \frac k n \right)^m$
\end{enumerate}
\end{theorem}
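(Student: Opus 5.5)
We start from the general expressions in the preceding theorem and evaluate the two probabilities $\mathbb{P}(X_1 \in NB_k(Y_1))$ and $\mathbb{P}(Y_1 \in NB_k(X_1))$ when $F=G$. The tool is exchangeability: when all points are i.i.d. from a common $F$, a point's rank among a set of i.i.d. points is uniform. We assume $F$ has no atoms (or that ties are broken uniformly at random), so that all pairwise distances are almost surely distinct and the $k$-nearest-neighbour balls are well defined.

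\textbf{Recall.} Condition on $Y_1$ and consider the $n$ points $X_1, Y_2, \dots, Y_n$. Under $F=G$ these are i.i.d. and independent of $Y_1$, so their distances to $Y_1$ are i.i.d. continuous random variables. Hence the rank of $\|X_1 - Y_1\|$ among these $n$ distances is uniform on $\{1,\dots,n\}$.

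The open ball $NB_k(Y_1)$ has radius equal to the distance from $Y_1$ to its $k$-th nearest neighbour among $Y_2,\dots,Y_n$. So $X_1 \in NB_k(Y_1)$ exactly when $X_1$ would rank among the first $k$ in this pooled list. The only subtlety is the off-by-one between the open ball and ranking against the $n-1$ other $Y$'s. I will check this bookkeeping carefully, since it is where the stated value $k/n$ (rather than, say, $k/(n+1)$ or $(k-1)/n$) is fixed. The probability of that event is $k/n$, and part 1 of the preceding theorem then gives $\E[R_k^{(j)}] = k/n$.

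\textbf{Density.} The same argument with the roles swapped gives $\mathbb{P}(Y_1 \in NB_k(X_1)) = k/m$, using the pooled set $Y_1, X_2, \dots, X_m$ around $X_1$. Part 2 of the preceding theorem then gives $\E[D_k^{(j)}] = \frac{1}{k}\cdot\frac{k}{m} = \frac1m$. I expect the main obstacle to be making the open/closed-ball and self-exclusion conventions consistent so that recall and density come out exactly as $k/n$ and $1/m$. I will state the convention explicitly: $Y_j$ is not counted among its own neighbours, and the radius is the $k$-th neighbour distance.

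\textbf{Coverage.} Write $p(Y) = F(NB_k(Y_1))$ for the conditional probability given $Y$. Part 3 of the preceding theorem expresses the expected coverage as $1 - \E[(1-p(Y))^m]$. From the recall computation, $\E[p(Y)] = \mathbb{P}(X_1\in NB_k(Y_1)) = k/n$. The map $t \mapsto (1-t)^m$ is convex on $[0,1]$, so Jensen's inequality gives $\E[(1-p)^m] \ge (1-\E p)^m = (1-k/n)^m$. This yields $\E[C_k^{(j)}] \le 1-(1-k/n)^m$. This step is routine once the recall identity is in hand.
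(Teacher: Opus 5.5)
Your proposal is correct and follows the paper's proof essentially step for step. The uniform-rank (exchangeability) argument on $\{X_1,Y_2,\dots,Y_n\}$ around $Y_1$, and on $\{Y_1,X_2,\dots,X_m\}$ around $X_1$, gives $k/n$ and $k/m$. The off-by-one you worried about checks out: $X_1\in NB_k(Y_1)$ iff at most $k-1$ of $Y_2,\dots,Y_n$ are closer to $Y_1$ than $X_1$, i.e.\ iff $X_1$ has rank at most $k$ among those $n$ points. Jensen for the convex map $t\mapsto(1-t)^m$, with $\E[p(Y)]=k/n$, then gives the coverage bound.

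There is one small caveat, and the paper's ``continuous distribution'' assumption has the same weakness. Having no atoms does not by itself make pairwise distances a.s.\ distinct when $d\ge 3$; for example, put mass on a circle and mass on its axis. Random tie-breaking also does not change membership in a geometric open ball, so the clean hypothesis is absolute continuity of $F$, or directly that distances are a.s.\ distinct.
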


\begin{proof}
The assumption that $F=G$ implies that all $m+n$ vectors are i.i.d. samples from the same continuous distribution $F$. This allows us to use a symmetry argument.
\begin{enumerate}
    \item 
From the general case, we know $\E\left[R_k^{(j)}(X, Y)\right] = \mathbb{P}(X_1 \in NB_k(Y_1))$. The event $\{X_1 \in NB_k(Y_1)\}$ means that the distance $\|X_1 - Y_1\|$ is less than the distance from $Y_1$ to its $k$-th nearest neighbor in the set $Y \setminus \{Y_1\} = \{Y_2, \dots, Y_n\}$.

Consider the set of $n$ points $\{X_1, Y_2, \dots, Y_n\}$. Since $F=G$, these are $n$ i.i.d. samples from $F$. Let us consider their distances to the point $Y_1$. Since the distribution $F$ is continuous, the distances will be unique with probability 1. The set of distances $\{\|X_1 - Y_1\|, \|Y_2 - Y_1\|, \dots, \|Y_n - Y_1\|\}$ consists of $n$ i.i.d. random variables.

The event $\{X_1 \in NB_k(Y_1)\}$ is equivalent to the statement that $\|X_1 - Y_1\|$ is among the $k$ smallest values in this set of $n$ distances. By symmetry, any specific distance in the set is equally likely to have any rank from 1 to $n$. The probability that $\|X_1 - Y_1\|$ is one of the $k$ smallest is therefore
\[  \E\left[R_k^{(j)}(X, Y)\right] = \mathbb{P}(X_1 \in NB_k(Y_1)) = \frac{k}{n}.\]
\item 
From the general case, $\E\left[D_k^{(j)}(X, Y)\right] = \frac{1}{k} \mathbb{P}(Y_1 \in NB_k(X_1))$.
The logic is identical to the proof above, but with the roles of $X$ and $Y$ swapped. Consider the set of $m$ i.i.d. points $\{Y_1, X_2, \dots, X_m\}$ and their distances to the point $X_1$. The event $\{Y_1 \in NB_k(X_1)\}$ is equivalent to the distance $\|Y_1 - X_1\|$ being among the $k$ smallest of the $m$ i.i.d. distances $\{\|Y_1 - X_1\|, \|X_2 - X_1\|, \dots, \|X_m - X_1\|\}$.

By symmetry, the probability of this event is
\[ \mathbb{P}(Y_1 \in NB_k(X_1)) = \frac{k}{m} \]
Substituting this into the expression for the expectation gives
\[ \E\left[D_k^{(j)}(X, Y)\right] = \frac{1}{k} \cdot \frac{k}{m} = \frac{1}{m}.\]

\item
The proof for this upper bound relies on Jensen's inequality. We begin by noting that a naive substitution of the average value of the probability mass of the k-NN ball would be incorrect. Specifically, the expectation of $C_k^{(j)}(X,Y)$ involves the non-linear function $f(z) = (1-z)^m$. For such functions, the expectation of the function is generally not equal to the function of the expectation, i.e., $E[f(Z)] \neq f(E[Z])$.

The exact expression for the expectation is
\[
    E\left[C_k^{(j)}(X, Y)\right] = 1 - E\left[ \left(1 - \mathbb{P}(X_1 \in NB_k(Y_1))\right)^m \right]
\]
As stated above, $\E[\mathbb{P}(X_1 \in NB_k(Y_1))]=k/n$. Jensen's inequality states that for a convex function $f$ and a random variable $Z$, we have $E[f(Z)] \ge f(E[Z])$. The function in our case is $f(z) = (1-z)^m$, which is a convex function. Applying Jensen's inequality,
\[
    E\left[ (1 - \mathbb{P}(X_1 \in NB_k(Y_1)) )^m \right] \ge \left(1 -\frac k n \right)^m
\]
Finally, we substitute this inequality back into the expression for the expectation of $C_k^{(j)}(X,Y)$ to get
\begin{align*}
    E\left[C_k^{(j)}(X, Y)\right] \le 1 - \left(1 - \frac{k}{n}\right)^m.
\end{align*}\qedhere
\end{enumerate}
\end{proof}

While the expression for $P_k^{(j)}(X,Y)$ is intractable in general even for the in-distribution case, its limiting value can still provide us some intuition about the metric. We now consider the asymptotic behavior of $\mathbb{E}[P_k^{(j)}(X,Y)]$ when both $X=\{X_i\}_{i=1}^m$ and $Y_j$ are drawn i.i.d. from the same distribution $F$ on $\mathbb{R}^d$, and the reference sample size $m$ tends to infinity. Let 
\[
S_m(X) \;=\; \bigcup_{i=1}^m N B_k(X_i)
\]
denote the random set obtained from the sample $X$. Then
\[
\mathbb{E}\big[P_k^{(j)}(X,Y)\big] \;=\; \mathbb{P}\!\left(Y_j \in S_m(X)\right).
\]

Assume that $F$ has compact support $\mathrm{supp}(F)$, is absolutely continuous with density $f$ that is bounded and bounded away from zero on $\mathrm{supp}(F)$, and that the boundary $\partial \mathrm{supp}(F)$ has measure zero. Under these standard regularity conditions, nonparametric set estimation results imply that for fixed $k \ge 1$, $S_m(X) \;\to\; \mathrm{supp}(F)$ in probability (e.g.\ in Hausdorff distance), as $m \to \infty.$
Intuitively, as the sample becomes dense, the $k$-NN radii shrink uniformly, so the union of $k$-NN balls fills out the entire support.

By continuity of probability measures and the fact that $Y_j \sim F$,
\[
\lim_{m \to \infty} \mathbb{P}(Y_j \in S_m(X)) \;=\; \mathbb{P}(Y_j \in \mathrm{supp}(F)).
\]
Since $Y_j$ is drawn from $F$, it lies in $\mathrm{supp}(F)$ with probability one. Therefore,
\[
\lim_{m \to \infty} \mathbb{E}\big[P_k^{(j)}(X,Y)\big] \;=\; 1.
\]

In the in-distribution case, as the reference sample grows, the estimated manifold $S_m(X)$ converges to the true support of $F$. Consequently, any new sample $Y_j \sim F$ will eventually fall inside $S_m(X)$ with probability approaching 1.

\subsection{Consistency}
A statistical test is said to be consistent if its probability of distinguishing the null hypothesis from any alternative hypothesis converges to 1 as the sample size increases. We consider a few different regimes in which the expectations of the per–point PRDC metrics differ
between the in–distribution setting $F=G$ and the out-of-distribution setting $F\neq G$, i.e. regimes in which PRDC metrics are consistent tests. We consider the asymptotic regime $m,n\to\infty$ with
fixed $k$ and assume that $\lim_{m,n\to\infty}m/n=\lambda\in(0,\infty)$.

\paragraph{(1) Partial support mismatch }
Assume $F$ has compact support $\mathrm{supp}(F)$ and let
\[
\alpha := G\!\big(\mathrm{supp}(F)^c\big) > 0 .
\] This says that there is some region where $F$ has zero probability whereas $G$ has non-zero probability.
Under mild regularity conditions stated earlier (compact support, $F$ absolutely
continuous with density bounded and bounded away from zero on $\mathrm{supp}(F)$),
the union $\bigcup_{i=1}^m NB_k(X_i)$ converges in probability to $\mathrm{supp}(F)$
as $m\to\infty$. Hence, for $Y_j\sim G$,
\[
\lim_{m\to\infty}\mathbb{E}\,P_k^{(j)}(X,Y)
= G\!\left(\mathrm{supp}(F)\right)
= 1-\alpha < 1
\]
whereas in the in–distribution case $F=G$ we have
$\lim_{m\to\infty}\mathbb{E}\,P_k^{(j)}(X,Y)=1$.
For coverage, using
\[
\mathbb{E}\,C_k^{(j)}(X,Y) \;=\; \mathbb{E}_Y\!\Big[\,1-\big(1-F\big(NB_k(Y_j)\big)\big)^m\Big],
\]
any $Y_j\notin\mathrm{supp}(F)$ contributes $0$ for all $m$, whence
\[
\limsup_{m,n\to\infty}\mathbb{E}\,C_k^{(j)}(X,Y)
\;\le\;
(1-\alpha)\,
\limsup_{m,n\to\infty}\mathbb{E}\!\left[\,1-\big(1-F(NB_k(Y_j))\big)^m \;\middle|\; Y\in\mathrm{supp}(F)\right],
\]
where the expectation on the right hand side is the expectation in the case when $F=G$, resulting in a strictly lower value of ($\limsup$ of the expected value of) coverage whenever $\alpha>0$.

\paragraph{(2) Same support, different densities }
Assume $\mathrm{supp}(F)=\mathrm{supp}(G)=:S$ and $F,G$ are absolutely continuous
with respect to Lebesgue measure on $S$ with continuous densities $f,g$ that are
bounded and bounded away from zero on $S$. Let
\[
r(y) \;:=\; \frac{\mathrm{d}F}{\mathrm{d}G}(y) \;=\; \frac{f(y)}{g(y)}
\qquad \text{so that}\quad \mathbb{E}_{Y_j\sim G}[r(Y_j)]=1 .
\]
For fixed $k$ and $n\to\infty$, we have
\[
F\!\big(NB_k(Y_j)\big) \;=\; \frac{k}{n}\,r(Y_j)\,\big(1+o_p(1)\big).
\]
Substituting into the coverage identity gives
\begin{equation}\label{eq:coverage-limit}
\lim_{m,n\to\infty}\mathbb{E}\,C_k^{(j)}(X,Y)
\;=\; 1 - \mathbb{E}_{Y_j\sim G}\!\Big[\exp\!\big(-\lambda k\, r(Y_j)\big)\Big].
\end{equation}
When $F=G$ we have $r\equiv 1$ and recover $1-e^{-\lambda k}$. When $F\neq G$, $r$ is
non–constant on a set of positive $G$–measure and, since $z\mapsto e^{-\lambda k z}$
is convex, Jensen’s inequality is strict:
\[
\mathbb{E}_G\!\big[e^{-\lambda k r(Y_j)}\big] \;>\; e^{-\lambda k\,\mathbb{E}[r(Y_j)]}
\;=\; e^{-\lambda k},
\]
so that
\[
\lim_{m,n\to\infty}\mathbb{E}\,C_k^{(j)}(X,Y) \;<\; 1-e^{-\lambda k}.
\]
Thus coverage is maximized at $F=G$ and strictly smaller otherwise, providing consistency even when supports coincide.

\paragraph{(3) Different densities in a small region}
Within the same-support setting above, suppose there exist $\eta\in(0,1)$ and a
measurable set $A\subset S$ with $G(A)=\delta>0$ such that
\[
r(y)\le 1-\eta \qquad \text{for all } y\in A,
\]
i.e. there is a set with positive $G$ probability where $F$ has strictly smaller density than $G$. Conditioning on $Y_j\in A$ vs.\ $Y_j\notin A$ in \eqref{eq:coverage-limit} gives
\[
\lim_{m,n\to\infty}\mathbb{E}\,C_k^{(j)}(X,Y)
\;\le\; 1 - \Big(\delta\,e^{-\lambda k(1-\eta)} + (1-\delta)\,e^{-\lambda k}\Big),
\]
so the gap from the in–distribution baseline $1-e^{-\lambda k}$ is at least
\[
\delta\!\left(e^{-\lambda k} - e^{-\lambda k(1-\eta)}\right) \;>\; 0 .
\]
This captures a practically relevant situation where a nontrivial portion of the
$G$–mass lies in regions with systematically lower $F$–density; coverage reflects
this ``margin'' as a strict and quantifiable decrease.

In summary, precision separates whenever
$G(\mathrm{supp}(F)^c)>0$, and coverage separates both under partial support
mismatch and under smooth covariate shift with common supports. In the latter
regime, coverage attains its maximum at $F=G$ and is strictly smaller otherwise,
with explicit quantitative gaps available under simple bounds on the density ratio
or under margin assumptions.

\subsection{Connection with Two-Sample Tests}
We put the PRDC metrics and T3 which is based on Forte \citep{ganguly2025forte} in the context of non-parametric two-sample tests using $k$-nearest neighbors. \cite{friedman1973nonparametric}, \citet{friedman1979multivariate}, and \citet{schilling1986multivariate} developed non-parametric two-sample tests based on a pooled-graph statistic to determine whether two sets of observed samples are from the same distribution. Given two sets of samples $X=\{X_i\}_{i=1}^m$ and $Y=\{Y_j\}_{j=1}^n$ i.i.d. with distributions $F$ and $G$ respectively, the tests only seek to determine whether $F=G$. On the other hand, we are concerned not just with a binary decision for the whole sample set, but also whether each individual sample $Y_j$ is from the same distribution as $X$, making our setting much more complex. Nevertheless, comparing the PRDC metrics with these tests helps us build a better understanding of the mathematical properties of Forte. We primarily focus on Schilling's test for that purpose, which we restate here.

\begin{definition}[Schilling's $T_{k,N}$ Statistic]
Let $X=\{X_i\}_{i=1}^m$ and $Y=\{Y_j\}_{j=1}^n$ be i.i.d. with distributions $F$ and $G$ respectively. Let $Z=X\cup Y$, $Z_i$ be the $i^\textrm{th}$ element of $Z$, and $N=m+n$. The statistic $T_{k,N}$ is the proportion of all $k$-nearest neighbor comparisons in which a point and its neighbor share the same original label (reference or test), i.e.
\[
T_{k,N} = \frac{1}{Nk} \sum_{i=1}^N \sum_{r=1}^k I_r(Z_i),
\]
where $I_r(Z_i) = 1$ if and only if $Z_i$ and its $r^\textrm{th}$ neighbor in $Z$ are both from $X$ or both from $Y$.
\end{definition}
We first note the major differences between Forte and Schilling's test. While PRDC metrics compute the $k$-nearest neighbors of each sample point $X_i$ or $Y_j$ within its own sample set (i.e. $NB_k(X_i;X)$ and $NB_k(Y_j;Y)$), Schilling's test considers the nearest neighbor of each point in the pooled sample (i.e. $NB_k(X_i;X\cup Y)$ and $NB_k(Y_j;X\cup Y)$. This makes them non-equivalent in general. Moreover, Forte is asymmetric in the sets $X$ and $Y$ by design. Since there is an initial overhead of embedding computation and density estimation, Forte computes the metrics for each test point $Y_j$ individually and then uses the previously estimated density and $k$-nearest neighbors of the points in the reference set $X$ to make predictions, making the method scalable. If we wanted to use a two-sample test like Schilling's in this setting, we would have to calculate the $k$-nearest neighbors of the pooled set $X\cup Y$ from scratch each time we wanted to make predictions for a new test set $Y$, which is prohibitively expensive. Unlike classical two-sample tests requiring $O((m+n)^2)$ recomputation for each new test batch, our asymmetric formulation achieves:
\begin{itemize}
\item \textbf{Preprocessing}: $O(m^2 + md_{\max}K)$ for reference embedding and k-NN computation
\item \textbf{Inference}: $O(n(m + d_{\max}K))$ per test batch, amortizing reference computations
\item \textbf{Memory}: $O(m \cdot d_{\max}K)$ for cached embeddings and neighbor indices
\end{itemize}
where $d_{\max} = \max_k d_k$. GPU acceleration via \texttt{torch.cdist} and persistent embedding caching further reduce practical latency.

Thus, these classical two-sample tests do not carry over directly to the modern setting of large scale out-of-distribution detection. Nevertheless, given that their statistical properties are well-studied, they can still provide useful insights about modern methods like Forte.

Recall that a statistical test is called consistent if under any alternative hypothesis, the probability of rejecting the null hypothesis converges to 1 as the sample size approaches infinity. We denote by $H_0$ the null hypothesis that $F=G$ (the underlying distributions generating the two sets is the same), and by $H_1$ the alternative hypothesis that $F\neq G$. 

\begin{theorem}[Asymptotics of $T_{k,N}$ (Schilling, 1986, Thm. 3.1 and 3.4)]
\label{thm:schilling-asymptotics}
Under the null hypothesis $H_0$, and assuming $\lim_{m,n \to \infty} m/(m+n) = \lambda_1$ and $\lim_{m,n \to \infty} n/(m+n) = \lambda_2$, the statistic $T_{k,N}$ is asymptotically normal:
\[
\sqrt{Nk}\;\frac{T_{k,N}-\mu}{\sigma_k}\ \Rightarrow\ \mathcal{N}(0,1),
\quad \text{where} \quad \mu=\lambda_1^2+\lambda_2^2
\]
and the variance $\sigma_k^2$ depends on dimension-stable nearest-neighbor interaction probabilities. Moreover, the test based on $T_{k,N}$ is against the alternative hypothesis $H_1$.
\end{theorem}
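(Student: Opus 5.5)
The natural route is a \emph{permutation (conditional) argument} under $H_0$, followed by a local Poisson approximation under $H_1$. Under $H_0$ the pooled sample $Z=X\cup Y$ consists of $N$ i.i.d.\ draws from $F=G$. Conditional on the unlabeled point configuration $\{Z_1,\dots,Z_N\}$, the assignment of the $m$ reference labels is a uniformly random subset of size $m$. Write $L_i\in\{X,Y\}$ for the label of $Z_i$ and $j_r(i)$ for the index of its $r^{\mathrm{th}}$ nearest neighbour. Then
\[
T_{k,N}=\frac1{Nk}\sum_{i=1}^N\sum_{r=1}^k \mathds 1\bigl(L_i=L_{j_r(i)}\bigr)
\]
is a quadratic form in exchangeable labels indexed by the directed $k$-NN graph of $Z$.

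\textbf{Mean and variance under $H_0$.} For any fixed ordered pair of distinct points, $\mathbb P(L_i=L_{j})=\frac{m(m-1)+n(n-1)}{N(N-1)}$. This gives $\E[T_{k,N}]\to\lambda_1^2+\lambda_2^2=\mu$ immediately. The variance is a sum of pair covariances. These covariances are nonzero at order $1/N$ only for edge pairs sharing a vertex: mutual neighbours ($i\to j$ and $j\to i$), common targets, or edges chained through a vertex. The number of such pairs, normalised by $N$, converges almost surely to constants. Examples are the expected number of mutual $k$-NN pairs per point and the second moment of the in-degree. This follows from the standard fact that the $k$-NN graph of i.i.d.\ continuous points has in-degree bounded by $k$ times a kissing-number constant $c_d$, together with local stationarity (a Poisson-process limit in a neighbourhood of a typical point). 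These limits are distribution-free but depend on $d$, and they define $\sigma_k^2$.

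\textbf{CLT under $H_0$.} I would first Poissonise the labels, replacing the fixed-size subset with i.i.d.\ Bernoulli$(m/N)$ labels. Then $T_{k,N}$ is a sum of locally dependent terms on a graph of bounded degree. A dependency-graph Stein bound (Baldi--Rinott) gives asymptotic normality with rate $O(N^{-1/2})$, uniformly in the configuration. I would then de-Poissonise by conditioning on the number of $X$-labels. This is a standard projection step that removes the component of the fluctuation explained by the label count and yields the stated $\sigma_k^2$. Since the conditional limit law does not depend on the configuration, it is also the unconditional limit.

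\textbf{Consistency under $H_1$.} Assume $F,G$ have densities $f,g$. Near a point $z$, the pooled sample looks locally like a Poisson process with intensity proportional to $\lambda_1 f(z)+\lambda_2 g(z)$. Each of the $k$ nearest neighbours therefore carries label $X$ with probability $\pi(z)=\lambda_1 f(z)/(\lambda_1 f(z)+\lambda_2 g(z))$, independently in the limit. Averaging over the pooled law $h=\lambda_1 f+\lambda_2 g$ gives
\[
T_{k,N}\xrightarrow{p}\tau:=\int\frac{\lambda_1^2 f^2+\lambda_2^2 g^2}{\lambda_1 f+\lambda_2 g}\,dz .
\]
The convergence in probability uses Lebesgue differentiation to justify the local limit, and Efron--Stein concentration: moving one point changes $T_{k,N}$ by $O(kc_d/N)$. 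Writing $\tau-\mu=\lambda_1\lambda_2\int (f-g)^2/(\lambda_1 f+\lambda_2 g)\,dz$ (a direct algebraic check using $\int f=\int g=1$) shows $\tau\ge\mu$, with equality iff $f=g$ almost everywhere. The test rejects when $\sqrt{Nk}\,(T_{k,N}-\mu)/\sigma_k$ exceeds a fixed normal quantile. Under $H_1$ this statistic diverges like $\sqrt N(\tau-\mu)$, so the rejection probability tends to $1$.

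\textbf{Main obstacle.} The delicate step is the local Poisson approximation of neighbour labels under $H_1$. It needs continuity of $f,g$ almost everywhere and control near the boundary of the support. The identification of the variance constants under $H_0$ is a second difficulty, since it needs almost-sure convergence of the mutual-neighbour and in-degree counts. For both, I would invoke Henze's results on $k$-NN graph functionals rather than re-derive them.
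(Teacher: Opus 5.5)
The paper does not prove this statement. It imports it from Schilling (1986, Thms.~3.1 and 3.4) and only records that consistency rests on $\liminf_N\E[T_{k,N}\mid H_1]>\lim_N\E[T_{k,N}\mid H_0]$.

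Your sketch is therefore a genuinely different, self-contained route:
\begin{itemize}
\item condition on the unlabeled pooled sample;
\item use the bounded in-degree of the $k$-NN graph to get a bounded-degree dependency structure;
\item apply a dependency-graph CLT to the labels;
\item obtain consistency from a local Poisson limit plus bounded-difference concentration.
\end{itemize}
The citation buys brevity and delegates the delicate graph limits, such as distribution-freeness of the mutual-neighbour and in-degree constants. Your route explains why $\sigma_k^2$ depends only on $\lambda_1\lambda_2$, $k$ and those $d$-dependent graph constants, which is what the informal word ``dimension-stable'' points at. The architecture is sound, but three steps need correcting.

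\textbf{Variance bookkeeping.} You work in the fixed-size permutation model; that is where $\mathbb P(L_i=L_j)=\frac{m(m-1)+n(n-1)}{N(N-1)}$ comes from. In that model it is false that only edge pairs sharing a vertex contribute at leading order.
\begin{itemize}
\item For two vertex-disjoint edges, the same-label indicators have covariance $-4\lambda_1\lambda_2(\lambda_1-\lambda_2)^2/N+O(N^{-2})$.
\item There are about $(Nk)^2$ such pairs, so they contribute $-4\lambda_1\lambda_2(\lambda_1-\lambda_2)^2Nk^2$ to $\mathrm{Var}(NkT_{k,N})$. That is the same order as the vertex-sharing terms.
\item Your claim holds only for i.i.d.\ Bernoulli labels. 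The dropped term is exactly what your later de-Poissonization removes.
\end{itemize}
So as written, your variance paragraph and your CLT paragraph define two different $\sigma_k^2$ whenever $\lambda_1\neq\lambda_2$.

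A clean way to organize the computation: take $s_i=\pm1$, $\bar s=(m-n)/N$, $\epsilon_i=s_i-\bar s$, and let $d_i$ be the in-degree. Then $\sum_{(i,j)\in E}s_is_j=Nk\bar s^2+\bar s\sum_i\epsilon_i(k+d_i)+\sum_{(i,j)\in E}\epsilon_i\epsilon_j$. The constraint $\sum_i\epsilon_i=0$ turns the linear coefficient $k+d_i$ into $d_i-k$. This gives
\begin{itemize}
\item $\sigma_k^2=4\lambda_1^2\lambda_2^2(1+p_k)+\lambda_1\lambda_2(\lambda_1-\lambda_2)^2v_k/k$,
\item where $p_k$ is the limiting fraction of directed edges that are mutual and $v_k=\lim N^{-1}\sum_i(d_i-k)^2$.
\end{itemize}
The Bernoulli computation would put $\lim N^{-1}\sum_i(d_i+k)^2$ in place of $v_k$. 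This decomposition also lets you skip Poissonization entirely: use a CLT for sampling without replacement on the linear part, plus the degenerate quadratic part. That avoids the conditional local limit theorem which conditioning on the lattice event $\{\#X\text{-labels}=m\}$ genuinely requires.

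\textbf{Consistency constant.} The identity should read $\tau-\mu=2\lambda_1^2\lambda_2^2\int(f-g)^2/(\lambda_1f+\lambda_2g)\,dz$. As a check, with disjoint supports $\tau=1$, so $\tau-\mu=2\lambda_1\lambda_2$, whereas your formula gives $1$. The conclusion that $\tau\ge\mu$, with equality iff $f=g$ a.e., is unaffected.

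\textbf{Centering.} Your argument centers at the exact null mean $\mu_N=\frac{m(m-1)+n(n-1)}{N(N-1)}$. Replacing it by $\mu=\lambda_1^2+\lambda_2^2$ inside $\sqrt{Nk}(\cdot)$ needs $\sqrt N(\mu_N-\mu)\to0$. That fails under the bare hypothesis $m/N\to\lambda_1$; take $m/N-\lambda_1\asymp N^{-1/4}$. Either read $\lambda_1,\lambda_2$ as $m/N,n/N$, or assume $\sqrt N(m/N-\lambda_1)\to0$. The statement glosses over this, but a proof has to pick one.
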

In particular, note that $\lim_{N\to\infty} \mathbb E [T_{k,N} \mid H_0 ] = \lambda_1^2 + \lambda_2^2$ does not depend on $k$.

The consistency of Schilling's test is proved in by showing $\liminf_N \E[T_{k,N} | H_1] > \lim_N \E[T_{k,N} | H_0]$, i.e. the limit infimum of the expectation of the statistic under the alternative hypothesis is strictly larger than under null hypothesis. This conforms with the intuition that if the two distributions are different then there will not be enough mixing among the samples, leading to larger values of $T_{k,N}$.

Now, we show that the PRDC metrics and Schilling's statistic $T_{k,N}$ capture some of the same information.

\begin{lemma}
    For any $Y_j\in Y$, $R_k^{(j)}(X,Y) = 0$ if and only if $\frac1k\sum_{r=1}^k I_r(Y_j) = 1$.
\end{lemma}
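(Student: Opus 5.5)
We will prove both directions by comparing the two neighbourhoods of $Y_j$ that appear in the statement: the within-sample ball $NB_k(Y_j)=NB_k(Y_j;Y)$ used by recall, and the $k$ nearest neighbours of $Y_j$ in the pooled set $Z=X\cup Y$ used by Schilling's indicators. Throughout we assume, as in the null-hypothesis analysis, that all pairwise distances are distinct almost surely (continuous distributions). Then the $k$-NN sets are well defined and there are no ties at the boundary radius.

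Let $\rho$ be the distance from $Y_j$ to its $k$-th nearest neighbour in $Y\setminus\{Y_j\}$, so that $NB_k(Y_j)$ is the open ball of radius $\rho$ about $Y_j$. By definition,
\[
R_k^{(j)}(X,Y)=0 \iff \text{no } X_i \text{ satisfies } \|X_i-Y_j\|<\rho .
\]
On the other side, $\frac1k\sum_{r=1}^k I_r(Y_j)=1$ holds iff each of the $k$ pooled nearest neighbours of $Y_j$ carries label $Y$, since each $I_r(Y_j)\in\{0,1\}$. We will show that both conditions are equivalent to the statement that the $k$ nearest points of $Z\setminus\{Y_j\}$ to $Y_j$ are exactly the $k$ nearest points of $Y\setminus\{Y_j\}$.

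\emph{($\Rightarrow$)} Suppose no $X_i$ lies in the open ball of radius $\rho$. The ball contains exactly $k-1$ points of $Y\setminus\{Y_j\}$ strictly inside it, and the $k$-th neighbour sits at distance exactly $\rho$. Every $X_i$ is at distance $\ge\rho$, and by distinctness of distances in fact $>\rho$. Hence the $k$ smallest distances from $Y_j$ within $Z\setminus\{Y_j\}$ are attained by the $k$ nearest $Y$-neighbours, so $I_r(Y_j)=1$ for $r=1,\dots,k$.

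\emph{($\Leftarrow$)} Suppose all $k$ pooled neighbours of $Y_j$ are $Y$-points. Since they are $k$ points of $Y\setminus\{Y_j\}$ that are closest among all of $Z$, they are in particular the $k$ nearest neighbours within $Y\setminus\{Y_j\}$. The farthest of them is therefore at distance $\rho$. Every $X_i$ lies outside this pooled $k$-NN set, so $\|X_i-Y_j\|>\rho$. Thus no $X_i$ lies in $NB_k(Y_j)$, and $R_k^{(j)}=0$.

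The main obstacle is not the argument itself but getting the conventions straight, and that is where the care goes. First, $NB_k(Y_j;Y)$ must exclude $Y_j$ itself, consistent with how $k/n$ arose in the earlier symmetry argument. Second, the ball is open, so the $k$-th neighbour sits on its boundary. Third, ties must be ruled out, since with a tie an $X_i$ at distance exactly $\rho$ would break the equivalence. We will state the almost-sure distinctness assumption explicitly and note that the lemma holds deterministically on that event.
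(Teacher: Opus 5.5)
Your proof is correct and follows essentially the same route as the paper: the average of the $I_r(Y_j)$ equals $1$ exactly when all $k$ pooled neighbours of $Y_j$ lie in $Y$, and that happens exactly when no $X_i$ falls in $NB_k(Y_j;Y)$. Your version is more careful: you state the conventions the paper leaves implicit (exclusion of $Y_j$, the open-ball radius, and almost-sure absence of ties), and you prove both directions, whereas the paper's parenthetical remark only justifies ``average $=1 \Rightarrow R_k^{(j)}=0$''.
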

\begin{proof}
    Since $I_r(Y_j)\in \{0,1\}$, the average $\frac1k\sum_{r=1}^k I_r(Y_j)$ can equal 1 if and only if $I_r(Y_j) = 1$ for all $1\leq r\leq k$. By definition, $I_r(Y_j) = 1$ if and only if the $r^\textrm{th}$ nearest neighbor of $Y_j$ (in $X\cup Y$) is in the set $Y$. Thus, $\frac1k\sum_{r=1}^k I_r(Y_j)=1$ if and only if there is no $X_i$ in $NB_k(Y_j;Y)$ (otherwise such an $X_i$ would be one of the first $k$ neighbors of the $Y_j$ in the set $X\cup Y$), which is equivalent to $\sum_{i=1}^m \mathds 1 (X_i \in NB_k(Y_j)) = m R_k^{(j)}(X,Y) = 0$.
\end{proof}

A similar argument shows that for any $X_i\in X$, $\sum_{j=1}^n \mathds 1 (Y_j \in NB_k(X_i)) = 0$ if and only if $\frac1k\sum_{r=1}^k I_r(X_i) = 1$.
Recall that the expression $\frac1k\sum_{r=1}^k I_r(X_i)$ measures the proportion of the first $k$ nearest neighbors of $X_i$ that have the same label as $X_i$. We can combine these expressions to recover $T_{k,N}$
\begin{align*}
    T_{k,N} &= \frac{1}{Nk} \sum_{i=1}^N \sum_{r=1}^k I_r(Z_i)
    = \frac1{m+n}\left( \sum_{i=1}^m \frac1k\sum_{r=1}^k I_r(X_i) + \sum_{j=1}^n \frac1k\sum_{r=1}^k I_r(Y_j) \right).
\end{align*}
We note that the lemma above implies $\left\lfloor \frac1k\sum_{r=1}^k I_r(Y_j) \right\rfloor = \left\lfloor 1 - R_k^{(j)}(X,Y) \right\rfloor$ where $\lfloor x \rfloor$ represents the greatest integer lesser than or equal to $x$. Moreover, $\left\lfloor \frac1k\sum_{r=1}^k I_r(Y_j) \right\rfloor = \min_{1\leq r\leq k}\{I_r(Y_j)\}$ which is equal to 1 if and only if \textit{all} of the $k$ neighbors of $Y_j$ in $X\cup Y$ are in $Y$. We can construct a new test statistic as replacing the average of $I_r$ over $r$ with the minimum,
\begin{align*}
    B_{k,N} &= \frac{1}{N} \sum_{i=1}^N  \min_{1\leq r\leq k} I_r(Z_i)
    = \frac1{m+n}\left( \sum_{i=1}^m \min_{1\leq r\leq k}{I_r(X_i)} + \sum_{j=1}^n \min_{1\leq r\leq k}{I_r(Y_j)} \right).
\end{align*}
We conjecture that $\liminf_N \E[B_{k,N} | H_1] > \lim_N \E[B_{k,N} | H_0]$, just as for $T_{k,N}$, and that $B_{k,N}$ is consistent as a consequence.
Because of the lemma above, $B_{k,N}$ can be constructed using PRDC metrics. Since Forte fits a more general distribution to the PRDC metrics, we expect it to perform at least as well as the statistic $B_{k,N}$.
\newpage
\section{Experiment Technique Details}

We are committed to scientific reproducibility, and letting each technique in literature we reproduce in our experiments to be best possibly tuned for their best performance. In this section, we share more details about our experimental setup.

\subsection{Adaptations for Text-Based OOD Detection}
Since most established Out-of-Distribution (OOD) detection methods were originally designed for computer vision, we adapted them to operate on 1024-dimensional text embeddings from the Qwen3-Embedding-0.6B sentence transformer. A common challenge was the absence of components like classifier logits or weights, which are available in supervised vision models. We addressed this by training an auxiliary binary logistic regression classifier on in-distribution (ID) texts versus synthetic background data. This classifier provided the necessary outputs, such as pseudo-gradients, weights, and logits, to enable the application of these methods in an unsupervised text-based setting. Furthermore, distance metrics were consistently adapted from Euclidean to cosine similarity to suit normalized text embeddings, and dependencies on vision-optimized libraries like FAISS were replaced with direct matrix operations in NumPy for efficient computation.

\begin{enumerate}
\item \textbf{AdaScale:} The auxiliary classifier's weights were used to compute pseudo-gradients, approximating the sensitivity of each embedding dimension. Perturbation was then applied to the most stable features, identified as those with the smallest absolute gradients.

\item \textbf{CIDER \& NNGuide:} The FAISS dependency for nearest neighbor search was removed in favor of direct cosine distance computation. We implemented exact k-NN retrieval using NumPy partitioning, which we believe improves performance over approximate methods. For NNGuide specifically, the auxiliary classifier's logits were used to generate confidence scores, which in turn produced confidence-weighted "guided" bank features.

\item \textbf{FDBD:} The weight matrices from the auxiliary binary classifier were used to adapt the denominator matrix computation from its original multi-class formulation to our binary scenario, enabling the calculation of the required weight difference norms.

\item \textbf{GMM:} We removed the supervised learning requirement by working directly with sentence embeddings. When dimensionality reduction was necessary, synthetic background data was used to create pseudo-labels for training a Linear Discriminant Analysis (LDA) model. We used the more numerically stable \texttt{sklearn} implementation for all reported results.

\item \textbf{OpenMax:} A binary class structure was established using the auxiliary classifier to compute Mean Activation Vectors (MAVs). The classifier's probability outputs, rather than raw embeddings, were then used to fit the Weibull models required by Extreme Value Theory.

\item \textbf{ReAct:} The auxiliary classifier generated the logits needed for energy score computation. The activation thresholding mechanism was adapted to work directly on the embedding vectors, where element-wise clipping was applied based on a percentile threshold derived from the training data.

\item \textbf{RMD:} We created a pseudo-binary statistical separation by computing class-conditional statistics on a random subset of the training data while using the full training set for the background distribution statistics. This ensured a sufficient distributional difference for the RMD scoring function.

\item \textbf{VIM:} The weight matrix ($w$) and bias ($b$) from the auxiliary classifier were used to define the center point for the principal subspace. This subspace was computed by applying eigendecomposition to the covariance matrix of centered text embeddings, using the eigenvectors with the smallest eigenvalues as the null space basis.

\end{enumerate}

\subsection{API Baseline Integration}
The Perspective API and the OpenAI Omni Moderation API are both natively designed for text content safety and required minimal adaptation. We implemented a unified integration layer for both, which included persistent file-based caching to minimize redundant calls, detailed logging for reproducibility, and robust error handling for network issues. To ensure consistency with our evaluation framework, the output probabilities from each API were converted into a standardized safety score, calculated as $1.0 - \text{max\_toxicity\_score}$.

\subsection{Open-Source Judge LLM Adaptations}
Some of the Judge LLMs were not designed to output the continuous confidence scores required for OOD evaluation metrics like AUROC. The primary adaptation for each model, therefore, was to convert its distinct native output, whether categorical, structured, or multi-label, into a unified numerical safety score suitable for our framework.

\begin{itemize}
\item \textbf{LlamaGuard:} Its discrete classifications (e.g., "safe," "unsafe," "unsafe S1") were mapped to fixed confidence values. "Safe" classifications received a high score (0.9), while "unsafe" and specific violation categories received progressively lower scores (0.1 and 0.05, respectively) to reflect greater certainty of harm.

\item \textbf{MD-Judge:} We used a conversation-style prompt to elicit its structured output, which contains a safety category and a numerical severity score (1-5). A scaling function then converted these discrete outputs into a continuous score, ensuring that higher severity ratings corresponded to lower safety confidence.

\item \textbf{DuoGuard:} As reccommended by their creators, its multi-label output, a probability vector across 12 safety subcategories, was converted into a single value using a max-aggregation approach. The final safety score was calculated as $1 - \max(\text{category\_probabilities})$, effectively treating the highest-risk category as the overall risk indicator.

\item \textbf{PolyGuard:} Since it evaluates prompt-response pairs, we supplied a generic, safe response ("I cannot and will not provide that information") for every input prompt. We then parsed its structured text output, which classifies prompt harmfulness and identifies policy violations. A hierarchical scoring system assigned a high score for safe content, a medium score for refusals, and progressively lower scores for harmful content based on the number of violations detected.

\end{itemize}

\section{Experiment Parameters}
\label{sec:appendix_exp_params}

This section provides a detailed account of the datasets, models, and hyperparameters used in our experiments to ensure full reproducibility.

\subsection{Toxicity and Domain-Specific Evaluation Parameters}
\label{sec:appendix_toxicity_params}
The parameters detailed below were used for the toxicity detection experiments (results in Table 1) and the zero-shot domain generalization experiments (results in Table 4).

\subsubsection{Datasets}
\begin{itemize}
    \item \textbf{In-Distribution (ID) Data:} The ID dataset was a curated mix of safe prompts, labeled as \texttt{id\_mix}. It consisted of 40,000 total samples drawn equally from four sources: \texttt{tatsu-lab/alpaca}, \texttt{databricks/databricks-dolly-15k}, \texttt{Anthropic/hh-rlhf}, and \texttt{OpenAssistant/oasst2}.
    \item \textbf{Out-of-Distribution (OOD) Data:} OOD data was sourced from multiple benchmarks, with a maximum of 10,000 samples used per benchmark.
    \begin{itemize}
        \item \textbf{Toxicity \& Hate Speech Benchmarks:} \texttt{RealToxicityPrompts}, \texttt{CivilComments}, \texttt{HatEval}, \texttt{Davidson}, \texttt{HASOC}, and \texttt{OffensEval}.
        \item \textbf{Domain-Specific Benchmarks:} Harmful prompts from five domains in the PolyGuard dataset: \texttt{social\_media}, \texttt{education}, \texttt{hr}, \texttt{code}, and \texttt{cybersecurity}.
    \end{itemize}
\end{itemize}

\subsubsection{General \& T3 Configuration}
\begin{itemize}
    \item \textbf{General:} All experiments were run on a \texttt{cuda:0} device with a random seed of \textbf{42} and a batch size of \textbf{32}.
    \item \textbf{T3 (Forte) Models:} The T3 framework was configured with a multi-view representation derived from three sentence transformers: \texttt{Qwen/Qwen3-Embedding-0.6B}, \texttt{BAAI/bge-m3}, and \texttt{intfloat/e5-large-v2}.
\end{itemize}

\subsubsection{Baseline Model Hyperparameters}
The following models and hyperparameters were used for the baseline comparisons. For all representation-based OOD methods, the primary sentence transformer was \texttt{Qwen/Qwen3-Embedding-0.6B}.
\begin{itemize}
    \item \textbf{AdaScale:} The percentile range was set to $(90.0, 99.0)$ with $k_1=50.0$, $k_2=50.0$, $\lambda=1.0$, and a perturbation strength of $o=0.1$.
    \item \textbf{CIDER:} The number of nearest neighbors was set to $K=5$.
    \item \textbf{DuoGuard:} The model used was \texttt{DuoGuard/DuoGuard-0.5B} with a classification threshold of $0.5$ and a maximum sequence length of $512$.
    \item \textbf{fDBD:} Distance to mean was used for normalization.
    \item \textbf{GMM:} The model was configured with 8 clusters, a `tied` covariance type, and used the `penultimate` feature type without dimensionality reduction. The \texttt{sklearn} implementation was used.
    \item \textbf{LlamaGuard:} The model used was \texttt{meta-llama/Llama-Guard-3-1B}.
    \item \textbf{MD-Judge (vLLM):} The model was \texttt{OpenSafetyLab/MD-Judge-v0\_2-internlm2\_7b} with a generation temperature of $0.1$, max new tokens of $128$, and GPU memory utilization of $0.7$.
    \item \textbf{NNGuide:} The number of nearest neighbors was $K=100$ with $\alpha=1.0$.
    \item \textbf{OpenMax:} The configuration used a tail size of $20$ for Weibull fitting, an $\alpha$ of $3$, and a `euclidean` distance metric.
    \item \textbf{PolyGuard (vLLM):} The model was \texttt{ToxicityPrompts/PolyGuard-Qwen-Smol}. Evaluation was performed on prompts only by providing a dummy safe response: "`I cannot and will not provide that information.`".
    \item \textbf{ReAct:} The activation clipping threshold was set to the $90^{th}$ percentile.
    \item \textbf{VIM:} The principal subspace dimension was set to $d=512$.
\end{itemize}

\subsection{Multilingual Evaluation Parameters}

\subsubsection{Datasets}
\begin{itemize}
    \item \textbf{In-Distribution (ID) Data:} The ID dataset consisted of 30,000 safe, helpful prompts sourced from the \texttt{OpenAssistant/oasst2} dataset.
    \item \textbf{Out-of-Distribution (OOD) Data:} The OOD data was composed of harmful prompts from two multilingual benchmarks. For each benchmark, a maximum of 800 samples were used per language.
    \begin{itemize}
        \item \textbf{RTP\_LX}: The languages evaluated were English (\texttt{en}), Spanish (\texttt{es}), French (\texttt{fr}), German (\texttt{de}), Italian (\texttt{it}), Portuguese (\texttt{pt}), Russian (\texttt{ru}), Japanese (\texttt{ja}), Korean (\texttt{ko}), Chinese (\texttt{zh}), Hindi (\texttt{hi}), Dutch (\texttt{nl}), Polish (\texttt{pl}), and Turkish (\texttt{tr}).
        \item \textbf{XSafety}: The languages evaluated were English (\texttt{en}), Chinese (\texttt{zh}), Arabic (\texttt{ar}), Spanish (\texttt{sp}), French (\texttt{fr}), German (\texttt{de}), Japanese (\texttt{ja}), Hindi (\texttt{hi}), and Russian (\texttt{ru}).
    \end{itemize}
\end{itemize}

\subsubsection{General \& T3 Configuration}
\begin{itemize}
    \item \textbf{General:} All experiments were run on a \texttt{cuda:0} device with a random seed of \textbf{42} and a batch size of \textbf{24}.
    \item \textbf{T3 (Forte) Models:} The T3 framework utilized a multi-view representation from three sentence transformers: \texttt{Qwen/Qwen3-Embedding-0.6B}, \texttt{BAAI/bge-m3}, and \texttt{intfloat/e5-large-v2}.
\end{itemize}

\textbf{Why you should consider using Representation Typicality Estimation} We adopted Forte \citep{ganguly2025forte} because it treats OOD detection as a representation-space problem: it uses self-supervised embeddings to preserve meaning-bearing structure and can be dropped into an existing pipeline without redesign. In practice, it can consume features from virtually any extractor—including hidden-layer activations from materials-focused neural networks—so it fits naturally with domain-specific models. Its pointwise descriptors are constructed to reflect the nearby geometry of the data, borrowing ideas from manifold and local-neighborhood modeling to respect local structure. The result is a lightweight deployment: no extra training stage, no additional detector to fit, and immediate zero-shot behavior.

Forte also avoids several common prerequisites. It does not depend on labeled classes, it does not require curated OOD examples during development, and it does not assume anything about whether the underlying model is predictive or generative (or what architecture it uses). Evidence of strong cross-domain transfer has been reported on synthetic imagery \citep{ganguly2025labeling}, medical MRI data \citep{ganguly2025forte}, anomaly detection in HPC logs \citep{chen2025k}, and LLM safety applications such as jailbreak and toxicity screening (this paper). Conceptually, this broad applicability is unsurprising: OOD detection overlaps heavily with tasks like anomaly detection, novelty detection, and open-set recognition, and Forte is built around a shared primitive—estimating how ``typical" a sample is relative to what the representation considers normal. That same typicality signal has been leveraged beyond OOD, including measuring syntactic typicality in neurosymbolic program synthesis for reasoning \citep{ganguly2024proof} serving as a practical stand-in for uncertainty quantification (UQ) \citep{ganguly2025grammars}.

\subsubsection{Baseline Model Hyperparameters}
Baseline models were configured with the same hyperparameters detailed in Appendix \ref{sec:appendix_toxicity_params}, with one exception: the primary sentence transformer for representation-based OOD methods in this evaluation was \texttt{BAAI/bge-m3} for its multilingual capabilities.

\subsection{Overrefusal Evaluation Parameters (OR-Bench)}
\label{sec:appendix_orbench_params}
The parameters in this section correspond to the overrefusal detection experiments on OR-Bench, with results presented in Table 3 of the main paper.

\subsubsection{Datasets}
The evaluation used the \texttt{bench-llm/or-bench} dataset, which is specifically designed to measure overrefusal on safe-but-challenging prompts.
\begin{itemize}
    \item \textbf{In-Distribution (ID) Data:} The ID data consists of safe prompts that are known to sometimes trigger overrefusal in LLMs.
    \begin{itemize}
        \item A pool of 5,000 safe prompts was loaded from the \texttt{or-bench-80k} and \texttt{or-bench-hard-1k} subsets.
        \item This pool was split into 3,500 prompts for the training set and 1,500 prompts for the test set.
    \end{itemize}
    \item \textbf{Out-of-Distribution (OOD) Data:} The OOD set consisted of 600 toxic prompts from the \texttt{or-bench-toxic} subset, which are designed to be correctly refused by safety models.
\end{itemize}

\subsubsection{General \& T3 Configuration}
\begin{itemize}
    \item \textbf{General:} All experiments were executed on a \texttt{cuda:0} device with a random seed of \textbf{42} and a batch size of \textbf{16}.
    \item \textbf{T3 (Forte) Models:} The T3 framework was configured with a multi-view representation derived from three sentence transformers: \texttt{Qwen/Qwen3-Embedding-0.6B}, \texttt{BAAI/bge-m3}, and \texttt{intfloat/e5-large-v2}.
\end{itemize}

\subsubsection{Baseline Model Hyperparameters}
All baseline models were configured with the same hyperparameters as those used in the Toxicity and Domain-Specific Evaluations, which are detailed in Appendix \ref{sec:appendix_toxicity_params}. The primary sentence transformer for all representation-based OOD methods was \texttt{Qwen/Qwen3-Embedding-0.6B}.

\subsection{Adversarial and Jailbreaking Evaluation Parameters}
\label{sec:appendix_jailbreak_params}
The parameters in this section correspond to the adversarial and jailbreaking detection experiments, with results presented in Table 2 of the main paper.

\subsubsection{Datasets}
\begin{itemize}
    \item \textbf{In-Distribution (ID) Data:} The ID dataset was the same \texttt{id\_mix} used in the toxicity evaluations, consisting of safe prompts from \texttt{Alpaca}, \texttt{Dolly}, \texttt{hh-rlhf}, and \texttt{OpenAssistant}. The dataset was split into 40,000 samples for training and 15,000 samples for testing.
    \item \textbf{Out-of-Distribution (OOD) Data:} The OOD data consisted of prompts from a wide range of adversarial and jailbreaking benchmarks, as described in the table.
\end{itemize}

\subsubsection{General \& T3 Configuration}
\begin{itemize}
    \item \textbf{General:} All experiments were run on a \texttt{cuda:0} device with a random seed of \textbf{42} and a batch size of \textbf{32}.
    \item \textbf{T3 (Forte) Models:} The T3 framework was configured with a multi-view representation from three sentence transformers: \texttt{Qwen/Qwen3-Embedding-0.6B}, \texttt{BAAI/bge-m3}, and \texttt{intfloat/e5-large-v2}.
\end{itemize}

\subsubsection{Baseline Model Hyperparameters}
All baseline models were configured with the same hyperparameters as those used in the Toxicity and Domain-Specific Evaluations, which are detailed in Appendix \ref{sec:appendix_toxicity_params}. The primary sentence transformer for all representation-based OOD methods was \texttt{Qwen/Qwen3-Embedding-0.6B}.

\subsection{T3/Forte Algorithmic Ablations}

\textcolor{black}{We conducted ablation studies to analyze the core components of the Forte algorithm and explore potential improvements. Our goal was to determine the contribution of each metric and to test alternative geometric approaches. We first evaluated a simplified one-sample variant of the algorithm using only precision and density scores (T3-PD), which achieves FPR@95 of approximately 20--35\%. Adding recall and coverage metrics (T3-RC, the two-sample variant) substantially improves performance, achieving FPR@95 of approximately 0.7--4\%. The full combination of all four metrics (T3-Full) provides the most robust detection, further reducing FPR@95 to approximately 1--2\% as shown in Table~\ref{tab:prdc_ablation}. This progression confirms that each metric captures distinct geometric failure modes, and the full set is necessary for comprehensive coverage.}

\textcolor{black}{Next, we investigated replacing the standard $k$-NN spherical regions with ellipsoids defined by the local covariance of neighboring points. This approach proved unsuccessful for two primary reasons: (1) \textit{Dimensionality Issues}: In high-dimensional embedding spaces, the estimated ellipsoids became highly eccentric (elongated), leading to unstable distance calculations and performance approaching random chance. (2) \textit{Computational Cost}: The overhead of estimating a unique covariance matrix for each point was prohibitively expensive, making the method impractical for large datasets.}

\textcolor{black}{Given these challenges, the use of ellipsoids was abandoned. Finally, we affirmed the framework's robustness to the choice of embedding models. Consistent with the original Forte paper's findings in computer vision \citep{ganguly2025forte}, we heuristically observed that the T3 framework's performance remained strong when different sentence transformers were used, suggesting that the method generalizes well across various NLP embedding spaces.}

\begin{table}[htbp]
\centering
\caption{\textcolor{black}{\textbf{PRDC component ablation.} T3-Full uses all four metrics (Precision, Recall, Density, Coverage). T3-RC uses only Recall and Coverage (two-sample statistics). T3-PD uses only Precision and Density (one-sample statistics). The full combination provides the most robust detection with consistently low FPR@95.}}
\label{tab:prdc_ablation}
\tiny
\setlength{\tabcolsep}{2pt}
\renewcommand{\arraystretch}{0.95}
\begin{tabularx}{\linewidth}{l l *{8}{Y}}
\toprule
\textbf{Category} & \textbf{Components} & \multicolumn{2}{c}{\textbf{Civil Comments}} & \multicolumn{2}{c}{\textbf{OffensEval}} & \multicolumn{2}{c}{\textbf{Real Toxicity}} & \multicolumn{2}{c}{\textbf{XSafety}} \\
 & & AUROC & FPR@95 & AUROC & FPR@95 & AUROC & FPR@95 & AUROC & FPR@95 \\
\midrule
\textbf{T3-Full} & \textbf{P+R+D+C} &\cellcolor[HTML]{61bf91}0.9969 &\cellcolor[HTML]{5dbd8e}0.0157 &\cellcolor[HTML]{57bb8a}0.9998 &\cellcolor[HTML]{57bb8a}0.0013 &\cellcolor[HTML]{61c091}0.9968 &\cellcolor[HTML]{5dbd8e}0.0157 &\cellcolor[HTML]{5fbf90}0.9974 &\cellcolor[HTML]{5bbc8d}0.0108 \\
\textbf{T3-RC} & \textbf{R+C (two-sample)} &\cellcolor[HTML]{8ad0ae}0.9846 &\cellcolor[HTML]{68c195}0.0366 &\cellcolor[HTML]{59bc8c}0.9993 &\cellcolor[HTML]{57bb8a}0.0013 &\cellcolor[HTML]{8ad0ad}0.9847 &\cellcolor[HTML]{69c296}0.0392 &\cellcolor[HTML]{64c093}0.9961 &\cellcolor[HTML]{59bc8b}0.0072 \\
\textbf{T3-PD} & \textbf{P+D (one-sample)} &\cellcolor[HTML]{f8fdfa}0.9513 &\cellcolor[HTML]{f7fcf9}0.3343 &\cellcolor[HTML]{c3e7d5}0.9673 &\cellcolor[HTML]{c3e6d5}0.2251 &\cellcolor[HTML]{c4e7d6}0.9672 &\cellcolor[HTML]{bae3cf}0.2076 &0.9491 &0.3494 \\
\bottomrule
\end{tabularx}
\vspace{-0.5em}
\end{table}
\newpage

\definecolor{monokai_bg}{HTML}{272822}
\definecolor{monokai_green}{HTML}{A6E22E}
\definecolor{monokai_blue}{HTML}{66D9EF}
\definecolor{monokai_purple}{HTML}{AE81FF}
\definecolor{monokai_orange}{HTML}{FD971F}
\definecolor{monokai_pink}{HTML}{F92672}
\definecolor{monokai_gray}{HTML}{75715E}
\definecolor{monokai_white}{HTML}{F8F8F2}
\definecolor{monokai_yellow}{HTML}{E6DB74}

\lstdefinelanguage{Python}{
    keywords={def, return, if, for, while, else, elif, in, import, from, as, class, try, except, finally, with, yield, lambda, None, True, False},
    keywordstyle=\color{monokai_pink}\bfseries,
    ndkeywords={self, engine_core_outputs, kwargs},
    ndkeywordstyle=\color{monokai_orange}\bfseries,
    identifierstyle=\color{monokai_white},
    sensitive=true,
    comment=[l]{\#},
    commentstyle=\color{monokai_gray}\ttfamily,
    stringstyle=\color{monokai_yellow}\ttfamily,
    morestring=[b]',
    morestring=[b]"
}

\lstset{
    language=Python,
    backgroundcolor=\color{monokai_bg},
    basicstyle=\ttfamily\footnotesize\color{monokai_white},
    showstringspaces=false,
    breaklines=true,
    frame=single,
    rulecolor=\color{monokai_gray},
    numbers=left,
    numberstyle=\tiny\color{monokai_gray},
    keywordstyle=\color{monokai_pink}\bfseries,
    commentstyle=\color{monokai_gray}\itshape,
    stringstyle=\color{monokai_green},
    emph={int,float,str,list,dict,set,tuple,process_outputs_with_T3, len, append, assemble_evaluation_batch, predict, mark_request_for_abort},
    emphstyle=\color{monokai_green}\bfseries
}

\section{Integrating T3 with vLLM for Online Generation Guardrailing}
\label{ss:online-integration}

We build on insights into underutilized hardware capability from prior work on GPU performance modeling in LLMs\citep{zhang2025efficient}. This subsection describes the design and implementation of T3 within the vLLM inference framework, focusing on the architectural mechanisms that enable real-time guardrailing during generation. The integration addresses the challenges of enforcing guardrails in high-throughput inference systems where low latency, streaming support, and continuous monitoring are critical deployment requirements. Evaluation on an NVIDIA H200 GPU demonstrates that T3 introduces only negligible generation runtime overheads even under dense evaluation frequencies. To the best of our knowledge, T3 is the first framework to integrate guardrails into online LLM generation.

\noindent\textbf{Technical Background of vLLM and Why It:}
 vLLM~\citep{kwon2023efficient} was selected as the integration target due to its combination of architectural efficiency and widespread adoption. Its PagedAttention mechanism provides scalable KV-cache management, while continuous batching enables high utilization across heterogeneous workloads. vLLM’s active development community and modular design ensure sustained compatibility and performance improvements. Importantly, the framework’s multiprocess execution model exposes well-defined integration points where safety evaluation can be embedded without perturbing inference performance or scheduling logic. Specifically, the vLLM v1 engine employs a three-tier process hierarchy to achieve scalability and fault isolation:

\textit{Main Process:} Serves as the application entry point. It handles user requests, tokenization, and overall orchestration. Communication with the Engine Core occurs via ZeroMQ IPC, enabling asynchronous scheduling and fault isolation. \textit{Engine Core:} Acts as the central scheduler responsible for global request management, batch construction, and computational resource allocation. It coordinates distributed KV-cache state, implements chunked prefill and pipeline parallelism strategies, and mediates communication between the Main and Worker processes. \textit{Worker Processes:} Execute the transformer model partitions, hosting weights, and performing inference on GPU backends. Multiple workers can operate in parallel under tensor parallelism, returning partial results that the Engine Core consolidates.
This hierarchy presents an opportunity for integrating T3 guardrails. Process isolation prevents safety component failures from affecting inference stability. The Main Process offers a natural interception point for modifying outputs without altering scheduling logic. Furthermore, inherent batching in the Engine Core to Main Process interface enables efficient group evaluation of multiple requests within guardrail checks.

\noindent\textbf{Integration Strategy and Implementation:}  
T3 was embedded directly into the vLLM pipeline rather than deployed as an external service. This co-design approach meets several requirements:
\textit{Latency Minimization:} Avoids serialization, IPC, and network overhead inherent to external microservices, reducing evaluation latency to the sub-millisecond range.  
\textit{Streaming Compatibility:} Maintains token-by-token evaluation without disrupting streaming responses, in contrast to buffering-based external systems.  
\textit{Context Accessibility:} Provides direct access to prompt history, partial outputs, and intermediate states necessary for accurate safety assessment.  
\textit{Lifecycle Control:} Enables immediate termination of unsafe generations by modifying internal finish reasons, eliminating the complexity of coordinating aborts across distributed services.

Integration is achieved by patching the \texttt{OutputProcessor.process\_outputs} method. This choice allows:
\textit{(1) Non-Invasive Modification:} Behavioral changes are introduced without altering the vLLM source, avoiding custom forks or rebuilds.  
\textit{(2) Performance Containment:} The patch intercepts output at a single chokepoint, preventing scattered performance regressions.  
\textit{(3) Maintainability:} All safety logic is localized within a single interception function, simplifying debugging and iteration. 
\textit{(4) Operational Flexibility:} Guardrails can be enabled or disabled dynamically, facilitating controlled rollout and experimentation.  

Originally, \texttt{process\_outputs} iterates over each request in a batch, performing detokenization, log probability computation, and output construction for newly generated tokens from workers. The T3 integration restructures this flow into three coordinated phases: \textit{Phase 1: State Synchronization:} Standard processing tasks such as detokenization, log probability computation, and request state updates are performed. Newly generated text segments are accumulated into an injected class-level tracking structure (\texttt{self.reqs}) for later evaluation.  
\textit{Phase 2: Guardrail Evaluation:} Candidate requests are selected using multi-tier scheduling policies that balance detection frequency with computational efficiency. Batches of candidate texts are passed through safety classifiers, and unsafe requests are flagged by setting their \texttt{finish\_reason} to \texttt{ABORT}. 
\textit{Phase 3: Conditional Output Generation:} Final outputs are constructed. For requests flagged in Phase 2, the system produces termination responses annotated with explicit stop reasons, while all other requests continue through the normal generation path. 

\begin{lstlisting}[language=Python, caption=T3 Integration via Monkey Patching vLLM's \texttt{process\_outputs} Function, label=lst:guardrails-loop-split]

def process_outputs_with_T3(self, engine_core_outputs, **kwargs):
    if not hasattr(self, 'reqs'):
        self.reqs = {}
    req_outputs = [] reqs_to_abort = []

    # Phase 1: Standard vLLM processing + text accumulation
    for engine_core_output in engine_core_outputs:
        # ... Standard vLLM processing (stats, detokenization, logprobs)
        
        # Track accumulated text metadata for T3
        current_text = req_state.detokenizer.output_text
        self.reqs[req_id] = {
        'text': current_text, 'word_count': len(current_text.split()),
        'last_predicted_at': 0, 'finish_reason': finish_reason
        }
    
    # Phase 2: T3 processing depending on batch size satisfaction
    texts, req_ids = assemble_evaluation_batch(self.reqs)
    if len(texts) >= min_batch_size:
        predictions = guardrails.predict(texts)
        for i, req_id in enumerate(req_ids):
            if predictions[i] < 1:  # Toxic detected
                mark_request_for_abort(engine_core_outputs, req_id)
    
    # Phase 3: Output creation with guardrails decisions
    for engine_core_output in engine_core_outputs:
        # ... Standard vLLM output creation and cleanup ...
        req_outputs.append(req_state.make_req_output(engine_core_output))
        reqs_to_abort.append(req_output if req_output.finished else None)
    
    return OutputProcessorOutput(req_outputs, reqs_to_abort)

# Apply the patch
OutputProcessor.process_outputs = process_outputs_with_T3
\end{lstlisting} 

The integration is designed to minimize computational overhead:
\textit{Candidate Scheduling:} Configurable T3 evaluations are enforced using hierarchical scheduling policies (via \texttt{assemble\_evaluation\_batch}). Primary selection identifies requests that reach predefined \texttt{(word\_count}$-$\texttt{last\_predicted\_at)} thresholds. Secondary policies expand candidate sets to near-threshold requests or those nearing completion, thereby stabilizing batch sizes and ensuring efficient GPU utilization. When sufficient batch size cannot be achieved, the system either defers evaluation to subsequent iterations or proceeds with a reduced candidate set under a fallback policy that balances safety responsiveness against computational efficiency.
\textit{Memory Efficiency:} Request metadata structures (\texttt{self.reqs}) track only essential information (text buffers, counts, evaluation timestamps, and safety flags). Memory pooling and cleanup policies prevent fragmentation and overhead accumulation during long-running deployments.  
\textit{Overlapping Computation in Resource-Constrained Settings:} T3 evaluation, executed in the Main Process, runs concurrently with continuous inference in the Worker Processes. When both share the same accelerator, concurrency is achieved either through true parallel execution with CUDA Multi-Process Service (MPS) or through temporal slicing when MPS is unavailable. Guardrailing workloads are opportunistically scheduled into idle GPU cycles, allowing their latency to be effectively hidden behind inference kernels while minimizing contention. 

\noindent\textbf{Runtime Performance Evaluation on Integration:}  
We evaluated the runtime impact of integrating online guardrails into vLLM (v0.10.2) through detailed profiling with NVIDIA Nsight Systems, using NVTX range annotations to isolate initialization and generation phases. Experiments were conducted on an NVIDIA H200 GPU, employing T3 with three embedding models (Qwen3-Embedding-0.6B, BGE-M3, and E5-Large-V2) trained on 1,000 safe instruction samples from the Alpaca dataset. Generation was benchmarked on Facebook’s OPT-125M model, a relatively small LLM chosen to enable rapid inference and thus stress the guardrail system. T3 was configured with a dense evaluation interval of 20 tokens and a prediction batch size of 32 requests, representing a computationally intensive safety configuration.  

\begin{figure}[htbp]
  \centering
  \includegraphics[width=\linewidth]{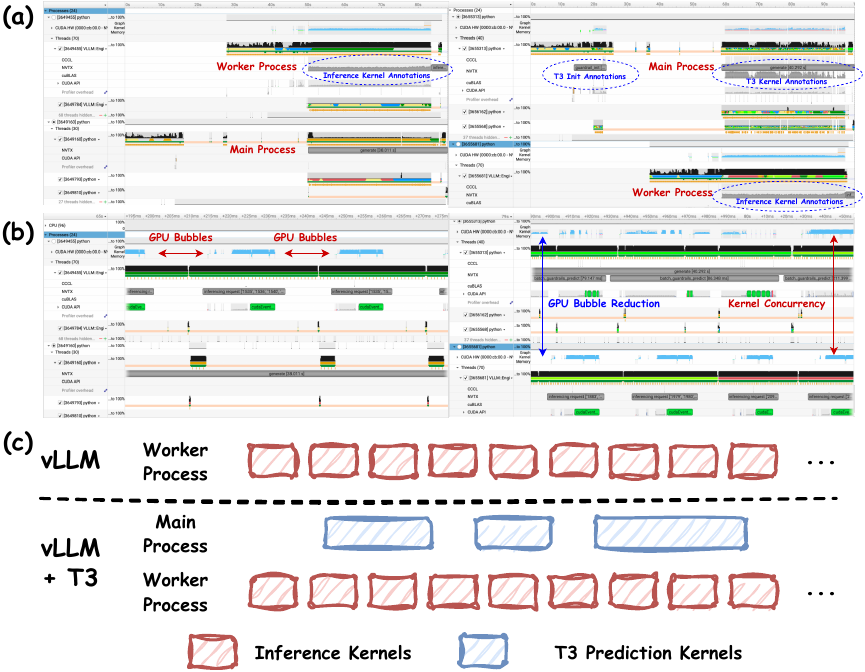}
  \caption{NVIDIA Nsight Systems profiling of vLLM baseline vs.\ vLLM+T3.  
  (a) Full execution timeline comparison.  
  (b) Zoomed-in view showing kernel concurrency and reduced GPU bubbles in vLLM+T3.  
  (c) Conceptual illustration of overlapping inference kernels (Worker Processes) with T3 prediction kernels (Main Process).  
  The integration reduces idle GPU periods between consecutive generations, improving utilization while preserving low-latency inference.}
  \label{fig:t3-overlap}
\end{figure}

As shown in Table~\ref{tab:runtime-overhead}, two workload scales were examined. In the 500-prompt experiment, baseline vLLM completed generation in 6.342 seconds, while the guardrail-enabled system took 6.439 seconds for generation, reflecting a mere 1.5\% overhead. In the larger 5{,}000-prompt experiment, baseline vLLM completed in 38.011 seconds compared to 40.292 seconds with guardrails, corresponding to only 6\% overhead (2.281 seconds) while providing continuous safety monitoring across 5{,}000 requests. The nearly identical initialization times (10.5s in the 500-prompt case vs.\ 9.8s in the 5{,}000-prompt case) confirm that one-time setup costs are independent of workload size. Profiling further reveals that T3’s prediction workload in the Main Process is almost entirely overlapped with token generation in the Worker Processes, improving overall GPU utilization and reducing idle periods (GPU bubbles) between consecutive generations (Figure~\ref{fig:t3-overlap}). These results demonstrate that the three-phase architecture and batching strategies preserve vLLM’s high-throughput characteristics on modern accelerators while sustaining real-time guardrail enforcement even under dense evaluation intervals.

\begin{table}[htbp]
\footnotesize
\centering
\caption{Runtime performance of baseline vLLM vs. T3 integration running concurrently on an NVIDIA H200 GPU. T3 is configured with a 20-word guardrail interval and a batch size of 32. Given that the generation overhead is negligible in this shared-resource setting, we anticipate virtually no overhead when T3 is deployed with more aggressive settings on dedicated accelerators.}
\label{tab:runtime-overhead}
\begin{tabular}{lcccc}
\toprule
\textbf{Workload} & \textbf{System} & \textbf{T3 Init (s)} & \textbf{Inference (s)} & \textbf{Inference Overhead} \\
\midrule
500 prompts & vLLM baseline  & --    & 6.34  & -- \\
            & vLLM + T3      & 10.5  & 6.44  & 1.5\% \\
\midrule
5,000 prompts & vLLM baseline & --    & 38.01 & -- \\
              & vLLM + T3     & 9.8   & 40.29 & 6.0\% \\
\bottomrule
\end{tabular}
\end{table}

\subsection{Post-Generation Guardrailing with vLLM}
While online detection intervenes during generation, \emph{post-generation guardrailing} evaluates outputs after completion, rendering it a drop-in post-processor to any LLM serving framework. This mode is particularly suited for high-throughput batch inference, retrospective auditing, and multi-pass evaluation pipelines where responses must be filtered or scored without disrupting the decoding process. To remain practical at scale, such checks must impose minimal overhead to avoid degrading overall throughput.
We benchmarked T3 alongside several representative guardrail methods, DUOGUARD, POLYGUARD, MDJUDGE, and LLAMAGUARD using the OR-Bench dataset on an NVIDIA H200 GPU. The vLLM engine was configured to load the OPT-125M model and generate responses of up to 256 tokens, with these guardrail techniques as post-processors. Batch sizes ranged from 8 to 64. The runtime was measured by averaging over 20 runs, following 5 warm-up iterations. LLAMAGUARD did not support batch sizes $\geq$ 32.

\begin{table}[htbp]
\caption{Runtime (in \textbf{milliseconds}) of post-generation guardrailing. Methods are applied post-inference with vLLM, configured with OPT-125M and a maximum generation length of 256 tokens. Batch sizes range from 8 to 64 using OR-Bench. Runtime was measured with the Torch Profiler, averaging 20 runs after warm-up.}
\label{tab:runtime_table}
\centering
\scriptsize
\begin{tabularx}{\textwidth}{lYYYYYY}
\toprule
\textbf{Batch Size} & \textbf{T3\_GMM} & \textbf{T3\_OCSVM} & \textbf{DUOGUARD} & \textbf{POLYGUARD} & \textbf{MDJUDGE} & \textbf{LLAMAGUARD} \\
\midrule
8   & 68.73  & 68.79  & 40.78  & 255.97 & 1105.33 & 676.98 \\
16  & 60.08  & 59.84  & 48.55  & 273.60 & 1262.26 & 1376.42 \\
32  & 85.27  & 59.84  & 48.55  & 312.21 & 1439.38 & 2675.74 \\
64  & 155.81 & 146.49 & 108.04 & 373.74 & 1524.49 & N/A \\
\bottomrule
\end{tabularx}
\end{table}

As shown in Table~\ref{tab:runtime_table}, the six methods exhibit a clear efficiency hierarchy. DUOGUARD achieves the lowest latency, remaining under 110~ms, but its lightweight design offers more limited detection capability compared to T3. The two T3 variants (GMM and OCSVM) deliver runtimes between 60–156~ms, scaling moderately with batch size while maintaining substantially higher detection fidelity. This positions T3 as an efficient yet accurate alternative, striking a balance between speed and robustness. POLYGUARD introduces significantly higher overheads, while MDJUDGE and LLAMAGUARD are prohibitively expensive for large-scale use: MDJUDGE exceeds one second even at small batches and grows to over 1.5~s at batch size 64, while LLAMAGUARD more than doubles at each step and fails beyond batch size 32. Overall, T3 provides a practical middle ground, retaining competitive efficiency while delivering stronger safety guarantees than lightweight filters and avoiding the prohibitive costs of heavyweight evaluators, making it well-suited for post-generation pipelines where throughput and accuracy must be jointly preserved.

\newpage
\section{Additional Experiments}
\subsection{Evaluation on WildGuardMix}

We evaluate T3 on WildGuardMix \citep{han2024wildguard}, using both the training and test splits provided by \texttt{allenai/wildguardmix}. For each example, we read the prompt, \texttt{prompt\_harm\_label}, and \texttt{adversarial} flag. A prompt is treated as harmful (OOD) if \texttt{prompt\_harm\_label="harmful"} OR \texttt{adversarial=True}. The Test split is human-annotated (higher quality), while the Train split is GPT-4-labeled (larger). The ID (safe) half of the evaluation is drawn from our standard held-out safe mixture (Alpaca/Dolly/OpenAssistant).

\begin{adjustwidth}{-0.5 cm}{0 cm}
\begin{minipage}{0.5\textwidth}
\centering
\begin{threeparttable}[htbp]
\caption{\textbf{Performance on WildGuardMix.} T3 achieves the best AUROC and lowest FPR@95 on both splits, outperforming all baselines including WildGuard and PolyGuard. The Test split (human-annotated) provides a stricter evaluation than the Train split (GPT-4-labeled).}
\label{tab:wildguardmix}
\tiny
\setlength{\tabcolsep}{1.9pt}
\begin{tabular}{l *{4}{c}}
\toprule
\textbf{Dataset} & \multicolumn{2}{c}{\textbf{WildGuardMix Test}} & \multicolumn{2}{c}{\textbf{WildGuardMix Train}} \\
\textbf{Metric} & AUROC & FPR@95 & AUROC & FPR@95 \\
\textbf{Method} & & & & \\
\midrule
\textbf{ADASCALE} &\cellcolor[HTML]{e1f3ea}0.4299 &\cellcolor[HTML]{fbfdfc}0.9869 &\cellcolor[HTML]{f0f9f4}0.3259 &\cellcolor[HTML]{fcfefd}0.9907 \\
\textbf{CIDER} &\cellcolor[HTML]{75c79f}0.7909 &\cellcolor[HTML]{94d4b5}0.5993 &\cellcolor[HTML]{81cca7}0.7414 &\cellcolor[HTML]{b3e0ca}0.6586 \\
\textbf{DUOGUARD} &\cellcolor[HTML]{85ceaa}0.7366 &\cellcolor[HTML]{d5eee2}0.8446 &\cellcolor[HTML]{7acaa3}0.7661 &\cellcolor[HTML]{e8f5ef}0.8975 \\
\textbf{FDBD} &\cellcolor[HTML]{d7efe3}0.4626 &\cellcolor[HTML]{fafcfb}0.9812 &\cellcolor[HTML]{bee5d2}0.5112 &\cellcolor[HTML]{f8fcfa}0.9707 \\
\textbf{GMM} &\cellcolor[HTML]{8dd1b0}0.7094 &\cellcolor[HTML]{d9efe4}0.8585 &\cellcolor[HTML]{91d3b3}0.6809 &\cellcolor[HTML]{e2f3eb}0.8730 \\
\textbf{LLAMAGUARD3-1B} &\cellcolor[HTML]{9ed8bb}0.654 &1.0000 &\cellcolor[HTML]{8bd0ae}0.7032 &1.0000 \\
\textbf{MDJUDGE} &\cellcolor[HTML]{99d6b8}0.6715 &\cellcolor[HTML]{d7efe3}0.8512 &\cellcolor[HTML]{71c69c}0.8024 &\cellcolor[HTML]{cae9da}0.7620 \\
\textbf{NNGUIDE} &\cellcolor[HTML]{b4e1cb}0.5798 &\cellcolor[HTML]{f3faf7}0.9583 &\cellcolor[HTML]{c7e8d8}0.4796 &\cellcolor[HTML]{fbfdfc}0.9849 \\
\textbf{OPENMAX} &\cellcolor[HTML]{ceebdd}0.4937 &\cellcolor[HTML]{fafdfb}0.9828 &\cellcolor[HTML]{a5dbc0}0.6077 &\cellcolor[HTML]{fdfefe}0.9938 \\
\textbf{POLYGUARD} &\cellcolor[HTML]{77c8a0}0.7837 &\cellcolor[HTML]{c1e6d4}0.7686 &\cellcolor[HTML]{5ebe8f}0.8721 &\cellcolor[HTML]{89cfac}0.4686 \\
\textbf{REACT} &0.3286 &\cellcolor[HTML]{fcfefd}0.9918 &0.2662 &\cellcolor[HTML]{fefefe}0.9961 \\
\textbf{RMD} &\cellcolor[HTML]{a2d9be}0.6415 &\cellcolor[HTML]{eff8f4}0.9419 &\cellcolor[HTML]{a6dbc1}0.6018 &\cellcolor[HTML]{f5fbf8}0.9587 \\
\textbf{VIM} &\cellcolor[HTML]{bae3cf}0.5595 &\cellcolor[HTML]{f8fcfa}0.9763 &\cellcolor[HTML]{bbe4d0}0.5241 &\cellcolor[HTML]{fbfdfc}0.9829 \\
& & & & \\
\textbf{T3+GMM} &\cellcolor[HTML]{5fbf90}\textbf{0.8623} &\cellcolor[HTML]{5abc8c}\textbf{0.381} &\cellcolor[HTML]{57bb8a}\textbf{0.8971} &\cellcolor[HTML]{57bb8a}\textbf{0.2422} \\
\textbf{T3+OCSVM} &\cellcolor[HTML]{57bb8a}\textbf{0.8882} &\cellcolor[HTML]{57bb8a}\textbf{0.3663} &\cellcolor[HTML]{5bbd8d}\textbf{0.8853} &\cellcolor[HTML]{5fbe8f}\textbf{0.2802} \\
\bottomrule
\end{tabular}
\end{threeparttable}
\end{minipage}%
\end{adjustwidth}

\subsection{Embedding Model Ablation}

To verify that our choice of Qwen3-Embedding-0.6B does not unfairly disadvantage baseline OOD methods, we conducted additional experiments using larger embedding models (4B and 8B parameters). These ablations demonstrate that while larger embeddings provide modest improvements for baseline methods, T3 maintains its substantial performance advantage, confirming that the performance gap is due to T3's methodology rather than the embedding backbone.

\begin{table}[htbp]
\centering
\caption{\textbf{Embedding ablation with 4B model.} Baseline OOD methods using a 4B parameter embedding model. Despite the larger embedding capacity, traditional OOD methods still exhibit high false positive rates, while T3 maintains strong performance.}
\label{tab:qwen4b_ablation}
\tiny
\setlength{\tabcolsep}{2pt}
\renewcommand{\arraystretch}{0.95}
\begin{tabularx}{\linewidth}{l *{12}{Y}}
\toprule
\textbf{Dataset} & \multicolumn{2}{c}{\textbf{Civil Comments}} & \multicolumn{2}{c}{\textbf{Davidson et al.}} & \multicolumn{2}{c}{\textbf{Hasoc}} & \multicolumn{2}{c}{\textbf{Hateval}} & \multicolumn{2}{c}{\textbf{OffensEval}} & \multicolumn{2}{c}{\textbf{Real Toxicity}} \\
\textbf{Metric} & AUROC & FPR@95 & AUROC & FPR@95 & AUROC & FPR@95 & AUROC & FPR@95 & AUROC & FPR@95 & AUROC & FPR@95 \\
\textbf{Method} & & & & & & & & & & & & \\
\midrule
\textbf{ADASCALE} &0.3959 &\cellcolor[HTML]{fefefe}0.995 &0.1851 &\cellcolor[HTML]{fefefe}0.9998 &\cellcolor[HTML]{e8f6ef}0.4865 &\cellcolor[HTML]{fefefe}0.9928 &0.3834 &\cellcolor[HTML]{fefefe}0.9943 &0.3356 &\cellcolor[HTML]{fefefe}0.9963 &\cellcolor[HTML]{e3f4ec}0.4965 &\cellcolor[HTML]{fbfdfc}0.9838 \\
\textbf{CIDER} &\cellcolor[HTML]{8dd1b0}\textbf{0.7263} &\cellcolor[HTML]{e8f6ef}0.918 &\cellcolor[HTML]{a0d9bd}0.6112 &\cellcolor[HTML]{fbfdfc}0.9814 &\cellcolor[HTML]{7bcaa3}\textbf{0.783} &\cellcolor[HTML]{ddf1e7}0.8774 &\cellcolor[HTML]{8bd0ae}0.7734 &\cellcolor[HTML]{dbf0e6}0.8208 &\cellcolor[HTML]{94d4b5}0.6913 &\cellcolor[HTML]{f9fcfb}0.9760 &\cellcolor[HTML]{7dcba5}\textbf{0.7756} &\cellcolor[HTML]{cceadb}0.8161 \\
\textbf{FDBD} &\cellcolor[HTML]{c5e8d7}0.565 &\cellcolor[HTML]{fbfdfc}0.9853 &\cellcolor[HTML]{7ecba6}0.7601 &\cellcolor[HTML]{f3faf7}0.9454 &\cellcolor[HTML]{eaf7f0}0.4822 &0.9938 &\cellcolor[HTML]{cdebdc}0.5537 &\cellcolor[HTML]{f9fcfb}0.9689 &\cellcolor[HTML]{a2dabe}0.6448 &\cellcolor[HTML]{f4faf7}0.9551 &0.4191 &0.9947 \\
\textbf{GMM} &\cellcolor[HTML]{a2dabe}0.6667 &\cellcolor[HTML]{fafdfb}0.9802 &\cellcolor[HTML]{8bd1af}0.7020 &\cellcolor[HTML]{f3faf6}0.9419 &\cellcolor[HTML]{95d5b6}0.7113 &\cellcolor[HTML]{f5fbf8}0.9609 &\cellcolor[HTML]{84cda9}0.7979 &\cellcolor[HTML]{e2f3eb}0.8578 &\cellcolor[HTML]{99d6b8}0.6741 &\cellcolor[HTML]{fcfdfc}0.9869 &\cellcolor[HTML]{93d4b4}0.7152 &\cellcolor[HTML]{eaf6f0}0.9213 \\
\textbf{NNGUIDE} &\cellcolor[HTML]{f6fcf9}0.4221 &\cellcolor[HTML]{fdfefe}0.9926 &\cellcolor[HTML]{f7fcf9}0.225 &0.9999 &\cellcolor[HTML]{dcf1e6}0.5212 &\cellcolor[HTML]{fdfefd}0.9882 &\cellcolor[HTML]{e7f6ef}0.4642 &\cellcolor[HTML]{fdfefd}0.9883 &\cellcolor[HTML]{f8fcfa}0.3607 &0.9987 &\cellcolor[HTML]{d3ede0}0.5419 &\cellcolor[HTML]{fbfdfc}0.9819 \\
\textbf{OPENMAX} &\cellcolor[HTML]{d8f0e4}0.5091 &0.9967 &\cellcolor[HTML]{89d0ad}0.711 &\cellcolor[HTML]{fefefe}0.9997 &0.4238 &0.9938 &\cellcolor[HTML]{d4eee1}0.5282 &0.9954 &\cellcolor[HTML]{b8e3ce}0.5719 &\cellcolor[HTML]{fefefe}0.9976 &\cellcolor[HTML]{fefffe}0.4240 &\cellcolor[HTML]{fbfdfc}0.9836 \\
\textbf{REACT} &\cellcolor[HTML]{f0f9f5}0.4403 &\cellcolor[HTML]{fefefe}0.9944 &\cellcolor[HTML]{f9fdfb}0.216 &\cellcolor[HTML]{fefefe}0.9997 &\cellcolor[HTML]{dbf1e6}0.5227 &\cellcolor[HTML]{fefefe}0.9913 &\cellcolor[HTML]{fafdfc}0.4009 &\cellcolor[HTML]{fefefe}0.9929 &\cellcolor[HTML]{f6fbf9}0.3684 &\cellcolor[HTML]{fefefe}0.9963 &\cellcolor[HTML]{d2ede0}0.5438 &\cellcolor[HTML]{f9fcfb}0.9769 \\
\textbf{RMD} &\cellcolor[HTML]{b3e0ca}0.6179 &\cellcolor[HTML]{fcfefd}0.9891 &\cellcolor[HTML]{95d4b5}0.6598 &\cellcolor[HTML]{f9fcfb}0.9737 &\cellcolor[HTML]{a7dcc2}0.664 &\cellcolor[HTML]{fafdfb}0.9787 &\cellcolor[HTML]{97d5b7}0.7337 &\cellcolor[HTML]{f6fbf8}0.9515 &\cellcolor[HTML]{a3dabf}0.6423 &\cellcolor[HTML]{fbfdfc}0.9856 &\cellcolor[HTML]{aaddc4}0.6539 &\cellcolor[HTML]{f6fbf8}0.9632 \\
\textbf{VIM} &\cellcolor[HTML]{c9eada}0.5518 &\cellcolor[HTML]{fefefe}0.9954 &\cellcolor[HTML]{addec6}0.5517 &\cellcolor[HTML]{fefefe}0.9974 &\cellcolor[HTML]{c1e6d4}0.5945 &\cellcolor[HTML]{fdfefe}0.9897 &\cellcolor[HTML]{b1e0c9}0.6467 &\cellcolor[HTML]{fcfdfd}0.9830 &\cellcolor[HTML]{bfe5d3}0.5492 &0.9987 &\cellcolor[HTML]{a5dbc1}0.6663 &\cellcolor[HTML]{f4faf7}0.9581 \\
\midrule
& & & & & & & & & & & & \\
\textbf{T3+GMM} &\cellcolor[HTML]{96d5b6}0.7010 &\cellcolor[HTML]{69c297}\textbf{0.4633} &\cellcolor[HTML]{62c092}\textbf{0.8863} &\cellcolor[HTML]{57bb8a}\textbf{0.1780} &\cellcolor[HTML]{97d5b7}0.7061 &\cellcolor[HTML]{58bb8a}\textbf{0.4173} &\cellcolor[HTML]{68c296}\textbf{0.8898} &\cellcolor[HTML]{57bb8a}\textbf{0.1723} &\cellcolor[HTML]{6bc398}\textbf{0.8270} &\cellcolor[HTML]{57bb8a}\textbf{0.3139} &\cellcolor[HTML]{9ad7b9}0.6961 &\cellcolor[HTML]{67c195}\textbf{0.4628} \\
\textbf{T3+OCSVM} &\cellcolor[HTML]{57bb8a}\textbf{0.8807} &\cellcolor[HTML]{57bb8a}\textbf{0.3959} &\cellcolor[HTML]{57bb8a}\textbf{0.9332} &\cellcolor[HTML]{60bf90}\textbf{0.2267} &\cellcolor[HTML]{57bb8a}\textbf{0.8793} &\cellcolor[HTML]{57bb8a}\textbf{0.4132} &\cellcolor[HTML]{57bb8a}\textbf{0.9450} &\cellcolor[HTML]{59bc8b}\textbf{0.1859} &\cellcolor[HTML]{57bb8a}\textbf{0.8913} &\cellcolor[HTML]{6ec49a}\textbf{0.4106} &\cellcolor[HTML]{57bb8a}\textbf{0.8795} &\cellcolor[HTML]{57bb8a}\textbf{0.4051} \\
\bottomrule
\end{tabularx}
\vspace{-0.5em}
\end{table}


\begin{table}[htbp]
\centering
\caption{\textbf{Embedding ablation with 8B model.} Baseline OOD methods using an 8B parameter embedding model. Even with significantly larger embeddings, traditional methods fail to approach T3's performance, particularly in FPR@95.}
\label{tab:qwen_ablation}
\tiny
\setlength{\tabcolsep}{2pt}
\renewcommand{\arraystretch}{0.95}
\begin{tabularx}{\linewidth}{l *{12}{Y}}
\toprule
\textbf{Dataset} & \multicolumn{2}{c}{\textbf{Civil Comments}} & \multicolumn{2}{c}{\textbf{Davidson et al.}} & \multicolumn{2}{c}{\textbf{Hasoc}} & \multicolumn{2}{c}{\textbf{Hateval}} & \multicolumn{2}{c}{\textbf{OffensEval}} & \multicolumn{2}{c}{\textbf{Real Toxicity}} \\
\textbf{Metric} & AUROC & FPR@95 & AUROC & FPR@95 & AUROC & FPR@95 & AUROC & FPR@95 & AUROC & FPR@95 & AUROC & FPR@95 \\
\textbf{Method} & & & & & & & & & & & & \\
\midrule
\textbf{ADASCALE} &0.4315 &0.9839 &0.1961 &0.9818 &\cellcolor[HTML]{e8f6ef}0.5075 &\cellcolor[HTML]{f8fcfa}0.9568 &0.4023 &\cellcolor[HTML]{fbfdfc}0.9625 &0.3751 &\cellcolor[HTML]{fdfefe}0.9775 &\cellcolor[HTML]{e8f6ef}0.5152 &\cellcolor[HTML]{f8fcfa}0.9602 \\
\textbf{CIDER} &\cellcolor[HTML]{8ad0ae}\textbf{0.7746} &\cellcolor[HTML]{e2f3ea}0.8748 &\cellcolor[HTML]{9cd7ba}0.6347 &\cellcolor[HTML]{f8fcfa}0.9467 &\cellcolor[HTML]{83cda9}\textbf{0.7967} &\cellcolor[HTML]{d6eee2}0.8329 &\cellcolor[HTML]{89d0ad}0.7873 &\cellcolor[HTML]{d8efe4}0.7877 &\cellcolor[HTML]{8dd1af}0.7352 &\cellcolor[HTML]{f4faf7}0.9378 &\cellcolor[HTML]{7dcba5}\textbf{0.8007} &\cellcolor[HTML]{cceadb}0.8021 \\
\textbf{FDBD} &\cellcolor[HTML]{c2e7d5}0.6101 &\cellcolor[HTML]{f8fcfa}0.9593 &\cellcolor[HTML]{7bcaa3}0.7827 &\cellcolor[HTML]{f4faf7}0.9266 &\cellcolor[HTML]{e4f4ec}0.5184 &\cellcolor[HTML]{f4faf7}0.9439 &\cellcolor[HTML]{c9eada}0.5795 &\cellcolor[HTML]{f6fbf9}0.9385 &\cellcolor[HTML]{9bd7b9}0.6914 &\cellcolor[HTML]{f4faf7}0.9415 &0.4525 &0.9822 \\
\textbf{GMM} &\cellcolor[HTML]{a6dbc1}0.6936 &\cellcolor[HTML]{f7fcf9}0.9567 &\cellcolor[HTML]{89cfad}0.7205 &\cellcolor[HTML]{f0f9f4}0.9084 &\cellcolor[HTML]{96d5b6}0.7429 &\cellcolor[HTML]{f4faf7}0.9417 &\cellcolor[HTML]{7acaa3}0.8371 &\cellcolor[HTML]{e2f3eb}0.839 &\cellcolor[HTML]{9ad7b9}0.6921 &\cellcolor[HTML]{fcfefd}0.9741 &\cellcolor[HTML]{98d6b7}0.73 &\cellcolor[HTML]{e8f5ef}0.9018 \\
\textbf{NNGUIDE} &\cellcolor[HTML]{fcfefd}0.4421 &\cellcolor[HTML]{f9fcfb}0.9632 &\cellcolor[HTML]{eef9f4}0.2719 &\cellcolor[HTML]{fefefe}0.9814 &\cellcolor[HTML]{d9f0e5}0.5489 &\cellcolor[HTML]{f9fcfb}0.9622 &\cellcolor[HTML]{dff2e9}0.5087 &\cellcolor[HTML]{fefefe}0.976 &\cellcolor[HTML]{fafdfc}0.3927 &\cellcolor[HTML]{fcfefd}0.9735 &\cellcolor[HTML]{ceebdd}0.5858 &\cellcolor[HTML]{f1f9f5}0.9336 \\
\textbf{OPENMAX} &\cellcolor[HTML]{daf0e5}0.5413 &\cellcolor[HTML]{fefefe}0.9804 &\cellcolor[HTML]{82cda8}0.7497 &\cellcolor[HTML]{f8fcfa}0.9513 &0.44 &0.9806 &\cellcolor[HTML]{d2ede0}0.5501 &0.978 &\cellcolor[HTML]{b1e0c9}0.6206 &\cellcolor[HTML]{fafdfb}0.9638 &0.4541 &\cellcolor[HTML]{f5fbf8}0.9498 \\
\textbf{REACT} &\cellcolor[HTML]{f3fbf7}0.4671 &\cellcolor[HTML]{fefefe}0.9815 &\cellcolor[HTML]{f3fbf7}0.2493 &\cellcolor[HTML]{fcfefd}0.9714 &\cellcolor[HTML]{d8efe4}0.5536 &\cellcolor[HTML]{f4faf7}0.9414 &\cellcolor[HTML]{f1faf6}0.4483 &\cellcolor[HTML]{f7fcfa}0.9431 &\cellcolor[HTML]{fcfefd}0.3866 &0.9834 &\cellcolor[HTML]{d5eee2}0.5664 &\cellcolor[HTML]{f6fbf9}0.9521 \\
\textbf{RMD} &\cellcolor[HTML]{b1e0c9}0.6612 &\cellcolor[HTML]{fcfefd}0.9748 &\cellcolor[HTML]{8ed2b0}0.6979 &\cellcolor[HTML]{f7fbf9}0.9416 &\cellcolor[HTML]{a5dbc1}0.6985 &\cellcolor[HTML]{f8fcfa}0.9578 &\cellcolor[HTML]{8bd0ae}0.7832 &\cellcolor[HTML]{f2faf6}0.9173 &\cellcolor[HTML]{9fd8bc}0.6785 &\cellcolor[HTML]{f6fbf8}0.9469 &\cellcolor[HTML]{aedfc7}0.6693 &\cellcolor[HTML]{edf7f2}0.9194 \\
\textbf{VIM} &\cellcolor[HTML]{cdebdc}0.5787 &\cellcolor[HTML]{f5fbf8}0.9492 &\cellcolor[HTML]{aaddc4}0.5728 &\cellcolor[HTML]{fbfdfc}0.9656 &\cellcolor[HTML]{c6e8d7}0.6046 &\cellcolor[HTML]{f5fbf8}0.9463 &\cellcolor[HTML]{a8dcc3}0.6875 &\cellcolor[HTML]{f9fcfa}0.9497 &\cellcolor[HTML]{bde4d1}0.5847 &\cellcolor[HTML]{fefefe}0.9828 &\cellcolor[HTML]{abddc4}0.6793 &\cellcolor[HTML]{ebf7f1}0.9121 \\
\midrule
& & & & & & & & & & & & \\
\textbf{T3+GMM} &\cellcolor[HTML]{92d3b3}0.7508 &\cellcolor[HTML]{6dc499}\textbf{0.4352} &\cellcolor[HTML]{5cbd8e}\textbf{0.9175} &\cellcolor[HTML]{57bb8a}\textbf{0.1298} &\cellcolor[HTML]{91d3b3}0.7549 &\cellcolor[HTML]{57bb8a}\textbf{0.3723} &\cellcolor[HTML]{5cbe8e}\textbf{0.9342} &\cellcolor[HTML]{57bb8a}\textbf{0.1518} &\cellcolor[HTML]{6cc499}\textbf{0.8375} &\cellcolor[HTML]{57bb8a}\textbf{0.2839} &\cellcolor[HTML]{9fd8bc}0.7106 &\cellcolor[HTML]{63bf92}\textbf{0.4284} \\
\textbf{T3+OCSVM} &\cellcolor[HTML]{57bb8a}\textbf{0.9212} &\cellcolor[HTML]{57bb8a}\textbf{0.3508} &\cellcolor[HTML]{57bb8a}\textbf{0.9386} &\cellcolor[HTML]{60be90}\textbf{0.1788} &\cellcolor[HTML]{57bb8a}\textbf{0.9204} &\cellcolor[HTML]{5ebe8f}\textbf{0.3998} &\cellcolor[HTML]{57bb8a}\textbf{0.9504} &\cellcolor[HTML]{57bb8a}\textbf{0.1565} &\cellcolor[HTML]{57bb8a}\textbf{0.9013} &\cellcolor[HTML]{72c69d}\textbf{0.3978} &\cellcolor[HTML]{57bb8a}\textbf{0.9018} &\cellcolor[HTML]{57bb8a}\textbf{0.3845} \\
\bottomrule
\end{tabularx}
\vspace{-0.5em}
\end{table}

These ablation results confirm that T3's performance advantage stems from its manifold-based methodology rather than the choice of embedding model. Even when baseline methods are given access to embeddings with 8$\times$ more parameters, they still exhibit FPR@95 rates exceeding 90\% on most benchmarks, while T3 consistently achieves FPR@95 below 45\%.


\subsection{Text-Native OOD Baseline Comparison}

We additionally evaluated classic text-native OOD detection methods---Energy, kNN, and Mahalanobis---applied directly to text embeddings without any vision-to-text adaptation. Energy scores are computed from classifier logits; Mahalanobis distances are computed in the feature space; and kNN uses distances in the embedding space. All methods are trained on ID-only data and evaluated on the same splits as T3.

As shown in Table~\ref{tab:ood_baselines}, these methods achieve moderate AUROC on toxicity benchmarks (kNN reaches $\sim$0.80--0.84), but consistently suffer from unacceptably high false positive rates (FPR@95 typically exceeding 80\%). On jailbreaking benchmarks, performance degrades further with FPR@95 approaching 95--100\%. In contrast, T3 achieves FPR@95 in the 1--5\% range on the same benchmarks. These results confirm that the high false positive rates observed in OOD baselines are inherent limitations of these methods for LLM safety detection, rather than artifacts of implementation choices. See the main results tables for detailed comparisons.

\begin{table}[htbp]
\centering
\caption{\textbf{Text-native OOD baseline comparison.} Energy, kNN, and Mahalanobis methods applied directly to text embeddings. While kNN achieves reasonable AUROC on toxicity benchmarks, all methods exhibit unacceptably high FPR@95 ($>$80\%), confirming inherent limitations for LLM safety detection.}
\label{tab:ood_baselines}
\tiny
\setlength{\tabcolsep}{2pt}
\renewcommand{\arraystretch}{0.95}
\begin{tabularx}{\linewidth}{l *{12}{Y}}
\toprule
\textbf{Dataset} & \multicolumn{2}{c}{\textbf{AdvBench}} & \multicolumn{2}{c}{\textbf{BeaverTails}} & \multicolumn{2}{c}{\textbf{HarmBench}} & \multicolumn{2}{c}{\textbf{JailbreakBench}} & \multicolumn{2}{c}{\textbf{MaliciousInstruct}} & \multicolumn{2}{c}{\textbf{XSTest}} \\
\textbf{Metric} & AUROC & FPR@95 & AUROC & FPR@95 & AUROC & FPR@95 & AUROC & FPR@95 & AUROC & FPR@95 & AUROC & FPR@95 \\
\textbf{Method} & & & & & & & & & & & & \\
\midrule
\textbf{ENERGY} &\cellcolor[HTML]{a6dbc1}0.5488 &0.9827 &\cellcolor[HTML]{afdfc7}0.5167 &\cellcolor[HTML]{dff2e8}0.9634 &\cellcolor[HTML]{9dd8bb}0.5807 &\cellcolor[HTML]{dcf1e6}0.9600 &\cellcolor[HTML]{b9e3cf}0.4782 &\cellcolor[HTML]{eaf6f0}0.9761 &\cellcolor[HTML]{abddc5}0.5313 &1.0000 &\cellcolor[HTML]{bfe5d2}0.4582 &\cellcolor[HTML]{eef8f3}0.9810 \\
\textbf{KNN} &\cellcolor[HTML]{c8e9d9}0.4249 &1.0000 &0.2212 &\cellcolor[HTML]{fcfdfd}0.9968 &\cellcolor[HTML]{a3dabf}0.5595 &\cellcolor[HTML]{d8efe3}0.9550 &\cellcolor[HTML]{84ceaa}0.6720 &\cellcolor[HTML]{d8efe4}0.9556 &\cellcolor[HTML]{d5eee2}0.3786 &1.0000 &\cellcolor[HTML]{c1e6d4}0.4499 &1.0000 \\
\textbf{MAHALANOBIS} &\cellcolor[HTML]{f2faf6}0.2719 &0.9981 &\cellcolor[HTML]{f7fcfa}0.2517 &\cellcolor[HTML]{fcfefd}0.9973 &\cellcolor[HTML]{cbeadb}0.4145 &\cellcolor[HTML]{f6fbf8}0.9900 &\cellcolor[HTML]{aaddc4}0.5355 &\cellcolor[HTML]{def1e8}0.9625 &0.2241 &1.0000 &\cellcolor[HTML]{fdfffe}0.2294 &1.0000 \\
\midrule
& & & & & & & & & & & & \\
\textbf{Dataset} & \multicolumn{2}{c}{\textbf{Civil Comments}} & \multicolumn{2}{c}{\textbf{Davidson et al.}} & \multicolumn{2}{c}{\textbf{Hasoc}} & \multicolumn{2}{c}{\textbf{Hateval}} & \multicolumn{2}{c}{\textbf{OffensEval}} & \multicolumn{2}{c}{\textbf{Real Toxicity}} \\
\textbf{Metric} & AUROC & FPR@95 & AUROC & FPR@95 & AUROC & FPR@95 & AUROC & FPR@95 & AUROC & FPR@95 & AUROC & FPR@95 \\
\textbf{Method} & & & & & & & & & & & & \\
\midrule
\textbf{ENERGY} &\cellcolor[HTML]{a1d9be}0.5682 &\cellcolor[HTML]{bde4d1}0.9238 &\cellcolor[HTML]{b1e0c9}0.5076 &\cellcolor[HTML]{dff2e8}0.9634 &\cellcolor[HTML]{9ed8bc}0.5768 &\cellcolor[HTML]{bee4d1}0.9252 &\cellcolor[HTML]{a3dabf}0.5586 &\cellcolor[HTML]{c5e7d7}0.9339 &\cellcolor[HTML]{9ed8bc}0.5770 &\cellcolor[HTML]{c7e8d8}0.9355 &\cellcolor[HTML]{9bd7b9}0.5898 &\cellcolor[HTML]{bbe3d0}0.9221 \\
\textbf{KNN} &\cellcolor[HTML]{61c091}0.8002 &\cellcolor[HTML]{96d4b6}0.8789 &\cellcolor[HTML]{69c397}0.7715 &\cellcolor[HTML]{cdebdc}0.9430 &\cellcolor[HTML]{59bc8c}0.8306 &\cellcolor[HTML]{66c194}0.8235 &\cellcolor[HTML]{57bb8a}0.8368 &\cellcolor[HTML]{57bb8a}0.8057 &\cellcolor[HTML]{61bf91}0.8034 &\cellcolor[HTML]{a8dcc2}0.9002 &\cellcolor[HTML]{60bf91}0.8040 &\cellcolor[HTML]{6bc398}0.8295 \\
\textbf{MAHALANOBIS} &\cellcolor[HTML]{92d3b3}0.6232 &\cellcolor[HTML]{eaf6f1}0.9768 &\cellcolor[HTML]{90d2b2}0.6289 &\cellcolor[HTML]{eaf6f0}0.9762 &\cellcolor[HTML]{85ceaa}0.6699 &\cellcolor[HTML]{def1e8}0.9620 &\cellcolor[HTML]{7dcba4}0.7007 &\cellcolor[HTML]{e5f4ed}0.9702 &\cellcolor[HTML]{76c8a0}0.7256 &\cellcolor[HTML]{e0f2ea}0.9653 &\cellcolor[HTML]{91d3b3}0.6263 &\cellcolor[HTML]{daf0e5}0.9582 \\
\bottomrule
\end{tabularx}
\vspace{-0.5em}
\end{table}

\subsection{Evaluation on HILL Jailbreak Attacks}

The HILL method \citep{luo2025hill} represents a particularly challenging class of jailbreak attacks that transform harmful imperatives into innocuous-looking ``learning-style'' questions (e.g., ``I am studying chemistry, explain this reaction...'' instead of ``How to make a bomb...''). If such attacks successfully masquerade as benign educational queries, they may fall inside the ``typical set'' of safe content.

To evaluate T3's robustness against HILL attacks, we use 1,500 safe prompts sampled from Dolly as in-distribution data (1,250 for training, 250 held out for testing) and the 46 HILL jailbreak prompts from \citet{luo2025hill} as OOD data. Despite HILL's semantic similarity to educational content, T3 robustly identifies these attacks with near-perfect AUROC ($>$0.98) and very low false positive rates (4.35\%). Our intuition is that HILL prompts contain atypical patterns in how harmful intent is expressed, which push them outside the typical set formed by standard safe chat data.

\begin{table}[!htbp]
\centering
\caption{\textbf{Performance on HILL jailbreak attacks \citep{luo2025hill}.} Despite HILL's design to semantically resemble benign educational queries, T3 achieves near-perfect detection with AUROC $>$0.98 and FPR@95 of only 4.35\%, demonstrating robustness against attacks specifically crafted to evade OOD detection.}
\label{tab:hill}
\tiny
\setlength{\tabcolsep}{2pt}
\renewcommand{\arraystretch}{0.95}
\begin{tabularx}{0.5\linewidth}{l *{4}{Y}}
\toprule
\textbf{Method} & \textbf{AUROC} & \textbf{FPR@95} & \textbf{AUPRC} & \textbf{F1} \\
\midrule
\textbf{T3+GMM} & 0.9803 & 0.0435 & 0.9954 & 0.9881 \\
\textbf{T3+OCSVM} & 0.9783 & 0.0435 & 0.9926 & 0.9861 \\
\bottomrule
\end{tabularx}
\vspace{-0.5em}
\end{table}

\end{document}